\definecolor{Gray}{gray}{0.9}
\begin{document}


\title{Score-based Generative Modeling for Conditional
\\Independence Testing}

\author{Yixin Ren}
\email{yxren21@m.fudan.edu.cn}
\affiliation{%
\institution{Fudan University}
\state{Shanghai}
\country{China}}
\authornotemark[1]

\author{Chenghou Jin}
\email{jinch24@m.fudan.edu.cn}
\affiliation{%
\institution{Fudan University}
\state{Shanghai}
\country{China}}
\authornote{Co-first authors.}

\author{Yewei Xia}
\email{ywxia23@.m.fudan.edu.cn}
\affiliation{%
\institution{Fudan University}
\state{Shanghai}
\country{China}}

\author{Li Ke}
\email{keli.kl@alibaba-inc.com}
\affiliation{%
\institution{Alibaba Group}
\state{Hangzhou}
\country{China}}

\author{Longtao Huang}
\email{kaiyang.hlt@alibaba-inc.com}
\affiliation{%
\institution{Alibaba Group}
\state{Hangzhou}
\country{China}}

\author{Hui Xue}
\email{hui.xueh@alibaba-inc.com}
\affiliation{%
\institution{Alibaba Group}
\state{Hangzhou}
\country{China}}

\author{Hao Zhang}
\email{h.zhang10@siat.ac.cn}
\affiliation{%
    \institution{SIAT, Chinese Academy of Sciences}
\state{Shenzhen}
\country{China}}

\author{Jihong Guan}
\email{jhguan@tongji.edu.cn}
\affiliation{%
\institution{Tongji University}
\state{Shanghai}
\country{China}}
\authornotemark[2]

\author{Shuigeng Zhou}
\email{sgzhou@fudan.edu.cn}
\affiliation{%
\institution{Fudan University}
\state{Shanghai}
\country{China}}
\authornote{Corresponding authors.}

\renewcommand{\shortauthors}{Yixin Ren et al.}
\begin{abstract}
Determining conditional independence (CI) relationships between random variables is a fundamental yet challenging task in machine learning and statistics, especially in high-dimensional settings. Existing generative model-based CI testing methods, such as those utilizing generative adversarial networks (GANs), often struggle with undesirable modeling of conditional distributions and training instability, resulting in subpar performance. To address these issues, we propose a novel CI testing method via score-based generative modeling, which achieves precise Type I error control and strong testing power. Concretely, we first employ a sliced conditional score matching scheme to accurately estimate conditional score and use Langevin dynamics conditional sampling to generate null hypothesis samples, ensuring precise Type I error control. Then, we incorporate a goodness-of-fit stage into the method to verify generated samples and enhance interpretability in practice. We theoretically establish the error bound of conditional distributions modeled by score-based generative models and prove the validity of our CI tests. Extensive experiments on both synthetic and real-world datasets show that our method significantly outperforms existing state-of-the-art methods, providing a promising way to revitalize generative model-based CI testing. 
\end{abstract}

\begin{CCSXML}
<ccs2012>
   <concept>
       <concept_id>10010147.10010178.10010187.10010192</concept_id>
       <concept_desc>Computing methodologies~Causal reasoning and diagnostics</concept_desc>
       <concept_significance>500</concept_significance>
       </concept>
 </ccs2012>
\end{CCSXML}

\ccsdesc[500]{Computing methodologies~Causal reasoning and diagnostics}


\keywords{Conditional independence testing, Score-based generative model, Score matching, Langevin dynamics sampling}


\maketitle
\newtheorem{assumption}{Assumption}
\section{Introduction}
\label{intro}
Given three random variables $X$, $Y$ and $Z$, conditional independence (CI) testing aims to determine whether the null hypothesis $\mathbb{P}_{XY|Z} = \mathbb{P}_{X|Z}\mathbb{P}_{Y|Z}$ can be rejected, assuming the existence of conditional distribution functions. Determining CI relationships has wide applications in various areas including machine learning~\citep{pogodin2022efficient}, causal inference~\citep{Spirtes1993CausationPA} and variable selection~\citep{candes2018panning}. 

Generally, CI testing is a challenging task~\citep{shah2020hardness}. The ``curse of dimensionality" makes the task even more difficult~\citep{scetbon2022asymptotic}. Many methods~\citep{bellot2019conditional,polo2023conditional,scetbon2022asymptotic,li2023nearest} have been proposed to address this challenge. For more detailed information, please refer to the ``Related Work'' section. Among them, conditional randomization-based methods are the de facto choices to address this problem. The conditional randomization tests (CRT) framework~\citep{candes2018panning} reformulates CI testing as a two-sample testing problem~\citep{gretton2012kernel}. 
This transformation provides flexibility, enabling the integration of various test statistics~\citep{gretton2009fast}. Despite its advantages, controlling Type I error in the CRT framework remains a bottleneck~\citep{berrett2020conditional}. This difficulty comes from the need for an accurate estimate of the conditional distribution $\mathbb{P}_{X|Z}$, which becomes increasingly challenging in high-dimensional settings. Generative model-based methods~\citep{bellot2019conditional, shi2021double} attempt to tackle this issue by leveraging neural networks, which excel in scalability and can estimate heterogeneous conditional distributions. However, many generative models, such as Generative Adversarial Networks (GANs), have obvious limitations like mode collapse and instability in training due to adversarial optimization~\citep{arjovsky2017wasserstein}. As a result, these methods fail to achieve reliable CI testing results in practice, as demonstrated in our experiments. 

Recently, score-based generative models, also known as diffusion models, have gained prominence in the field of deep generative modeling~\citep{song2019generative, ho2020denoising}. Unlike GANs, score-based generative models provide stable training and are grounded on solid theoretical foundations, enabling high-quality synthesis. Motivated by their merits, in this paper we explore score-based generative modeling for CI testing, aiming to overcome the limitations of existing generative model-based methods and thus achieve reliable Type I error control and outstanding testing power.

To this end, we propose a novel CI testing method leveraging score-based generative modeling to generate null hypothesis samples. We first introduce a sliced conditional score matching scheme for effective modeling of conditional scores in high-dimensional settings. This approach is then integrated with Langevin dynamics conditional sampling to generate null hypothesis samples, which allow the calculation of the null hypothesis distribution of the test statistic. Then, by comparing with the test statistic computed on the observed data, we can make hypothesis decision on CI testing. Following that, to enhance reliability, we incorporate a goodness-of-fit stage into the method, ensuring the validity of generated samples and improving interpretability in practice. We call the proposed method \textbf{SGMCIT} --- the abbreviation of \textbf{S}core-based \textbf{G}enerative \textbf{M}odeling for \textbf{CI} \textbf{T}esting. 

The novelty of our work is three-fold: 1) Model novelty: we innovatively \textit{extend score matching and Langevin dynamics sampling techniques to the conditional case}; 2) Framework novelty: we firstly \textit{introduce a goodness-of-fit stage into the CRT framework} to ensure CI test reliability in practice; 3) Theoretical novelty: we rigorously \textit{derive an asymptotic Type I error bound for CI tests}. 

\noindent\textbf{Contributions.} In summary, our contributions are as follows:
\begin{itemize}[leftmargin=5mm]
    \item We propose a novel CI testing method via score-based generative modeling, where we design a sliced conditional score matching scheme to effectively model conditional score, which is seamlessly combined with Langevin dynamics-based conditional sampling to generate null hypothesis samples. 
   \item 
    We adopt a goodness-of-fit stage to ensure the validity of the generated samples and further improve the practical interpretability of our CI testing.
    \item We theoretically establish the error bound of conditional distributions modeled by score-based generative models and derive the asymptotic Type I error bound of CI testing, which guarantee the effectiveness of the proposed method.
    \item We conduct extensive experiments on both synthetic and real-world datasets, showing that our method achieves state-of-the-art results across diverse scenarios, and provides an effective way to revitalize generative model-based CI testing.
\end{itemize}  

\noindent \textbf{Outline.} The rest of the paper is organized as follows: Sec.~\ref{sec:related works} reviews related work. Sec.~\ref{sec:preliminaries} briefs the hypothesis testing framework for conditional independence. Sec.~\ref{section_proposedmethods} introduces our CI testing method. 
Sec.~\ref{sec:theoretical_analysis} presents theoretical results. Sec.~\ref{sec:experimental} evaluates the proposed method. Finally, Sec.~\ref{sec:conclusion} concludes the paper.

\section{Related Work}
\label{sec:related works}

\subsection{Score-Based Generative Models}
Score-based generative models (SGMs)~\citep{song2019generative, song2020score}, also called diffusion models~\citep{sohl2015deep, ho2020denoising} are a cutting-edge technique to generative modeling, offering capability of high-quality sample generation across a variety of domains~\citep{ramesh2022hierarchical, kong2020diffwave}. SGMs operate in two main phases: the first involves score estimation, and the second generates samples by sampling from the estimated score. Various score matching techniques have been proposed for score estimation, including slice-based methods~\citep{song2020sliced} and noise-perturbed methods~\citep{hyvarinen2005estimation, vincent2011connection}. While all are effective in practice, noise-perturbed methods require tuning the noise scale, introducing additional parameters. In the sampling phase, Langevin dynamics guided by the estimated score function is commonly used to generate data. Recently, the stochastic differential equations (SDEs) framework~\citep{song2020score} provides new perspectives, which have subsequently inspired later works~\citep{song2021maximum} to better stabilize training, leading to various methods~\citep{song2023consistency, song2023improved} for speeding up the sampling step of generation. 

Although score-based generative modeling has achieved impressive results in many domains, there is no work effectively employing it in CI testing task, which is done in this paper.

\subsection{Conditional Independence Testing}
Conditional independence (CI) testing has been the focus of extensive research, resulting in a wide array of methods tailored to diverse data scenarios~\citep{huber2015test, mittag2018nonparametric, li2020nonparametric, park2020measure, cai2022distribution, chalupka1804fast, renefficiently}. Roughly, existing approaches can be grouped into four main categories: \textbf{Distance-based tests}: These methods~\citep{wang2015conditional, sheng2019distance, warren2021wasserstein} estimate the conditional characteristic function to determine CI relationships. Although effective in some settings, they have limitations in finding high-dimensional or complex dependencies. \textbf{Regression-based methods}: Based on the generalized covariance measure~\citep{zhang2017feature, shah2020hardness}, these approaches can capture weak conditional dependence. When prior knowledge of the data-generative process is available, regression-based methods can be more  robust~\citep{polo2023conditional, zhang2018measuring}. However, their performance is often limited under model misspecification. \textbf{Kernel-based methods}: These methods~\citep{zhang2012kernel, huang2022kernel, strobl2019approximate} leverage characterizations of conditional independence~\citep{daudin1980partial} to construct kernel-based statistics. Permutation tests~\citep{doran2014permutation} are commonly used to approximate null distributions. Recent advances~\citep{scetbon2022asymptotic} include introducing an asymptotic null statistic that is computationally efficient and follows a standard normal distribution under the null hypothesis. \textbf{Conditional randomization-based tests (CRT)}: As a type of prominent approaches~\citep{candes2018panning, shi2021azadkia}, they transform CI testing into a two-sample testing problem~\citep{gretton2012kernel, sen2017model}, offering flexibility with various statistics~\citep{ren2023multi, azadkia2021simple, lopez2013randomized}. Conditional randomization testing relies on the accurate modeling of the conditional distribution $\mathbb{P}_{X|Z}$ under the Model-X framework, which is challenging in high-dimensional settings and may lead to uncontrolled Type I errors~\citep{javanmard2021pearson}. KNN-based methods~\citep{runge2018conditional, li2023k} have been proposed as a solution, leveraging the local structure of data to model $\mathbb{P}_{X|Z}$ more effectively. Additionally, generative model-based approaches~\citep{bellot2019conditional, shi2021double} have emerged as another alternative. However, recent findings~\citep{li2023nearest} show that GAN-based methods often struggle to model complex conditional distributions effectively, resulting in subpar performance compared to other state-of-the-art techniques. 

In this work, we introduce a novel score-based generative model to address the challenges in modeling conditional distributions and mitigating training instability. Our approach ensures precise Type I error control and strong testing power, achieving state-of-the-art results across diverse scenarios.

\section{Preliminaries}
\label{sec:preliminaries}
We begin by introducing the notations and recalling the hypothesis testing framework for CI testing. Let $\mathcal{X}\times \mathcal{Y}\times \mathcal{Z}$ be a separable metric space, typically $\mathbb{R}^{d_x}\times \mathbb{R}^{d_y}\times \mathbb{R}^{d_z}$, we denote  $\mathbb{P}_{XYZ}$ as the Borel probability measure over the random variables $X,Y,Z$ defined on $\mathcal{X}\times \mathcal{Y}\times \mathcal{Z}$, and $\mathbb{P}_{X}, \mathbb{P}_{Y}, \mathbb{P}_{Z}$ are the respective marginal distributions. Let $\mathbb{P}_{X|Z}$ be the conditional distribution, our goal is to determine whether $X\perp \!\!\! \perp Y|Z $ holds if and only if $\mathbb{P}_{XYZ} = \mathbb{P}_{X|Z}\mathbb{P}_{Y|Z}\mathbb{P}_{Z}$. Given $n$ independent and identically distributed ($i.i.d$) samples $\mathcal{D}:=\{(x_i, y_i, z_i)\}^n_{i=1}$ with distribution $\mathbb{P}_{XYZ}$, the hypothesis testing problem is formulated as follows:  
\begin{equation}
\mathcal{H}_0:X\perp \!\!\! \perp Y|Z ~~~~~~~~\text{ 
  versus   }~~~~~~~~ \mathcal{H}_1:X\not\!\perp \!\!\! \perp Y|Z.
\end{equation}
The hypothesis testing for CI is performed in the following steps. First, state the statistic $\rho:\mathcal{X}\times \mathcal{Y}\times \mathcal{Z}\mapsto \mathbb{R}$ and calculate its observed value with $\mathcal{D}$. Then, select a significance level $\alpha$ (typically taking  $0.05$ as value). After that, obtain the $p$-value, which is the probability that the sampling of $\rho$ under null hypothesis $\mathcal{H}_0$ is as extreme as the observed value. Finally, the null hypothesis $\mathcal{H}_0$ is rejected if the $p$-value is not greater than $\alpha$.

Two types of errors may occur during the test. Type I error means the false rejection of $\mathcal{H}_0$, and Type II error indicates when $\mathcal{H}_0$ is wrong but not rejected. A good CI test requires that Type I error rate is upper bounded by $\alpha$ meanwhile Type II error rate is minimized~\citep{zhang2012kernel}. Then, the power of the test (also called test power) is defined by $1$ - Type II error rate.

\begin{figure*}[h]
    \centering
    \includegraphics[scale=0.79]{./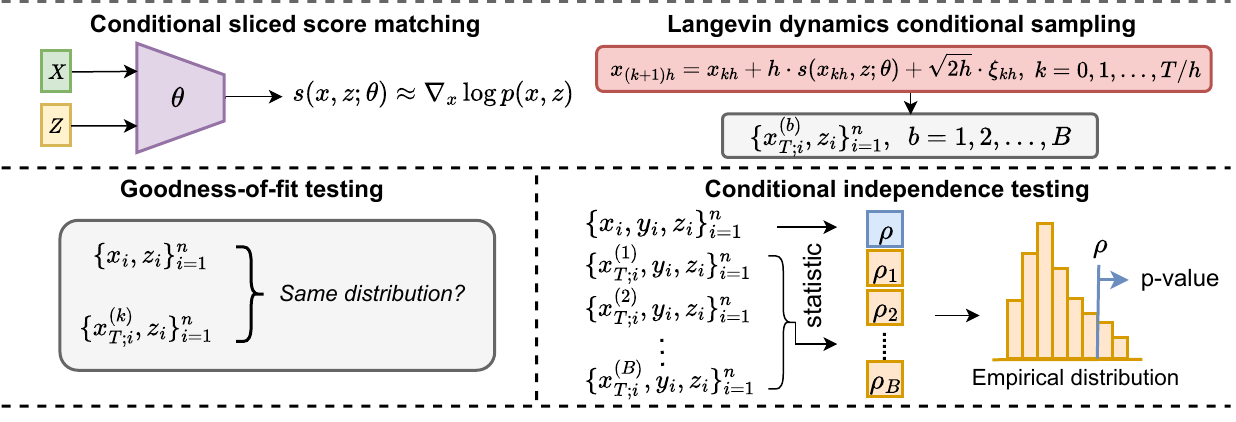}
    \caption{The framework of conditional independence testing with score-based generative modeling.}
    \label{fig:socit}
\end{figure*}

\section{Methodology}
\label{section_proposedmethods}
The framework of our CI testing method SGMCIT (Score-based Generation Modeling for Conditional Independence Testing), is illustrated in Fig.~\ref{fig:socit}. It consists of three major stages. Firstly, a score-based model generates samples to simulate data under the null hypothesis $\mathcal{H}_0$. Secondly, a goodness-of-fit test evaluates the quality of the generated samples, ensuring the reliability of subsequent CI results. Thirdly, CI testing is performed by comparing the test statistic of observed samples with those calculated from the generated samples. 
Due to space limit, some derivation details in the following sections are moved to the Appendix.

\subsection{Score-based Generative Modeling}
\label{subsection_generated_model}
The score-based generative model consists of two stages: score estimation and sampling. For the conditional score estimation, the model is trained with the following objective. 

\noindent \textbf{Conditional sliced score matching objective.} The goal of score estimation is to train a parameterized model $s(x, z;\theta)$ to estimate the real score $\nabla_x \log p(x|z)$ of the data distribution. For any value of $z$, the objective can be written as
\begin{equation*}
\begin{split}
&\mathcal{L}_{\theta}(z) = \frac{1}{2}\mathbb{E}_{x\sim p(x|z)}\left\{\Vert s(x,z;\theta)-\nabla_x \log p(x|z)\Vert_2^2\right\} \\&= \mathbb{E}_{x\sim p(x|z)} \left\{\text{Tr}(\nabla_x s(x,z;\theta))+\frac{1}{2}\Vert s(x,z;\theta)\Vert_2^2\right\}+C(z),    
\end{split}
\end{equation*}
where $\text{Tr}$ is the trace operator and $C(z)$ is a constant that does not depend on $\theta$. However, directly computing $\text{Tr}(\nabla_x s(x,z;\theta))$ is computationally expensive~\citep{song2020sliced}. To address this, techniques such as sliced score matching~\citep{song2020sliced} are employed, which can work well even in the settings where $d_x$ is large. Here, we adopt this technique to the above objective $\mathcal{L}_{\theta}(z)$. We first project the terms $s(x,z;\theta)$ and $\nabla_x \log p(x|z)$ onto some random projection direction $v$ then compare their average differences along that random direction. Formally, let $v\sim p_v$ be the random variable that is independent of $x,z$. Then, we obtain the objective $\mathcal{L}_{\theta}(z;p_v)$ as follows: 
\begin{equation}
\label{slicescore0}
\begin{split}
\mathcal{L}_{\theta}(z;p_v)= \frac{1}{2}\mathbb{E}_{\substack{v\sim p_v\\x\sim p(x|z)}}\left\{\bigl[ v^Ts(x,z;\theta)-v^T\nabla_x \log p(x|z)\bigl]_2^2\right\}. 
\end{split}    
\end{equation}
Under the following three regularity conditions, the dependence of $\mathcal{L}_{\theta}(z;p_v)$ on $\nabla_x \log p(x|z)$ in Eq.~(\ref{slicescore0}) can be eliminated:
\begin{assumption}[Regularity of conditional score functions]
For any $z$, $s(x,z;\theta)$ and $\nabla_x\log p(x|z)$ are both differentiable w.r.t. $x$. 
Additionally, we assume that they satisfy $\mathbb{E}_{x\sim p(x|z)}[\|s(x,z;\theta)\|_2^2] < \infty$ and $\mathbb{E}_{x\sim p(x|z)}[\|\nabla_x\log p(x|z)\|_2^2] < \infty$.
\end{assumption}
\begin{assumption}[Regularity of projection vectors]
The projection vectors satisfy 
$\mathbb{E}_{v\sim p_v}[\|v\|_2^2] < \infty$, and $\mathbb{E}_{v\sim p_v}[vv^T] \succ 0$.
\end{assumption}
\begin{assumption}[Boundary condition]
Given any $z$, for all $\theta \in \Theta$, the score satisfies $\lim_{\|x\| \to \infty} s(x,z;\theta) p(x|z) = 0.$
\end{assumption}
Assumptions 1 and 3 are common assumptions~\citep{song2020sliced}. Also, many distributions for projection vectors satisfy Assumption 2, and in practice, we use $p_v \sim \mathcal{N}(0, I_{d_x})$. Further derivations, using integration by parts as detailed in Appendix~\ref{a1}, yield
\begin{equation}
\begin{split}
\mathbb{E}_{v\sim p_v}\mathbb{E}_{x\sim p(x|z)} \left\{v^T\nabla_x s(x,z;\theta)v+\frac{1}{2}[v^Ts(x,z;\theta)]^2\right\},    
\end{split}    
\end{equation}
which differs from Eq.~(\ref{slicescore0}) only by a constant independent of $\theta$ (see Lemma~\ref{app_lemma_of_loss_nomargin} for more details). To train the model across all values of $z$, the objective is marginalized over $z$, leading to
\begin{equation}
\label{slicescoremargin}
\begin{split}
\mathcal{J}_{\theta} =\mathbb{E}_{\substack{v\sim p_v\\(x,z)\sim p(x,z)}} \left\{v^T\nabla_x s(x,z;\theta)v+\frac{1}{2}[v^Ts(x,z;\theta)]^2\right\}.    
\end{split}    
\end{equation}
In Sec.~\ref{sec:theoretical_analysis}, we will show that $\mathcal{J}_{\theta}$ is valid, indicating that it effectively guides the score model toward the correct solution. In practice, an unbiased estimation objective of $\mathcal{J}_{\theta}$ can be obtained using finite samples from the dataset $\mathcal{D}$, i.e.,
\begin{equation}
\label{lossscore}
\begin{split}
\widehat{\mathcal{J}}_{\theta}=\frac{1}{nm}\sum_{i,j}^{n,m} \Bigl\{v_{ij}^T&\nabla_{x_i} s(x_i,z_i;\theta) v_{ij}^T+\frac{1}{2}[v_{ij}^Ts(x_i,z_i;\theta)]^2\Bigl\},
\end{split}    
\end{equation}
where $\{v_{ij}\}_{1\leq j\leq m}$ represent the $m$ independent projection vectors drawn from $p_v$ for sample $x_i$. The parameter $m$ is selected to trade off variance and computational cost. In our experiments, we find that $m=1$ is already a good choice. By training with this objective function, we obtain the score function of the posterior distribution. Next, we proceed to the sampling stage of generation.

\noindent \textbf{Langevin dynamics conditional sampling.} 
With the estimated score above, we employ Langevin dynamics to generate samples. Let the step size be $h$, the total time be $T$, then for a fixed $z$, the sampling process iteratively updates $x_{kh}$ as follows: 
\begin{equation}
\label{langevin_dynamics_conditional_sampling}
x_{(k+1)h} = x_{kh} + h \cdot s(x_{kh},z; \theta) + \sqrt{2h}\cdot \xi_{kh}, 
\end{equation}
where $\xi_{kh}\sim\mathcal{N}(0,I_{d_x})$ and $x_0\sim \mathcal{N}(0,I_{d_x})$ are for initialization. Intuitively, for a given $z$, the generation process is guided by the conditional score $s(\cdot ,z; \theta)$. Under certain regularity conditions, as $h \to 0$ and $T \to \infty$, $x_T$ converges to a sample from the theoretical distribution~\citep{roberts1996exponential}. A more precise description of this is given in Section~\ref{sec:theoretical_analysis} ``Theoretical Results''. In practice, the error is negligible when $h$ is sufficiently small and $T$ is sufficiently large. Since the procedure does not use any sample of $y$, the generated samples are independent of $y$ given $z$. Consequently, these samples can simulate the samples under null hypothesis $\mathcal{H}_0$. To model the null hypothesis distribution, typically the bootstrap method is employed. Specifically, we generate $B$ sets of pseudo samples, then for the $b$-th set $\{x_{T;i}^{(b)}, z_i\}_{i=1}^n$, the samples are produced iteratively as follows:
\begin{equation}
\label{sampling_B}
x_{(k+1)h;i}^{(b)} = x_{kh;i}^{(b)} +h \cdot s(x_{kh;i}^{(b)},z_i; \theta) + \sqrt{2h}\cdot\xi_{kh}.      
\end{equation}
These pseudo-samples are then used to estimate the null distribution of the statistic. To ensure the reliability of the results, an additional verification step is included to assess the quality of the generated samples, as described in the next subsection.

\noindent \textbf{Further Discussion.} In this paper, we use a neural network as the default score model.  Certainly, other models such as deep kernel exponential families~\citep{wenliang2019learning}, could also be used to introduce smoothing prior. Additionally, we adopt the slicing technique for score matching. While noise-perturbed methods offer an alternative approach, their performance is highly sensitive to the noise scale, and due to the limitation of numerical precision, these methods typically yield results corresponding to data with small noise perturbations. This can, in turn, affect the outcome of CI tests. For sampling, we employ a straightforward implementation. Certainly, more sophisticated, data-adaptive sampling strategies could further improve the performance. Overall, our conditional score-based generative model provides a practical and flexible approach. Future work can focus on refining this model for real-world applications, tailoring score matching methods to specific data distributions, and enhancing the sampling strategy of generation.

\subsection{Goodness-of-Fit Testing}
\label{subsection_goodnessoffit}
Here we introduce the goodness-of-fit (GOF) testing  procedure to evaluate whether the generated samples accurately represent the null hypothesis sample distribution. As our generative model consists of two parts, the test can be done in two ways: (1) verifying the fit of the score model to the observed samples, or (2)  comparing the fit of the generated samples to the observed data.

Existing goodness-of-fit tests~\citep{chwialkowski2016kernel, jitkrittum2017linear} typically assess whether a distribution aligns with a hypothesized model, often using kernel-based measures or explicit score functions. Some tests, such as those in~\cite{jitkrittum2020testing}, are designed specifically for validating conditional distribution properties. However, these methods usually assume a theoretically fixed score model, making them overly sensitive to small estimation errors in cases like ours, where the score model is approximated. Therefore, evaluating the fit of the generated samples to the observed data is more appropriate for our GOF test task. This approach also aligns directly with our goal of using the generated samples for statistical calculation. Two-sample tests~\citep{gretton2009fast}, which use well-established metrics, are particularly effective for this purpose even in high-dimensional cases. For low-dimensional cases, visualization can further improve the interpretability of results.

\begin{figure}[h]
    \centering
    \includegraphics[scale=0.48]{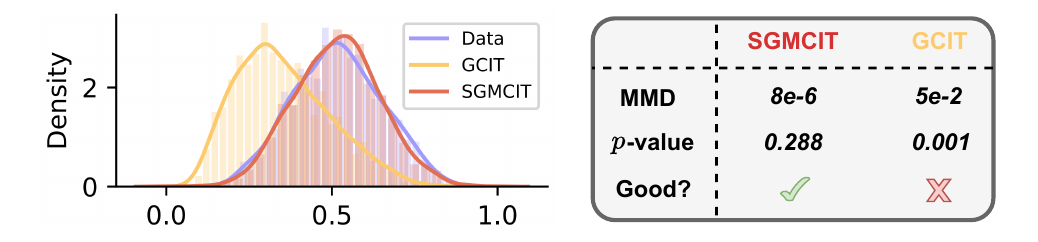}
    \caption{Left: The visualization results for the marginal distribution of $X$ under the chain setting. Right: The corresponding goodness-of-fit testing results using maximum mean discrepancy (MMD).}
    \label{fig:vischain}
\end{figure}

For convenience of understanding, we present an example of experimental results under the chain setting. Fig.~\ref{fig:vischain} shows a comparison of the marginal distribution of $X$ between the generated samples and the observed data, and the corresponding two-sample test results using the kernel-based maximum mean discrepancy (MMD) metric. The GAN-based method, GCIT, generates samples that deviate noticeably from the true data distribution, as indicated by the higher MMD value and larger visual discrepancy. In contrast, our method closely approximates the observed distribution, as evidenced by the smaller MMD value and higher $p$-value, indicating that our method passes the goodness-of-fit test, whereas GCIT fails. These results underscore that our method produces more reliable outcome. In CI test experiments, we further demonstrate that poor modeling of the data distribution will lead to uncontrollable Type I errors, rendering the tests invalid. By accurately modeling the distribution, our approach ensures the validity of CI testing.

\subsection{Conditional Independence Testing}
\label{subsection_conditionalindependencetesting}
Recall that in Sec.~\ref{subsection_generated_model}, we describe the process of generating samples to model the null hypothesis distribution. Specifically, we obtain $B$ sets of pseudo-samples $\{x_{T;i}^{(b)}, z_i\}_{i=1}^n, b\in [B]$~\footnote{The symbol is defined as $[B]:=1,2,...,B$.} through the generative model. These samples maintain the dependence structure of $X$ and $Z$, but interrupt any dependence between $X$ and $Y$. 
The samples are then used to model the null hypothesis distribution by constructing a sequence $\{x_{T;i}^{(b)}, y_i, z_i\}_{i=1}^n, b\in [B]$. Let the corresponding sequence of triples be denoted as $(X^{(b)},Y,Z)_{b=1}^B$. For the observed triples $(X,Y,Z)$, we simplify the notation by referring to it as $(X^{(0)},Y,Z)$, resulting in a combined sequence of triples $(X^{(b)},Y,Z)_{b=0}^B$. The following proposition indicates that this sequence is exchangeable. 
\begin{proposition}[Exchangeablility] Let ${\buildrel d \over =}$ denotes equality in distribution. Then under $\mathcal{H}_0$, and further assume that for all $b\in [B]$, $(X^{(b)},Y,Z)~{\buildrel d \over =}~(X,Y,Z)$, the resulting random sequence of generated triples $(X^{(b)},Y,Z)_{b=0}^B$ is exchangeable. 
\end{proposition}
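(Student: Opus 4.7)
The plan is to reduce exchangeability of the sequence $(X^{(b)},Y,Z)_{b=0}^{B}$ to showing that, conditional on $(Y,Z)$, the random vectors $X^{(0)},X^{(1)},\dots,X^{(B)}$ are i.i.d. Since $Y$ and $Z$ are common to every triple, any permutation $\pi$ of the indices $\{0,1,\dots,B\}$ only rearranges the $X$-coordinates; thus the joint distribution of the whole sequence is invariant under $\pi$ if and only if the conditional joint distribution of $(X^{(0)},\dots,X^{(B)})$ given $(Y,Z)$ is invariant under $\pi$. This is implied by conditional exchangeability, which in turn follows from conditional i.i.d.-ness.

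To carry out the argument, I would first handle the generated coordinates $b=1,\dots,B$: by construction in Eq.~(\ref{sampling_B}) each $x_{T;i}^{(b)}$ is produced by an independent Langevin run driven by fresh Gaussian noise and uses only $z_i$ (never $y_i$ or any other $X^{(b')}$), so conditional on $Z$ the family $\{X^{(b)}\}_{b=1}^{B}$ is i.i.d.\ and independent of $Y$. By the proposition's hypothesis, $(X^{(b)},Y,Z)\stackrel{d}{=}(X,Y,Z)$, and combined with $\mathcal{H}_{0}$ this pins down the common conditional law to be $\mathbb{P}_{X\mid Z}$. Second, I would handle the observed coordinate $b=0$: under $\mathcal{H}_{0}$, $X\perp\!\!\!\perp Y\mid Z$, hence $\mathbb{P}_{X\mid Y,Z}=\mathbb{P}_{X\mid Z}$, so $X^{(0)}$ has the same conditional distribution given $(Y,Z)$ as each $X^{(b)}$ for $b\ge 1$.

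Putting the two halves together, all $B+1$ variables share one and the same conditional distribution $\mathbb{P}_{X\mid Z}$ given $(Y,Z)$, and they are mutually conditionally independent (the generated ones by the independent Langevin noises, and $X^{(0)}$ because the Langevin randomness is drawn independently of the observed data). Conditional i.i.d.-ness implies conditional exchangeability, which, after integrating out $(Y,Z)$, gives the required unconditional exchangeability of $(X^{(b)},Y,Z)_{b=0}^{B}$.

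The main obstacle is conceptual rather than computational: one must be careful that the role of $\mathcal{H}_{0}$ enters precisely at the step where we equate the conditional law of $X^{(0)}$ given $(Y,Z)$ with that given $Z$ alone; without CI this equality fails and $X^{(0)}$ is not interchangeable with the surrogates. The extra assumption $(X^{(b)},Y,Z)\stackrel{d}{=}(X,Y,Z)$ does the complementary work, guaranteeing that the \emph{approximate} sampler has produced the exact target conditional $\mathbb{P}_{X\mid Z}$, so no residual discrepancy remains between $b=0$ and $b\ge 1$. Once these two points are made explicit, the rest is a direct check that permutations of $(X^{(0)},\dots,X^{(B)})$ leave the joint density invariant.
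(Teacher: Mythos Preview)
Your proposal is correct and shares the paper's core idea—reduce exchangeability to a conditional i.i.d.\ statement—but you carry it out more carefully and with a slightly different conditioning. The paper invokes the de Finetti-style representation theorem, conditioning on $Z$ together with all the Langevin noise variables $\xi_{kh}$, and argues that the generated $X^{(b)}$ are deterministic functions of i.i.d.\ initial seeds $x_0^{(b)}$; its written proof, however, concludes with exchangeability of $(X^{(b)},Y,Z)_{b=1}^{B}$ and never explicitly folds the observed triple $b=0$ into the sequence. You instead condition on $(Y,Z)$ alone (implicitly assuming fresh noise per replicate), and then spell out precisely why $\mathcal{H}_0$ is needed—to make $\mathbb{P}_{X\mid Y,Z}=\mathbb{P}_{X\mid Z}$ for $b=0$—and why the hypothesis $(X^{(b)},Y,Z)\stackrel{d}{=}(X,Y,Z)$ is needed—to ensure the sampler's conditional law matches $\mathbb{P}_{X\mid Z}$ for $b\ge 1$. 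Your route is therefore more explicit about where each assumption enters and actually covers the full index set $b=0,\dots,B$, at the mild cost of relying on independent noise across replicates rather than the paper's shared-noise formulation.
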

\begin{proof}
A sequence of random variables is  exchangeable if its distribution is invariant under variable permutations. By the ``representation theorem''~\citep{diaconis1980finite} for exchangeable sequences of random variables, every sequence of conditionally $i.i.d.$ random variables can be considered as a sequence of exchangeable random variables. Recall the process of our generative model, we start from $i.i.d.$ sequence of init random variables $x_0^{(b)}$, then iteratively updates as 
\begin{equation}
x_{(k+1)h}^{(b)} = x_{kh}^{(b)} +h \cdot s(x_{kh}^{(b)},Z; \theta) + \sqrt{2h}\cdot\xi_{kh},      
\end{equation}
where $\xi_{kh}\sim\mathcal{N}(0,I_{d_x})$. Note that for each step $t=kh$, we can represent the generated process of $x_T^{(b)}$ as 
\begin{equation}
\label{generative_process}
x_T^{(b)} = \phi_{T}(\cdots \phi_t(\cdots \phi_1(x_0^{(b)}; Z, \xi_{h}); Z, \xi_{kh}); Z, \xi_{T}),    
\end{equation}
where $\phi_t(x_{(k-1)h}^{(b)}; Z, \xi_{kh}) = x_{(k-1)h}^{(b)} + h \cdot s(x_{(k-1)h}^{(b)},Z; \theta) + \sqrt{2h}\cdot\xi_{kh}$. By the construction of Eq.~(\ref{generative_process}), since the score function $s(\cdot,z;\theta)$ is measurable and the additional noise $\xi_{kh}$ and $Z$ are independent of $x_0^{(b)}$, the resulting random sequence of random variables $(X^{(b)},Y,Z)_{b=1}^B$ is exchangeable according to the ``representation theorem'', thus completes the proof.
\end{proof}
Let the statistic be $\rho:\mathcal{X}\times \mathcal{Y}\times \mathcal{Z}\mapsto \mathbb{R}$, then the $p$-value for the test can be approximated by comparing the statistics of the generated samples with those of the observed samples as follows: 
\begin{equation}
p\text{-value}= \frac{1+\sum_{b=1}^B\mathbf{1}\{\rho(X^{(b)},Y,Z)\geq \rho(X,Y,Z)\}}{1+B},    
\end{equation}
where $\mathbf{1}(\cdot)$ is the indicator function. The exchangeability of sample sequence $(X^{(b)},Y,Z)_{b=0}^B$ implies that the $p$-value is valid, i.e.,
\begin{equation}
\mathbb{P}(p\leq \alpha|\mathcal{H}_0)\leq \alpha, \text{for any given }\alpha\in (0,1).    
\end{equation}
Specifically, by the ``representation theorem''~\citep{diaconis1980finite}, we can show that, for any measurable function $\rho$, a statistic chosen independently of the value of $X$ preserves exchangeability. Consequently, under the null hypothesis $\mathcal{H}_0$, the sequence $[{\rho(X^{(b)}, Y, Z)}]_{b=0}^B$ is also exchangeable, thus we can prove that $p$-value is valid by definition, see Proposition~\ref{app_valid_pvalue} in Appendix~\ref{a3} for details. Note that the previous analyses assume that the generated samples perfectly approximate the true conditional distribution, that is, under $\mathcal{H}_0$, for all $b\in [B]$,  $(X^{(b)},Y,Z)~{\buildrel d \over =}~(X,Y,Z)$ is satisfied. In Sec.~\ref{sec:theoretical_analysis}, we will extend this result to the case without this assumption, demonstrating that our method can control Type I error rate asymptotically.  

\begin{algorithm}
\small
\caption{The SGMCIT method}
\label{alg1}
\begin{flushleft}
\textbf{Input:} data $\mathcal{D}:=\{(x_i, y_i, z_i)\}^n_{i=1}$, significance level $\alpha$, statistic $\rho$, projection number $m$, step size $h$, total time $T$, iterations $B$.  \\
\textbf{Output:}  $ X\perp\!\!\!\perp Y|Z$ or $ X\not\! \perp\!\!\!\perp Y|Z$.
\end{flushleft}
\begin{algorithmic}[1]
\State\textbf{\textit{Stage 1: Score-based generative modeling.}}
\State Train $s(x,z,\theta)$ with $\widehat{\mathcal{J}}_{\theta}$ as Eq.~(\ref{lossscore}). $\lhd$ \textbf{Score estimation}
\State Generate $B$ samples with $s(x,z,\theta)$ as Eq.~(\ref{sampling_B}). $\lhd$ \textbf{Sampling}
\State Obtain $\{x_{T;i}^{(b)}, z_i\}_{i=1}^n$, $b\in [B]$. $\lhd$ \textbf{Generated samples}
\State \textbf{\textit{Stage 2: Goodness-of-fit testing.}}
\State Check whether $\{x_{T;i}^{(b)}, z_i\}_{i=1}^n$ is good enough.
\State \textbf{\textit{Stage 3: Conditional independence testing.}}
\State $\mathcal{D}_{T}^{(b)}:=\{x_{T;i}^{(b)},y_i, z_i\}_{i=1}^n$, $b\in [B]$. $\lhd$ \textbf{Forming sample triples} 
\State $\hat{p}\leftarrow\frac{1+\sum_{b=1}^B\mathbf{1}\{\rho(\mathcal{D}_{T}^{(b)})\geq \rho(\mathcal{D})\}}{1+B}$. $\lhd$ \textbf{Calculating $p$-value}
\State Return $X\not\! \perp\!\!\!\perp Y|Z$ if $\hat{p}\leq \alpha$ holds, otherwise $X\perp\!\!\!\perp Y|Z$.
\end{algorithmic}
\end{algorithm}
\noindent \textbf{Algorithm.} Our algorithm is outlined in Algorithm~\ref{alg1}. SGMCIT consists of three main stages: 1) Obtaining the score $s(x,z,\theta)$ by the conditional sliced score matching objective and generating samples by Langevin dynamics conditional sampling (Lines 1-4). 2) Executing the goodness-of-fit procedure (Lines 5-6) to check whether samples are good enough. 3) By comparing the statistics of the generated samples with those of the observed samples, the $p$-value is calculated (Lines 7-9). Finally, $p$-value and significance level $\alpha$ are used to determine the conditional independence (Line 10). The time complexity for generating triples is $\mathcal{O}(nBT)$, and the complexity for calculating the statistic in Stage 2 and 3 depends on specific choices of statistic, as we will discuss later.  

\noindent \textbf{Choice of statistic.}  Our method boasts  flexibility in combining different statistics tailored to specific scenarios. Notably, the validity of our test, as shown in prior analysis, does not depend on the choice of statistic, provided that the conditional distribution is accurately estimated. However, in the finite sample case, error in conditional distribution estimation is inevitable. To mitigate the impact of error, it is beneficial to select a statistic $\rho$ that is less sensitive to minor discrepancy between the generated and true samples under the null hypothesis $\mathcal{H}_0$. This helps ensure that the Type I error remains within an acceptable bound. In practice, test reliability can be enhanced by using metrics that are somewhat less sensitive in the conditional independence (CI) testing phase than in the goodness-of-fit (GOF) phase. Kernel-based metrics such as Maximum Mean Discrepancy (MMD)~\citep{gretton2009fast}, Hilbert-Schmidt Independence Criterion (HSIC)~\citep{gretton2007kernel, ren2024learning}, and Randomized Dependence Coefficient (RDC)~\citep{lopez2013randomized}, are popular choices. For example, we can use the more robust MMD with $\mathcal{O}(n^2)$ complexity during the GOF phase, while opting for RDC in the CI test phase. RDC offers a computational complexity of $\mathcal{O}(n \log n)$ by leveraging kernel acceleration techniques. This strategy also aligns with GCIT's code implementation, ensuring fairness in experimental comparisons and demonstrating the versatility of our method. 

\section{Theoretical Results}
\label{sec:theoretical_analysis}
In this section, we present major theoretical results of our CI testing method. The results consists of two major parts: (1) the validity of score-based generative modeling, which plays a crucial role in determining the CI testing outcome, and (2) the validity of the proposed CI testing method. Due to space limit, we only give sketch proofs, the full proofs are in the Appendix. 

We begin by analyzing the accuracy of score estimation. Assume that the score function $s(x,z;\theta)$ corresponds to the distribution model $p(x|z;\theta)$. 
Let the parameter $\theta^*$ be the optimal parameter and the parameter space be $\Theta$. 

\begin{assumption}[Identifiability]
\label{ass_identifiabliliy}
The model family $\{p(x,z; \theta) \mid \theta \in \Theta\}$ is well-specified, i.e., $p(x,z) = p(x,z; \theta^*)$. 
Furthermore, $p(x,z; \theta) \neq p(x,z; \theta^*)$ whenever $\theta \neq \theta^*$.
\end{assumption}
\begin{assumption}[Positiveness]
\label{ass_positiveness}
The probability density function satisfies $p(x,z; \theta) > 0, \ \forall \theta \in \Theta \ \text{and} \ \forall (x,z).$ 
\end{assumption}

\begin{lemma} 
\label{lemma_identifiable}
Assume the model family is well-specified and identifiable (Assumption 4), and the densities are positive (Assumption 5). Further under Assumptions 1-3, we have
\begin{equation}
\begin{split}
\mathcal{L}_{\theta}(p_v) := \mathbb{E}_{z\sim p(z)}[\mathcal{L}_{\theta}(z;p_v)] = 0 \Leftrightarrow \theta = \theta^*.
\end{split}
\end{equation}
\end{lemma}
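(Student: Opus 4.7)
The plan is to prove the two directions separately, both hinging on the non-negativity of $\mathcal{L}_{\theta}(z;p_v)$ as an expectation of a squared quantity, combined with the non-degenerate covariance hypothesis on $p_v$ and the standard ``score $\Rightarrow$ density'' argument enabled by positiveness.

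For the easy direction ($\Leftarrow$), I would note that under Assumption~\ref{ass_identifiabliliy} (well-specification), $\theta=\theta^*$ forces $p(x,z;\theta)=p(x,z)$, hence $p(x|z;\theta)=p(x|z)$, so the model score $s(x,z;\theta)=\nabla_x\log p(x|z;\theta)$ coincides with the true conditional score $\nabla_x\log p(x|z)$ wherever both are defined (Assumption~\ref{ass_positiveness} guarantees the logs exist). The integrand in $\mathcal{L}_\theta(z;p_v)$ therefore vanishes pointwise in $v$ and $x$, and marginalizing over $z$ gives $\mathcal{L}_\theta(p_v)=0$.

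For the nontrivial direction ($\Rightarrow$), my plan has three steps. First, because $\mathcal{L}_\theta(z;p_v)\ge 0$ and $\mathcal{L}_\theta(p_v)=\mathbb{E}_z[\mathcal{L}_\theta(z;p_v)]=0$, there is a measurable set of $z$ of full $p(z)$-measure on which $\mathcal{L}_\theta(z;p_v)=0$. Fixing such a $z$ and letting $r(x):=s(x,z;\theta)-\nabla_x\log p(x|z)$, we get $\mathbb{E}_{v,x}[(v^Tr(x))^2]=\mathbb{E}_x[r(x)^T\,\mathbb{E}_v[vv^T]\,r(x)]=0$. Assumption~2 says $\mathbb{E}_v[vv^T]\succ 0$, so $r(x)=0$ for $p(\cdot|z)$-almost every $x$; Assumption~\ref{ass_positiveness} lets us upgrade this to $\nabla_x\log p(x|z;\theta)=\nabla_x\log p(x|z)$ Lebesgue-a.e.\ in $x$. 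Second, integrating this gradient identity along any path in $\mathbb{R}^{d_x}$ (using the differentiability from Assumption~1 and positivity to justify taking logs) gives $\log p(x|z;\theta)-\log p(x|z)=C(z)$; since both sides are conditional densities in $x$ that normalize to $1$, we conclude $C(z)=0$ and $p(x|z;\theta)=p(x|z)$ for almost every $(x,z)$. Combined with $p(z;\theta)\cdot p(x|z;\theta)=p(x,z;\theta)$ and a symmetric argument for the $z$-marginal (or by directly noting well-specification gives $p(z;\theta^*)=p(z)$), we obtain $p(x,z;\theta)=p(x,z)=p(x,z;\theta^*)$. Third, Assumption~\ref{ass_identifiabliliy}'s injectivity of $\theta\mapsto p(\cdot,\cdot;\theta)$ yields $\theta=\theta^*$.

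The main obstacle I anticipate is the second step: passing rigorously from the a.e.\ equality of conditional scores to an a.e.\ equality of conditional densities. This requires handling the exceptional null sets carefully, invoking positivity of $p(x|z;\theta)$ (Assumption~\ref{ass_positiveness}) so that the logarithms and their gradients exist everywhere, and justifying the path-integration in $x$ (for instance via the fundamental theorem of calculus along coordinate lines) so that a vanishing gradient in $x$ implies constancy in $x$. The only subtlety beyond this is measurability bookkeeping when marginalizing over $z$, which is standard. Everything else (non-negativity, positive-definiteness of $\mathbb{E}[vv^T]$, identifiability) is immediate from the stated assumptions.
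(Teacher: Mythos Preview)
Your proposal is correct and follows essentially the same route as the paper: non-negativity of $\mathcal{L}_\theta(z;p_v)$ plus $\mathbb{E}_v[vv^T]\succ 0$ forces the conditional scores to agree, integration in $x$ recovers the (log-)density up to a $z$-dependent constant, normalization kills the constant, and identifiability finishes. You are in fact a bit more careful than the paper about the integration constant---the paper writes a single additive $C_0$ on the joint density $p(x,z;\theta)=p(x,z)+C_0$ and removes it via joint normalization, whereas your argument (constant $C(z)$ on the conditional log-density, then conditional normalization) is the cleaner way to handle that step.
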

\begin{proof} [Sketch of proof]
We prove the implication from left to right, as the reverse is straightforward. First, using Assumptions~\ref{ass_identifiabliliy} and~\ref{ass_positiveness}, we have $s(x, z; \theta) = \nabla_x \log p(x|z)$. Next, Integrating both sides w.r.t. $x$, we can derive $p(x,z;\theta) =  p(x,z) = p(x,z;\theta^*)$. By the identifiability Assumption~\ref{ass_identifiabliliy}, we conclude that $\theta = \theta^*$. 
\end{proof}

As a result, the process of finding optimal parameters is equivalent to optimizing the loss objective $\mathcal{L}_{\theta}(p_v)$. 
By further ignoring the constant terms that are independent of $\theta$, i.e., the final optimization objective is given by ${\mathcal{J}}_{\theta}$ as Eq.~(\ref{slicescoremargin}). 
Then, the optimal parameter satisfies $\theta^* = \mathop{\arg \min}_{\theta\in\Theta} {\mathcal{J}}_{\theta}$, demonstrating the effectiveness of our designed loss function ${\mathcal{J}}_{\theta}$ in guiding the model toward the optimal solution. More details are provided in the Appendix. In practice, we use a finite approximation of ${\mathcal{J}}_{\theta}$, denote as $\widehat{\mathcal{J}}_{\theta}$ as Eq.~(\ref{lossscore}), and the empirical estimator of the parameter is given by $\hat{\theta}_{n, m} = \mathop{\arg \min}_{\theta\in\Theta} \widehat{\mathcal{J}}_{\theta}$. The following theorem establishes the consistency of this parameter estimator by extending the results of~\citep{song2020sliced} to the case of conditional distributions.
\begin{assumption}[Compactness] The parameter space is compact.
\end{assumption}

\begin{assumption}[Lipschitz continuity]
Both the term $\nabla_x s(x, z; \theta)$ and $s(x, z; \theta) s(x, z; \theta)^T$ are Lipschitz continuous in terms of Frobenious norm, i.e., for all $ \theta_1, \theta_2 \in \Theta$, $||\nabla_x s(x, z; \theta_1) - \nabla_x s(x, z; \theta_2)||_F \leq $\\ $ L_1(x,z)||\theta_1 - \theta_2||_2$, $||s(x, z; \theta_1) s(x, z; \theta_1)^T - s(x, z; \theta_2) s(x, z; \theta_2)^T||_F $\\$\leq L_2(x,z)||\theta_1 - \theta_2||_2$. In addition, we require that the Lipschitz constant satisfies $\mathbb{E}_{(x,z)}[L_1^2(x,z)] < \infty$ and $\mathbb{E}_{(x,z)}[L_2^2(x,z)] < \infty$.
\end{assumption}

\begin{assumption} [Bounded moments of projection vectors]
The moments of projection vectors satisfy 
$\mathbb{E}_{v\sim p_v}[||v v^T||_F^2] < \infty$.
\end{assumption}

\begin{theorem}[Consistency]
\label{consistency_theorem}
Under Assumptions 1-8, 
$\hat{\theta}_{n,m}$ is consistent, meaning that $\hat{\theta}_{n, m} \overset{p}{\to} \theta^*$ as $n\to \infty$, where the notion $\overset{p}{\to}$ denotes convergence by probability.
\end{theorem}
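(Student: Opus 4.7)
The plan is to follow a standard M-estimator consistency argument (in the spirit of van der Vaart, Ch.~5), combining the identifiability result of Lemma~\ref{lemma_identifiable} with a uniform law of large numbers for $\widehat{\mathcal{J}}_\theta$ on the compact parameter space~$\Theta$. Concretely, I would organize the proof around three ingredients: (i) $\theta^*$ is the \emph{unique} minimizer of $\mathcal{J}_\theta$ on $\Theta$; (ii) $\sup_{\theta\in\Theta}|\widehat{\mathcal{J}}_\theta - \mathcal{J}_\theta| \overset{p}{\to} 0$; and (iii) an argmin continuity (or ``well-separated minimum'') argument to transfer uniform convergence of the objective to convergence of the minimizer.

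For (i), I would observe that $\mathcal{J}_\theta$ differs from $\mathcal{L}_\theta(p_v)=\mathbb{E}_{z}[\mathcal{L}_\theta(z;p_v)]$ only by a constant independent of $\theta$ (as derived after Eq.~(\ref{slicescore0})); then Lemma~\ref{lemma_identifiable}, combined with Assumption~\ref{ass_identifiabliliy} and the positive-definiteness clause in Assumption~2, forces $\theta^*$ to be the unique minimizer. For (ii), the main work, I would write the per-sample integrand as
\begin{equation*}
F_\theta(x,z,v) := v^\top \nabla_x s(x,z;\theta)\, v + \tfrac{1}{2}\, v^\top s(x,z;\theta)s(x,z;\theta)^\top v,
\end{equation*}
so that $\widehat{\mathcal{J}}_\theta = \frac{1}{nm}\sum_{i,j} F_\theta(x_i,z_i,v_{ij})$ and $\mathcal{J}_\theta = \mathbb{E}[F_\theta(x,z,v)]$. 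Using Assumption~7, I would bound
\begin{equation*}
|F_{\theta_1}-F_{\theta_2}| \leq \|vv^\top\|_F\bigl(L_1(x,z)+\tfrac{1}{2}L_2(x,z)\bigr)\|\theta_1-\theta_2\|_2,
\end{equation*}
and, using Cauchy--Schwarz together with Assumptions~1, 7 and 8, verify that the envelope $M(x,z,v) := \|vv^\top\|_F(L_1+\tfrac{1}{2}L_2)$ has finite second moment. Combined with the compactness of $\Theta$ (Assumption~6), this gives a standard bracketing/$\epsilon$-net argument: cover $\Theta$ with finitely many balls of radius $\varepsilon$, apply the pointwise weak law at each center, and absorb the remainder via the integrable Lipschitz envelope. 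This yields the desired uniform convergence in probability.

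For (iii), I would invoke the usual argmin theorem: on a compact set, if $\mathcal{J}_\theta$ is continuous with a unique minimizer $\theta^*$ (continuity follows from the same Lipschitz envelope via dominated convergence), and $\widehat{\mathcal{J}}_\theta$ converges to $\mathcal{J}_\theta$ uniformly in probability, then $\hat{\theta}_{n,m}\overset{p}{\to}\theta^*$. The main technical obstacle I anticipate is step~(ii), specifically verifying integrability of the Lipschitz envelope: the term $\|vv^\top\|_F$ must be controlled jointly with $L_1(x,z)$ and $L_2(x,z)$, and one must be careful that the projection draws $v_{ij}$ are independent of $(x_i,z_i)$ so that the envelope factorizes and its moments decouple cleanly. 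Once the envelope is shown to be square-integrable, the uniform law and the subsequent argmin step are routine.
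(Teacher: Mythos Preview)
Your proposal is correct and follows essentially the same M-estimator strategy as the paper: (i) identifiability via Lemma~\ref{lemma_identifiable}, (ii) uniform convergence of $\widehat{\mathcal{J}}_\theta$ to $\mathcal{J}_\theta$ over the compact $\Theta$, and (iii) an argmin/continuity step. The only substantive difference is in how step~(ii) is executed. The paper (following Song et al., 2020) derives a \emph{quantitative} uniform bound $\mathbb{E}\bigl[\sup_{\theta}|\widehat{\mathcal{J}}_\theta-\mathcal{J}_\theta|\bigr]=\mathcal{O}\bigl(\operatorname{diam}(\Theta)\sqrt{d_\Theta/n}\bigr)$ via a symmetrization-plus-chaining argument, using the same Lipschitz envelope $\|vv^\top\|_F(L_1+\tfrac12 L_2)$ you identify. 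Your $\varepsilon$-net/bracketing route is more elementary and yields only qualitative uniform convergence in probability, which is all that consistency requires; the paper's chaining bound is stronger (it gives a rate and holds in expectation) but relies on the same moment conditions (Assumptions~7 and~8) and compactness. Either route is sound here, and your treatment of the envelope integrability---factorizing the $v$ and $(x,z)$ moments via independence and Cauchy--Schwarz---matches what the paper's Lemma~B.2 needs as well.
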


\begin{proof} [Sketch of proof]
According to Lemma~\ref{lemma_identifiable}, we have $\mathcal{J}_{\theta}= 0 \Leftrightarrow \theta = \theta^*$. Then, we can prove the uniform convergence of $\widehat{\mathcal{J}}_{\theta}$, which holds regardless of $m$. These two results lead to consistency. 
\end{proof}

This result above implies that the estimated parameter converges to the optimal parameter as the training sample size increases. By the continuous mapping theorem, this convergence can be extended to the score, i.e., $s(x,z;\hat{\theta}_{n, m}) \overset{p}{\to} s(x,z;\theta^*)$. Controlling the error in the score function ensures that the error in the final distribution remains manageable by appropriately tuning the parameters in the Langevin dynamics conditional sampling (LDCS) process. 

\begin{assumption} [Smoothness] 
\label{ass_smoothness}
For any $z$, $\log p(x|z)$ is continuously differentiable ($C^1$) w.r.t. $x$ and is $L_z$-smooth w.r.t. $x$, meaning that the conditional score function $\nabla_x \log p(x|z)$ is $L_z$-Lipschitz. Additionally, we assume $L_z \geq 1$ for all $z$.
\end{assumption}

\begin{assumption} [Log-Sobolev inequality constraints] 
\label{ass_logSobolev}
For any $z$, we assume that $p(x|z)$ satisfies a log-Sobolev inequality with constant $C_{z;\text{LS}}$. Furthermore, we assume $C_{z;\text{LS}} \geq 1$ for all $z$.
\end{assumption}

Based on the results of~\citep{lee2022convergence}, we have the following error bound:
\begin{theorem}[Error Bound of Conditional Distribution] 
\label{Error bound of conditional distribution}
Under Assumptions 1-10, running LDCS with the estimated score $s(x,z;\hat{\theta}_{n, m})$,  using an appropriate step size $h$, and time $T$, then for any $z$,  yields a conditional distribution $p_{T;n}(x|z)$ such that the total variation (TV) distance of the error satisfies
\begin{equation}
d_\text{TV}\{p_{T;n}(x|z), p(x|z)\} = o_p(1), ~\text{ as } n\to \infty.   
\end{equation}
\end{theorem}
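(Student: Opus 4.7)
The plan is to decompose the total variation error into an algorithmic part and a score-estimation part, and then drive each to zero by appropriately choosing $h$, $T$ as functions of $n$. Fix $z$ and write $p_{T}(x|z)$ for the distribution at time $T$ of the Langevin chain driven by the true score $\nabla_x \log p(x|z)$ with step size $h$, and $p_{T;n}(x|z)$ for the chain driven by the learned score $s(x,z;\hat{\theta}_{n,m})$. By the triangle inequality,
\begin{equation*}
d_{\text{TV}}\{p_{T;n}(x|z),\, p(x|z)\} \le d_{\text{TV}}\{p_{T;n}(x|z),\, p_{T}(x|z)\} + d_{\text{TV}}\{p_{T}(x|z),\, p(x|z)\}.
\end{equation*}
The second summand is the discretization/mixing error of idealized Langevin Monte Carlo targeting $p(\cdot\,|\,z)$. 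Under Assumption~\ref{ass_smoothness} ($L_z$-smoothness) and Assumption~\ref{ass_logSobolev} (log-Sobolev with constant $C_{z;\text{LS}}$), the convergence results of \cite{lee2022convergence} yield a bound of the form $\exp(-T/C_{z;\text{LS}}) + \text{poly}(L_z, C_{z;\text{LS}})\,\sqrt{h d_x}$, which is driven to zero by taking $T\to\infty$ and $h\to 0$ at rates determined by $L_z$ and $C_{z;\text{LS}}$.

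For the first summand, I would apply the score-perturbation analysis of \cite{lee2022convergence} (a Girsanov-type argument) which bounds the TV distance between the two Langevin chains by a quantity of the form
\begin{equation*}
C(L_z, C_{z;\text{LS}})\sqrt{T}\cdot\Bigl(\mathbb{E}_{x\sim p(x|z)}\bigl\|s(x,z;\hat{\theta}_{n,m}) - \nabla_x \log p(x|z)\bigr\|_2^2\Bigr)^{1/2}.
\end{equation*}
Theorem~\ref{consistency_theorem} guarantees $\hat{\theta}_{n,m} \overset{p}{\to} \theta^*$, and the continuous mapping theorem lifts this to pointwise score convergence $s(x,z;\hat{\theta}_{n,m}) \overset{p}{\to} s(x,z;\theta^*) = \nabla_x\log p(x|z)$ (the second equality follows from Lemma~\ref{lemma_identifiable}). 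Upgrading this pointwise convergence in probability to an $L^2(p(x|z))$ convergence in probability of the score vector is the delicate step: I would combine the Lipschitz-continuity Assumption~7 with the compactness Assumption~6 and the moment bounds in Assumption~1 to establish a uniform integrability condition, which allows passage from pointwise convergence to $L^2$ convergence under $p(x|z)$.

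Having the squared $L^2$ score-error bounded by some $\varepsilon_n = o_p(1)$, I would then choose the Langevin parameters as functions of $n$, namely $T = T_n\to\infty$ and $h = h_n\to 0$, while requiring $\sqrt{T_n}\cdot\varepsilon_n^{1/2} = o_p(1)$ and $\sqrt{h_n d_x}\to 0$ and $\exp(-T_n/C_{z;\text{LS}})\to 0$. Such a simultaneous choice is possible because $\varepsilon_n$ tends to zero in probability and $T_n$ can be taken to grow arbitrarily slowly, e.g. $T_n = \log\log(1/\varepsilon_n)$. Summing the two contributions then gives $d_{\text{TV}}\{p_{T;n}(x|z), p(x|z)\} = o_p(1)$.

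The main obstacle I anticipate is the passage from parameter/score consistency in probability to the $L^2(p(x|z))$ score-error bound required by the Girsanov-type argument, since Theorem~\ref{consistency_theorem} only delivers convergence of $\hat\theta_{n,m}$ in the Euclidean sense and does not immediately yield an integrated score error. Handling this rigorously will require exploiting the Lipschitz structure (Assumption~7) together with the finite second moments (Assumption~1) to obtain a dominating envelope and invoke a Vitali-type uniform-integrability argument, after which balancing $T_n$ and $h_n$ against $\varepsilon_n$ is a routine rate-matching exercise.
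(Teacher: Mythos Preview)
Your proposal is sound but follows a different route from the paper. You decompose via the triangle inequality into (i) the perturbation error between the learned-score chain and the true-score chain, and (ii) the discretization/mixing error of ideal LMC, then balance $T_n\to\infty$ and $h_n\to 0$ against the random $L^2$ score error $\varepsilon_n$. The paper instead invokes the black-box result of \cite{lee2022convergence} for Langevin with an $L^2$-inaccurate score directly: for any target accuracy $\varepsilon_{\text{TV}}$, there exist fixed $h,T$ (depending on $\varepsilon_{\text{TV}}$ and the constants $L_z,C_{z;\text{LS}}$) and a threshold $\varepsilon_c$ such that $\{\text{$L^2$ score error} \le \varepsilon_c\}\subseteq\{d_{\text{TV}}\le 2\varepsilon_{\text{TV}}\}$; then one simply argues $\mathbb{P}(\text{score error}\le\varepsilon_c)\to 1$ by consistency, giving $\mathbb{P}(d_{\text{TV}}\le 2\varepsilon_{\text{TV}})\to 1$ for every $\varepsilon_{\text{TV}}$. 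This bypasses the rate-matching step entirely, and in particular avoids the awkwardness of letting $T_n$ depend on the random quantity $\varepsilon_n$. Your decomposition is more transparent about where each source of error enters, but the paper's route is cleaner for the stated conclusion.

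Both approaches share exactly the obstacle you flag: upgrading pointwise-in-$x$ convergence of the score (from $\hat{\theta}_{n,m}\overset{p}{\to}\theta^*$ and the continuous mapping theorem) to an $L^2(p(x|z))$ bound. The paper's appendix handles this step rather loosely, essentially writing $\mathbb{P}\bigl(\mathbb{E}_{x}[\|\cdot\|^2]\le\varepsilon_c^2\bigr)\ge\mathbb{P}\bigl(\|\cdot\|^2\le\varepsilon_c^2\bigr)$ without clarifying the quantifier on $x$. Your proposed remedy---using the Lipschitz structure in $\theta$ (Assumption~7) together with compactness (Assumption~6) to get a dominating envelope and then a uniform-integrability/Vitali argument---is in fact the cleanest way to make this step rigorous, and would work equally well inside the paper's more direct argument.
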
 
\begin{proof} [Sketch of proof]
We begin by controlling the estimation error of the score, leveraging the result $s(x,z;\hat{\theta}_{n, m}) \overset{p}{\to} s(x,z;\theta^*)$. Using Assumptions~\ref{ass_smoothness} and~\ref{ass_logSobolev}, we extend the convergence results from~\citep{lee2022convergence}. Specifically, we can show that when the score estimation error is bounded, the error in the conditional distribution, with the appropriate sampling parameters 
$h$ and $T$, is also well-controlled.
\end{proof}

This theorem guarantees that the total variation (TV) distance between the generated and true conditional distributions is asymptotically negligible, ensuring the validity of the CI test. That is, for finite test samples, we show that Type I error remains controllable given sufficient (large $n$) training samples, as stated in the following:
\begin{theorem}[Type I error Bound]
\label{theorem_type_I}
Assume the null hypothesis $\mathcal{H}_0: X\perp \!\!\! \perp Y|Z$ is true. Under Assumptions 1-10, for any significance level $\alpha\in (0,1)$, the bound for the Type I error is given by  
\begin{equation}
\mathbb{P}(p\text{-value}\leq \alpha |\mathcal{H}_0) \leq \alpha + o_p(1), \text{ as } n\to \infty.    
\end{equation}
\end{theorem}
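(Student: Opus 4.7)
My plan is to transfer the exact validity of an oracle version of the procedure to the estimated version via a total-variation coupling. Introduce the oracle procedure in which the pseudo-samples are $\tilde{X}_i^{(b)}\sim p(\cdot\mid z_i)$ (drawn from the true conditional) rather than from the learned $p_{T;n}(\cdot\mid z_i)$, and let $\tilde p$ denote the resulting $p$-value. Under $\mathcal H_0$ we have $(\tilde X^{(b)}, Y, Z) \stackrel{d}{=} (X,Y,Z)$ for every $b$, so the Exchangeability Proposition applies verbatim; the rank argument sketched in the paragraph following that proposition then delivers $\mathbb P(\tilde p\le\alpha\mid\mathcal H_0)\le\alpha$ exactly.

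I would then condition on the training set $\mathcal T_n$ (which determines $p_{T;n}$) and on the inference data $(\mathbf Y,\mathbf Z)$, and compare the conditional laws of the learned pseudo-samples $(X^{(b)}_i)$ against the oracle pseudo-samples $(\tilde X^{(b)}_i)$. Since the event $\{p\le\alpha\}$ is a measurable function of the pseudo-samples through the statistic $\rho$, the standard coupling inequality gives
\begin{equation*}
\bigl|\mathbb P(p\le\alpha\mid\mathcal H_0,\mathcal T_n,\mathbf Y,\mathbf Z) - \mathbb P(\tilde p\le\alpha\mid\mathcal H_0,\mathcal T_n,\mathbf Y,\mathbf Z)\bigr|\le d_{\mathrm{TV}}\!\bigl(\mathbb P_{\mathbf X},\mathbb P_{\tilde{\mathbf X}}\bigr).
\end{equation*}
Because, conditional on $\mathcal T_n$ and $\mathbf Z$, the generated coordinates are independent across $(i,b)$, sub-additivity of TV on product measures bounds the right-hand side by $\sum_{b=1}^{B}\sum_{i=1}^{n} d_{\mathrm{TV}}\!\bigl(p_{T;n}(\cdot\mid z_i),p(\cdot\mid z_i)\bigr)$. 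Theorem~\ref{Error bound of conditional distribution} states that each summand is $o_p(1)$, and after taking expectation over $\mathbf Z$ (appealing to dominated convergence under a standard uniform-integrability hypothesis) and unconditioning on $\mathcal T_n$, combining this with the oracle bound yields $\mathbb P(p\le\alpha\mid\mathcal H_0)\le\alpha+o_p(1)$.

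The hard part will be the $nB$ factor produced by the product-TV bound: because $n$ is simultaneously the training size and the inference size, the per-sample bound needs to decay faster than $1/(nB)$ in probability, which is strictly stronger than the plain $o_p(1)$ currently stated in Theorem~\ref{Error bound of conditional distribution}. I see two ways to close this gap. One is to adopt the standard convention in the CRT literature of splitting the data into a training sample of size $n_1$ used to fit the score and a disjoint inference sample of size $n_2$, then letting $n_1\to\infty$ with $n_2$ fixed so that Theorem~\ref{Error bound of conditional distribution} provides the required rate. The other is to upgrade Theorem~\ref{Error bound of conditional distribution} to a quantitative statement by tracking the rate hidden in Theorem~\ref{consistency_theorem} together with the discretisation error of Langevin dynamics in $h$ and $T$, and then choosing these tuning parameters so that $nB$ times the resulting bound still vanishes. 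Either route should close the proof, but the quantitative bookkeeping is the portion I anticipate requiring the most care.
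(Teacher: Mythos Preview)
Your high-level strategy---couple the actual procedure to an exchangeable ``oracle'' procedure via total variation, bound the oracle rejection probability by $\alpha$ exactly, and then use Theorem~\ref{Error bound of conditional distribution} together with dominated convergence---is the same as the paper's.  There is, however, one neat twist in the paper's argument that you did not hit on: instead of replacing the $B$ blocks of pseudo-samples $X^{(b)}$ by oracle draws $\tilde X^{(b)}\sim p(\cdot\mid z_i)$, the paper leaves the pseudo-samples untouched and replaces only the \emph{observed} $\boldsymbol X$ by a single block $\hat{\boldsymbol X}\sim p_{T;N}(\cdot\mid\boldsymbol Z)$ drawn from the \emph{estimated} conditional.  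The resulting sequence $\hat{\boldsymbol X},\boldsymbol X^{(1)},\dots,\boldsymbol X^{(B)}$ is then exchangeable (all $B{+}1$ blocks now come from $p_{T;N}$), so the rank bound still gives $\alpha$, and the TV cost collapses to $d_{\mathrm{TV}}\{p_{T;N}(\cdot\mid\boldsymbol Z),p(\cdot\mid\boldsymbol Z)\}\le\sum_{i=1}^n d_{\mathrm{TV}}\{p_{T;N}(\cdot\mid z_i),p(\cdot\mid z_i)\}$.  This eliminates the factor $B$ that your coupling produces; since $B$ is a fixed constant your route is not wrong, just slightly looser.  Finally, your diagnosis of the remaining $n$-factor issue is accurate, and your first proposed fix---splitting into a training sample of size $N$ and a separate inference sample of fixed size $n$, then sending $N\to\infty$---is exactly what the paper adopts in its appendix proof.
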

\begin{proof} [Sketch of proof]
We first derive an upper bound on the Type I error rate based on the distribution estimation error. Next, using the asymptotic error control established in Theorem~\ref{Error bound of conditional distribution}, we can show that the upper bound of this Type I error rate diminishes to $\alpha$, leading to the expected result.
\end{proof}

\noindent \textbf{Remark.} Our theoretical analyses above focus on asymptotic properties, which guide practical implementation. The validity of the CI test relies on accurate modeling of conditional distributions, where the goodness-of-fit stage plays a critical role. This stage enhances the confidence in the reliability of the CI test, as further demonstrated in the ``Performance Evaluation'' section.


\section{Performance Evaluation}
\label{sec:experimental}
In this section, we present experimental evaluation that  comprises two major parts: the assessment of our generative model and the evaluation of our CI testing method. Due to space limit, we move the details of implementation, more visualization results, performance across additional metrics, and runtime results to the Appendix. The code is available at:  \url{https://github.com/jinchenghou123/SGMCIT}.

\begin{figure*}[h]
\centering
\includegraphics[scale=0.53]{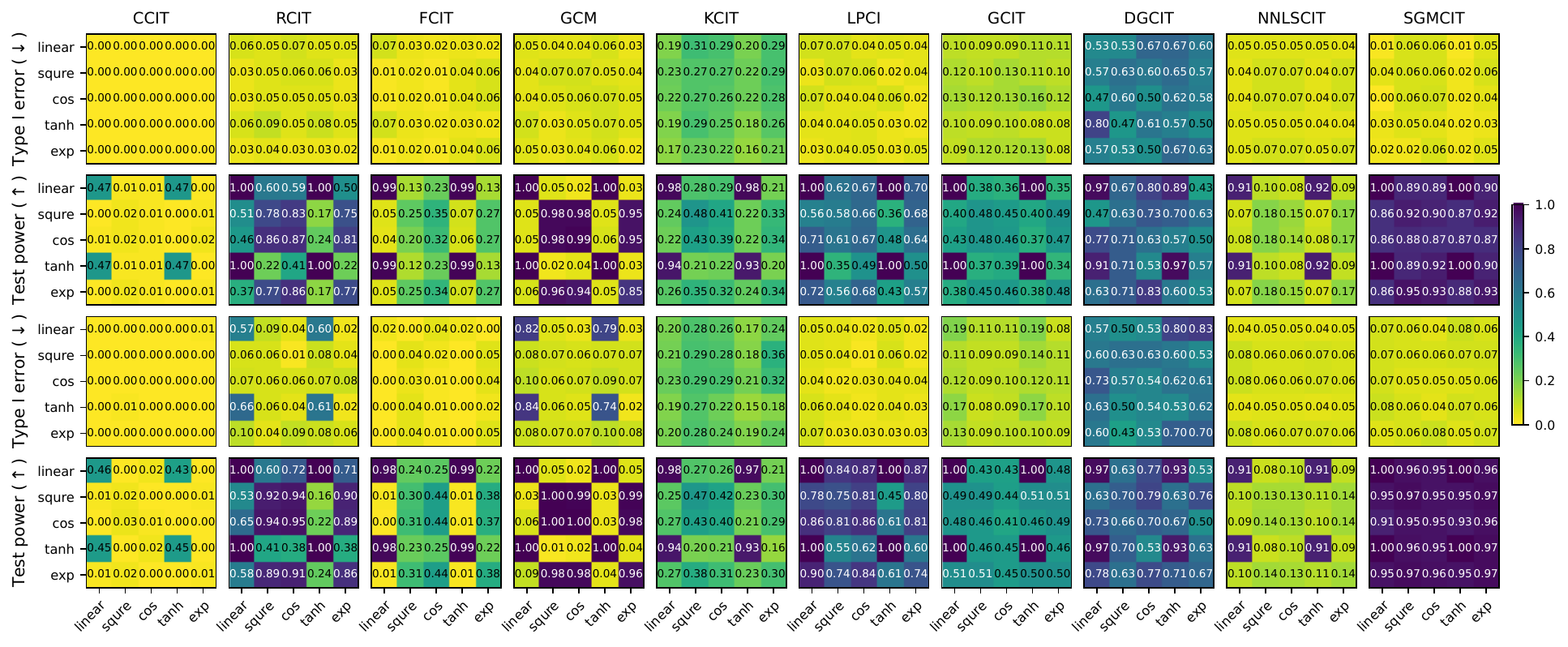}
\caption{
Results of conditional independence tests on benchmark datasets. }
\label{fig:experiments_gaussian}
\end{figure*}

\begin{figure}[h]
    \centering
    \includegraphics[scale=0.4]{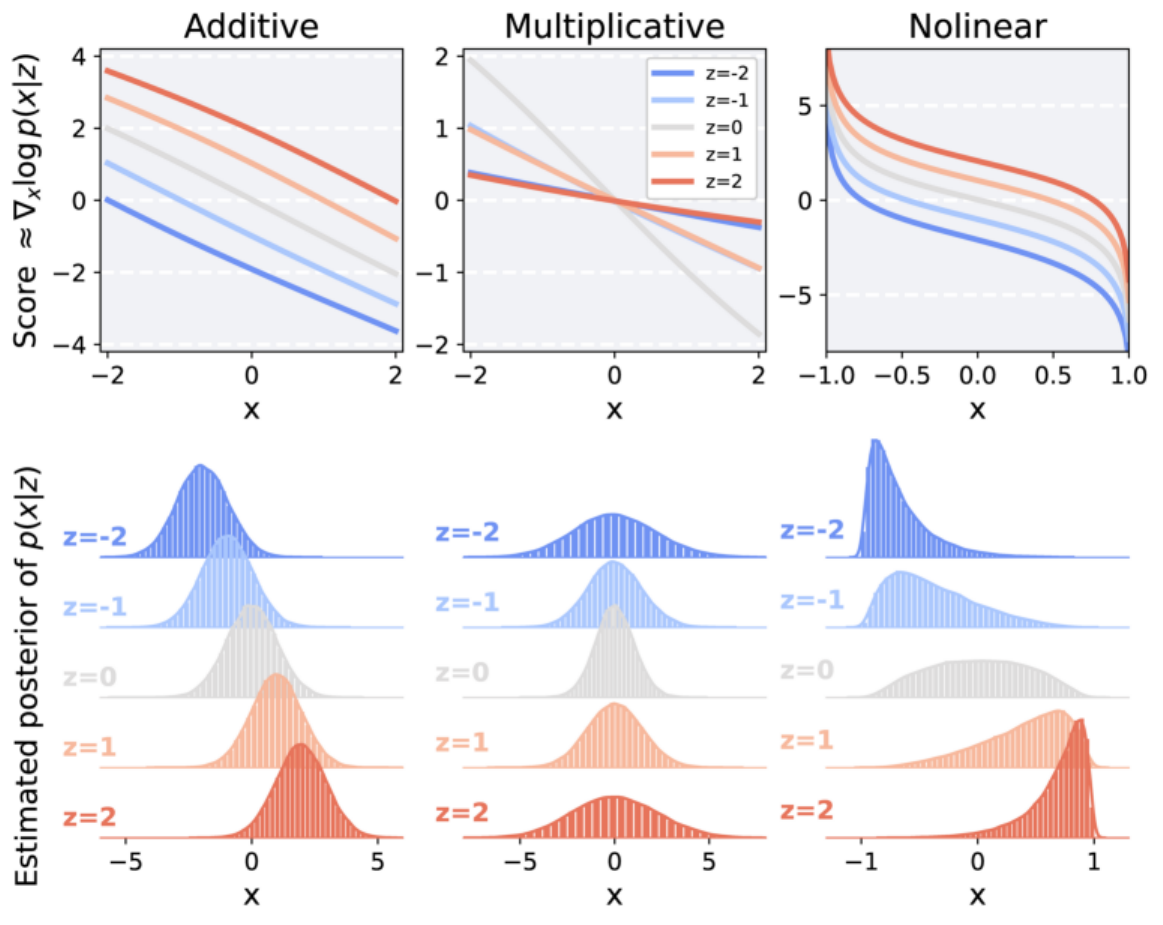}
    \caption{Results of conditional generation experiments. Top: the visualization of the estimated scores. Below: the distributions of generated samples for $z\in \{-2,-1,0,1,2\}$.}
    \label{fig:score_genl}
\end{figure}

\subsection{Results of Distribution Modeling}
We first evaluate the performance of the score-based generative model for modeling conditional distribution. We consider the following three settings. Let $Z,\epsilon_b, \epsilon_x$ be the samples from joint independence one-dimension standard Gaussian, we generate data following three models $X = Z+\epsilon_x$, $X = Z\cdot \epsilon_b+\epsilon_x$, $X = \tanh((Z+\epsilon_x)/2)$, corresponding to the additive, multiplicative, and nonlinear cases respectively. The sample size is set to $50000$. The results of the estimated scores as well as the estimated posterior distributions are illustrated in Fig.~\ref{fig:score_genl}. Our model successfully models the posterior distribution in all cases and accurately estimates the score, especially in high-density regions. This corroborates our theoretical results and thus helps to ensure the validity of our CI test. 

\subsection{CI Testing Results on Synthetic Data}


Here we evaluate our method by a series of experiments, starting with benchmarks from~\citep{scetbon2022asymptotic}, followed by tests in high-dimensional confounder scenarios~\citep{shi2021double}. Additionally, we assess the performance of generative-model-based CI methods under a challenging setting with chain structure~\citep{li2023k}.

\noindent \textbf{Compared methods.} We compare our method with the following 9 existing CI test methods: 
CCIT~\citep{sen2017model}, RCIT~\citep{strobl2019approximate}, FCIT~\citep{chalupka1804fast}, GCM~\citep{shah2020hardness}, KCIT~\citep{zhang2012kernel}, LPCIT~\citep{scetbon2022asymptotic}, GCIT~\citep{bellot2019conditional}, DGCIT~\citep{shi2021double} and NNLSCIT~\citep{li2023nearest}. Software packages of all these methods are freely available online, and default parameters are used if not otherwise specified. For GCIT and DGCIT, we adjust parameters such as learning rate to make the fitting result better for a fair comparison.

\noindent \textbf{Evaluation.} To evaluate test performance, we use (1) \textit{Type I error}, assessing validity by ensuring the error rate is controlled at any significance level $\alpha$; and (2) \textit{testing power}, defined as $1-$Type II error rate, reflecting the ability to detect conditional dependencies.  
 

\noindent \textbf{Results on benchmark datasets.}
Let $Z, \epsilon_x, \epsilon_y$ be samples from the joint independence standard Gaussian, i.e., $Z\sim \mathcal{N}(0_{d_z},\mathbf{I}_{d_z})$. Then the samples are generated as follows:
\begin{equation*}
\begin{split}
X = f_1\left(\alpha\overline{Z}+\beta\epsilon_b+\epsilon_x\right), Y = f_2\left(\alpha\overline{Z}+\beta\epsilon_b+\epsilon_y\right),
\end{split}    
\end{equation*}
where $f_1,f_2\in \{(\cdot), (\cdot)^2, \cos(\cdot), \tanh(\cdot), \exp(-|\cdot|)\}$, referred to as linear, square, cos, tanh, and exp functions, respectively. Here, $\alpha, \beta$ are constants and $\overline{Z}$ is the mean of $Z$ along its dimensions. By using different settings for $\alpha,~\beta$, we simulate varying conditional independence relationships as follows:
\begin{itemize}
\item Case 1: $(\alpha,\beta)=(0.0,0.0)$ for simulating $X\perp \!\!\! \perp Y$.
\item Case 2: $(\alpha,\beta)=(0.0,0.8)$ for simulating $X\not\! \perp \!\!\! \perp Y|Z$.
\item Case 3: $(\alpha,\beta)=(1.0,0.0)$ for simulating $X\perp \!\!\! \perp Y|Z$.
\item Case 4: $(\alpha,\beta)=(1.0,0.8)$ for simulating $X\not\!\perp \!\!\!\perp Y|Z$. 
\end{itemize}
We set the sample size $n=1000$ and the confounder dimension $d_z=10$. Each configuration is repeated $100$ times, and the average result is reported for every function pair $(f_1,f_2)$. 

Fig.~\ref{fig:experiments_gaussian} shows the results, with the four rows corresponding to Cases 1 -- 4, respectively. Our method achieves better Type I error control (with most error values being controlled around 0.05) and test power (close to 1 across various function pairs). In contrast, two other generative model-based CI test methods, GCIT and DGCIT, fail to properly control Type I errors. As shown in the Appendix, visualizations of their generative distributions reveal poor fitting results, which explain their limitations. Similarly, KCIT struggles to control Type I errors due to the challenging setting of $d = 10$. Regression-based methods like GCM and LPCIT, while able to control Type I errors, lack sufficient power for certain function combinations. Overall, our method demonstrates robust performance in both linear and nonlinear scenarios. Visualizations in the Appendix confirm that our generative model can accurately capture conditional distributions, boosting the effectiveness of our method and enhancing its interpretability by providing insights into why the test performs well.

\begin{figure}[h]
    \centering
    \includegraphics[scale=0.19]{./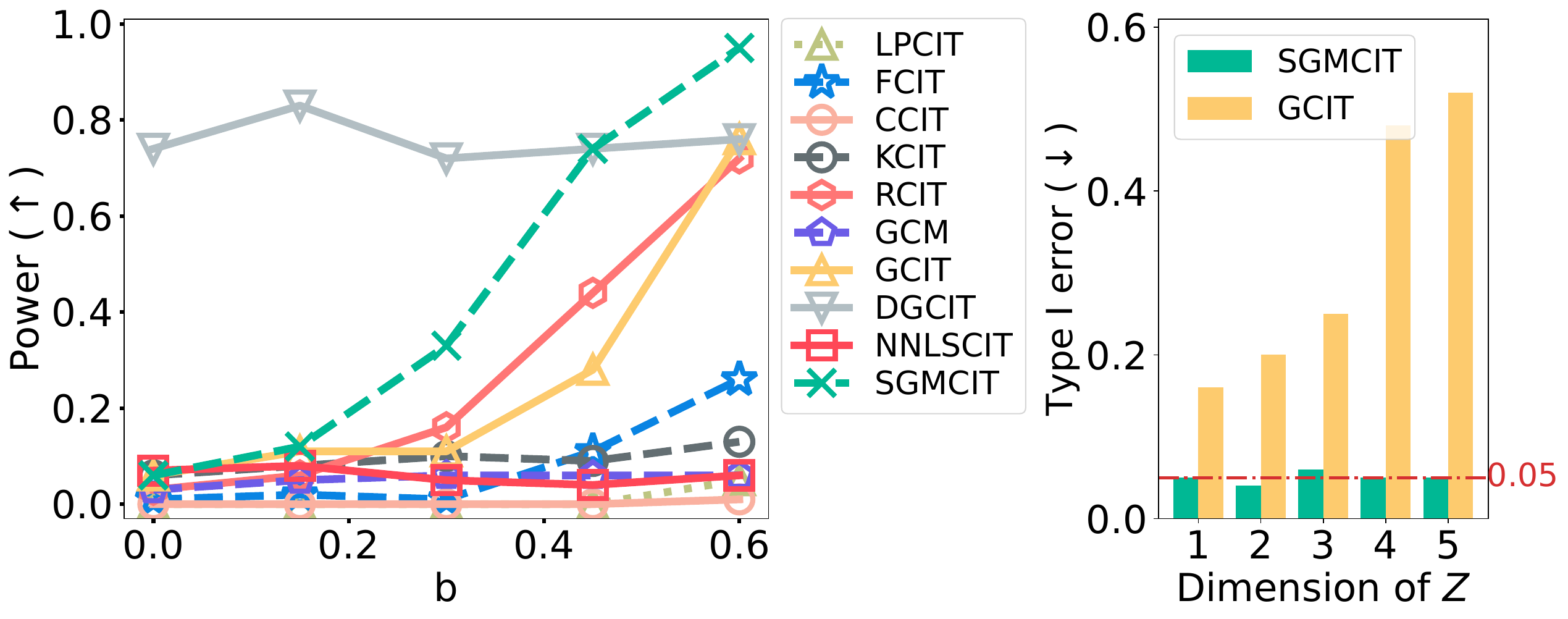}
    \caption{Left: Results in the high-dimensional confounder setting. Right: Results in the chain setting.}
    \label{fig:experiments_highdimensional_chain}
\end{figure}

\noindent \textbf{Results in high-dimensional confounder setting.} To evaluate performance in high-dimensional scenarios, we generate data as
\begin{equation*}
X =\sin(a^T_fZ+\epsilon_f), Y = \cos(a_g^TZ+bX+\epsilon_g),    
\end{equation*}
where $a_f,a_g$ are sampled uniformly from $[0,1]$, and then normalized to have unit $\ell_1$ norm. The noise terms $\epsilon_f,\epsilon_g$ are independent Gaussian samples with mean $0$ and variance $0.25$. Note that $\mathcal{H}_0$ holds when $b$ equals to $0$, otherwise $\mathcal{H}_1$ holds. We set $n=1000$ and $d_z=100$, evaluate test power as $b$ varies from $0$ to $0.6$. 

The average results from $100$ experiments are shown on the left side of Fig.~\ref{fig:experiments_highdimensional_chain}. Our method consistently achieves the highest test power as $b$ increases, while maintaining desirable control of Type I error at $b = 0$. In contrast, although DGCIT achieves high test power, its statistic design leads to uncontrollable Type I error, making the test invalid. GCIT, on the other hand, controls Type I error by reducing the number of test samples, which compromises its power. These findings demonstrate the robustness and adaptability of our method in high-dimensional scenarios.

\noindent \textbf{Results in chain setting.} Here, we evaluate generative model-based CI test methods in a setting that is proved challenging~\citep{li2023nearest}. In the chain setting, $Y\rightarrow Z \rightarrow X$, we compare Type I error control between GCIT and SGMCIT. The data is generated as 
\begin{equation*}
Y\sim \mathcal{N}(1,1), Z = Yu_1+\epsilon_1, X = Z^Tu_2+\epsilon_2,    
\end{equation*}
where $\epsilon_1,\epsilon_2$ are independent Gaussian noise vectors of dimensions $d_z, d_x$ respectively, and the entries of $u_1,u_2$ are uniformly sampled from $[0,0.3]$. We set $n=5000$ and evaluate the Type I error rate with $d_z$ increasing from $1$ to $5$.  

The average results from $100$ experiments are shown on the right of Fig.~\ref{fig:experiments_highdimensional_chain}. Our method stably controls Type I error around 0.05, even as the dimension of $Z$ increases. In contrast, GCIT's Type I error increases with the dimension of $Z$, undermining its reliability. To further investigate this, we perform a goodness-of-fit test and visualize the generating distribution in Fig.~\ref{fig:vischain}. As discussed in Sec.~\ref{subsection_goodnessoffit}, we can see that our method precisely models the generating distribution, enabling precise control of Type I error. These results also highlight the critical role of our goodness-of-fit test in ensuring the validity of the CI testing.

\subsection{CI Testing Results on Real Data}
Following the experimental setup in~\citep{berrett2020conditional}, we evaluate SGMCIT and GCIT on the Capital Bikeshare\footnote{\url{https://s3.amazonaws.com/capitalbikeshare-data/index.html}} dataset. This dataset records details of every ride, including start and end times, locations, and the user type, categorized as either ``Member'' or ``Casual''. In our experiment, we aim to determine whether the ride duration ($X$) is influenced by the user type ($Y$) while controlling variables ($Z$) , which include the starting and ending coordinates (latitude and longitude) and the start time of the ride.
For this experiment, we restrict the dataset to rides taken on weekdays (Monday through Friday) during December 2023, focusing solely on classic bikes. To address outliers, we exclude rides with durations exceeding 7000 seconds. After preprocessing, we randomly select 51,000 samples, which are divided into a training set of 50,000 samples and a test set of 1,000 samples. We evaluate the performance of the goodness-of-fit (GOF) phase as well as the CI testing. For GOF, MMD is employed to compare the distribution difference between the observed and generated joint distributions $(X,Z)$. 

\noindent\textbf{\textit{Results and Analysis.}} The results are presented in Fig.~\ref{fig:real_data}. Both SGMCIT and GCIT conclude that $X\not\! \perp \!\!\! \perp Y|Z$. Since the dataset lacks ground truth, ensuring the reliability of the CI test results becomes critical. The GOF stage plays a crucial role in this regard. Visualizations of the distribution of $X$ and $p$-value of GOF test indicate that SGMCIT achieves significantly better modeling of the conditional distribution, enhancing the reliability of its result. Overall, the GOF stage is proved essential for ensuring the validity of CI testing and providing interpretability in practice. By accurately modeling the conditional distribution, SGMCIT demonstrates clear advantage over existing methods, making it a more desirable choice for real-world scenarios.

\begin{figure}[h]
\centering
\includegraphics[scale=0.41]{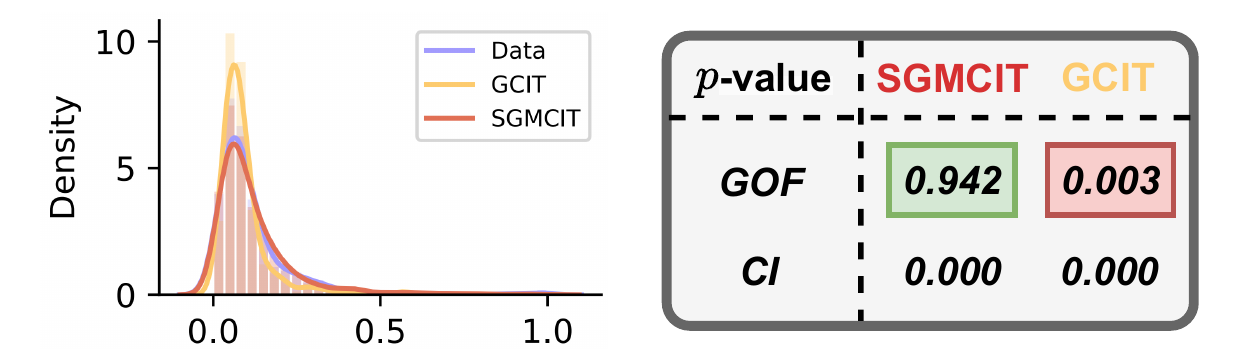}
\caption{CI testing results on real data. Left: Visualization results. Right: $p$-values for GOF test and CI test.}
\label{fig:real_data}
\end{figure}

\section{Conclusion and Future Work}
\label{sec:conclusion}
In this paper, we propose a novel conditional independence testing method based on score-based generative modeling. 
By leveraging a sliced conditional score matching scheme and Langevin dynamics conditional sampling, our method achieves precise Type I error control while maintaining strong testing power in high-dimensional settings. Moreover, the integrated goodness-of-fit validation stage enhances interpretability and reliability in practical scenarios. Both theoretical analyses and extensive experiments on synthetic and real-world datasets demonstrate the effectiveness of our method, establishing it as a new promising direction for generative model-based conditional independence testing.
Future work will focus on accelerating the sampling process.

\section{Post-Acceptance Remarks}\label{sec:post-acceptance remarks}
A concurrent paper by Yang et al.~\citep{yang2025conditional}, recently accepted by AAAI 2025, also introduces a conditional diffusion model for CI testing. We were unaware of this work at the time of submission and thank the reviewer for bringing it to our attention.

Their method is elegant and offers a denoising-based approach. In contrast, our work independently explores a sliced score-matching route, which avoids noise tuning, enables a goodness-of-fit check stage, and provides a different theoretical foundation. Our method also requires fewer hyperparameters and demonstrates better empirical performance across multiple metrics.
We have added a performance comparison with~\citep{yang2025conditional} in the Appendix.


\bibliographystyle{ACM-Reference-Format}
\bibliography{sample-base}

\appendix
$~$

\noindent {{The Appendix is organized as follows:}}
\begin{itemize}[leftmargin=5mm]
\item Section~\ref{a0}: List of Symbols and Notations.
\item Section~\ref{a1}: Proof of Conditional Sliced Score Matching.
\item Section~\ref{a2}: Proof of Langevin Dynamics Conditional Sampling.
\item Section~\ref{a3}: Proof of Exchangeablility.
\item Section~\ref{a4}: Proof of Type I error Bound. 
\item Section~\ref{a5}: Implementation Details.
\item Section~\ref{a6}: More Experiments Results.
\end{itemize}

\section{List of Symbols and Notations}
\label{a0}
\begin{table}[htbp]
\begin{center}
\small
\begin{tabular}{lll}
\toprule
\rule{0pt}{1pt} $\mathcal{O}, o$ & & big, small O notion\\
\rule{0pt}{1pt} $\mathcal{O}_p, o_p$ & & big, small O notion in probability\\
\rule{0pt}{1pt} $i.i.d.$ & & independent and identically distributed\\
\rule{0pt}{1pt} $r.v.s$ & & random variables\\
\rule{0pt}{1pt} $\mathbb{R}$ & & the set of real numbers\\
\rule{0pt}{1pt} $\mathcal{B}(\mathbb{R})$ & & Borel $\sigma$-algebra on $\mathbb{R}$\\
\rule{0pt}{1pt} $\mathbb{P}_X$ & & marginal distribution of $X$\\
\rule{0pt}{1pt} $\mathbb{P}_{XY}$ & & joint distribution of $X$, $Y$\\
\rule{0pt}{1pt} $p(x)$ & & probability density function of $X$\\
\rule{0pt}{1pt} $p(x|z)$ & & conditional probability density function of $X|Z$\\
\rule{0pt}{1pt} $\mathbb{E}[X]$ & & expectation of $X$\\
\rule{0pt}{1pt} $\text{Var}(X)$ & & variance of $X$\\
\rule{0pt}{1pt} $X\perp \!\!\! \perp Y$ & & r.v.s $X$, $Y$ are independent\\
\rule{0pt}{1pt} $X\not\! \perp\!\!\!\perp Y|Z$ & & r.v.s $X$, $Y$ are not independent, given $Z$.\\
\rule{0pt}{1pt} $\text{Tr}(\cdot)$ & & the trace of a square matrix\\
\rule{0pt}{1pt} $\mathbf{1}(\cdot)$ & &  the indicator function \\
\rule{0pt}{1pt} 
$[B]$ & & the simplified notation, defined as $[B]:= 1,2,...,B$\\
\rule{0pt}{1pt} 
$\times $ & & the product symbol of topological space \\
\rule{0pt}{1pt} 
$s(x,z;\theta) $ & & the conditional score function parameterized by $\theta$\\
\rule{0pt}{1pt} 
$\nabla_x s(x,z;\theta)$ & & the gradient of score function w.r.t. $x$\\
\rule{0pt}{1pt} 
$\xrightarrow[]{\text{d}}, \xrightarrow[]{\text{p}}$ & & convergence in distribution, in probability \\
\rule{0pt}{1pt} 
${\buildrel d \over =}$ & & equality in distribution \\
\bottomrule
\end{tabular}
\end{center}
\label{tab:TableOfNotationForMyResearch}
\end{table}
\newpage
\section{Proof of Conditional Sliced Score Matching}
\label{a1}
In this section, we extend the conclusions of~\citep{song2020sliced} to our conditional score function. We begin by summarizing the most commonly used notations.  Let the dataset be $\mathcal{D}=\{(x_i, y_i, z_i)\}^n_{i=1}$ and the conditional probability be  $p(x|z)$. The model score function, denoted as  $s(x,z;\theta)$, corresponding to $p(x,z;\theta)$ where $\theta$ is restricted to a parameter space $\Theta$. The goal of $s(x,z;\theta)$ is to approximate the true score function $\nabla_x\log p(x, z)$. The Hessian of $\log p(x,z)$ w.r.t. $x$ is represented as $\nabla_x s(x,z;\theta)$. Additionally, we introduce $v$ as a random vector of the same dimension as $x$, referred to as the projection vector, with $p_v$ denoting its distribution.

\renewcommand{\theassumption}{B.\arabic{assumption}}
\setcounter{assumption}{0} 
\begin{assumption}[Regularity of conditional scores]
For any $z$, $s(x,z;\theta)$ and $\nabla_x\log p(x|z)$ are both differentiable w.r.t. $x$. 
Additionally, we assume that they satisfy $\mathbb{E}_{x\sim p(x|z)}[\|s(x,z;\theta)\|_2^2] < \infty$ and $\mathbb{E}_{x\sim p(x|z)}[\|\nabla_x\log p(x|z)\|_2^2] < \infty$.
\end{assumption}

\begin{assumption}[Regularity of projection vectors] 
The projection vectors satisfy 
$\mathbb{E}_{v\sim p_v}[\|v\|_2^2] < \infty$, and 
$\mathbb{E}_{v\sim p_v}[vv^T] \succ 0$.
\end{assumption}

\begin{assumption}[Boundary conditions]
For any $z$, for all $\theta \in \Theta$, the score satisfy $\lim_{\|x\| \to \infty} s(x,z;\theta) p(x|z) = 0.$
\end{assumption}

\begin{assumption} [Identifiability]
The model family $\{p(x,z; \theta) \mid \theta \in \Theta\}$ is well-specified, i.e., $p(x,z) = p(x,z; \theta^*)$. 
Furthermore, $p(x,z; \theta) \neq p(x,z; \theta^*)$ whenever $\theta \neq \theta^*$.
\end{assumption}

\begin{assumption}[Positiveness]
The probability density function satisfies $p(x,z; \theta) > 0, \ \forall \theta \in \Theta \ \text{and} \ \forall (x,z).$ 
\end{assumption}

\begin{lemma} 
\label{app_lemma_of_loss_nomargin}
Assume $s(x,z;\theta)$, $\nabla_x\log p(x, z)$ and $p_v$ satisfy some regularity conditions (Assumption B.1, Assumption B.2). Under proper boundary conditions (Assumption B.3), we have 
\begin{equation}
\label{b1}
\begin{split}
\mathcal{L}_{\theta}&(z;p_v) =  \frac{1}{2}\mathbb{E}_{\substack{v\sim p_v\\x\sim p(x|z)}}\left\{\bigl[ v^Ts(x,z;\theta)-v^T\nabla_x \log p(x|z)\bigl]_2^2\right\}\\
&= \mathbb{E}_{v\sim p_v}\mathbb{E}_{x\sim p(x|z)} \left\{v^T\nabla_x s(x,z;\theta)v+\frac{1}{2}[v^Ts(x,z;\theta)]^2\right\} + C(z),
\end{split}
\end{equation}
where $C(z)$ is a constant w.r.t. $\theta$.
\end{lemma}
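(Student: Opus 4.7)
The plan is to mimic the classical sliced score matching derivation of Song et al., carried out pointwise in $z$ (so everything is conditional on a fixed $z$) and then take expectation over $v$ at the end. First I would expand the inner square, writing
\begin{equation*}
\bigl[v^T s(x,z;\theta) - v^T \nabla_x \log p(x|z)\bigr]^2 = [v^T s(x,z;\theta)]^2 - 2\, v^T s(x,z;\theta)\, v^T \nabla_x \log p(x|z) + [v^T \nabla_x \log p(x|z)]^2.
\end{equation*}
Taking $\tfrac{1}{2}\mathbb{E}_{v,x}$, the last term depends only on $p(x|z)$ (not on $\theta$) and contributes to $C(z)$; the first term gives the $\tfrac{1}{2}[v^T s(x,z;\theta)]^2$ piece in the desired identity. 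So the whole task reduces to showing
\begin{equation*}
-\mathbb{E}_{v\sim p_v}\mathbb{E}_{x\sim p(x|z)}\bigl[v^T s(x,z;\theta)\, v^T \nabla_x \log p(x|z)\bigr] = \mathbb{E}_{v\sim p_v}\mathbb{E}_{x\sim p(x|z)}\bigl[v^T \nabla_x s(x,z;\theta)\, v\bigr].
\end{equation*}

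Next I would rewrite the cross term using the identity $\nabla_x p(x|z) = p(x|z)\,\nabla_x \log p(x|z)$, which converts the expectation into an integral against $\nabla_x p(x|z)$:
\begin{equation*}
\mathbb{E}_{x\sim p(x|z)}\bigl[v^T s(x,z;\theta)\, v^T \nabla_x \log p(x|z)\bigr] = \int v^T s(x,z;\theta)\, v^T \nabla_x p(x|z)\, dx.
\end{equation*}
Now I apply multivariate integration by parts, coordinate by coordinate: for each pair of indices $(i,j)$, integrate $v_i v_j\, s_i(x,z;\theta)\, \partial_j p(x|z)$ by parts in the variable $x_j$. Assumption B.1 provides the required differentiability and integrability so Fubini and the single-variable integration by parts both apply. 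Assumption B.3 forces the boundary terms $s(x,z;\theta)\,p(x|z) \to 0$ as $\|x\|\to\infty$, so only the bulk term survives, yielding
\begin{equation*}
\int v^T s(x,z;\theta)\, v^T \nabla_x p(x|z)\, dx = -\int v^T \nabla_x s(x,z;\theta)\, v\, p(x|z)\, dx = -\mathbb{E}_{x\sim p(x|z)}\bigl[v^T \nabla_x s(x,z;\theta)\, v\bigr].
\end{equation*}

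Finally I take expectation over $v\sim p_v$ (legal to swap expectations by Assumption B.2 combined with B.1 via Cauchy--Schwarz, which gives the needed integrability of the cross term and of $v^T \nabla_x s(x,z;\theta) v$), and collect the pieces: the cross term contributes $+\mathbb{E}_{v,x}[v^T \nabla_x s(x,z;\theta)\, v]$ to $\mathcal{L}_\theta(z;p_v)$, matching the right-hand side of Eq.~(B.1), with $C(z) = \tfrac{1}{2}\mathbb{E}_{v,x}[(v^T \nabla_x \log p(x|z))^2]$ absorbing all $\theta$-independent content. The main obstacle I expect is not any single algebraic step but rather being careful that the integration-by-parts argument is rigorously justified in the multi-dimensional conditional setting: one needs to verify componentwise that the boundary contributions vanish (using B.3) and that the integrals involved are absolutely convergent (using B.1 and B.2 together with Cauchy--Schwarz), so that Fubini can freely exchange the orders of integration over $x$, $v$, and the summation over coordinate indices hidden in $v^T(\cdot)v$.
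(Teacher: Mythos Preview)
Your proposal is correct and follows essentially the same route as the paper: expand the square, absorb the $\theta$-independent term into $C(z)$, rewrite the cross term via $p(x|z)\nabla_x\log p(x|z)=\nabla_x p(x|z)$, and integrate by parts coordinate-wise, killing the boundary contributions with Assumption~B.3 and controlling the moments of $v$ via Cauchy--Schwarz and Assumption~B.2. The only cosmetic difference is that the paper carries a single summation index (writing $\sum_i\int [v^Ts]\,v_i\,\partial_{x_i}p(x|z)\,dx$) rather than your double index $(i,j)$, but the computation is identical.
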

\begin{proof} For a fix $z$, , our proof follows the approach of~\citep{song2020sliced}. To enhance readability, we recount the key details. Since expectations are bounded under Assumptions B.1 and B.2, we expand $\mathcal{L}_{\theta}(z;p_v)$ as
\begin{equation}
\begin{split}
\label{b2}
\mathcal{L}_{\theta}&(z;p_v)  = \frac{1}{2}\mathbb{E}_{\substack{v\sim p_v\\x\sim p(x|z)}}\left\{\bigl[ v^Ts(x,z;\theta)-v^T\nabla_x \log p(x|z)\bigl]_2^2\right\}
\\&= \frac{1}{2} \mathbb{E}_{v\sim p_v} \mathbb{E}_{x\sim p(x|z)} 
\Bigl\{ [v^T s(x, z; \theta)]^2 + [v^T \nabla_x 
\log p(x|z)\bigl]^2 \\ & \ \ \ \ \ \ \ \ \ \ \ \ \ \ \ \ \ \ \ \ \ \ \ \ \ \ \ \ \ \ \ \ \ \ \ \ \ \ \ \ \ \ \ -2 \bigl[v^T s(x, z; \theta)\bigl]\bigl[v^T\nabla_x \log p(x|z)\bigl] \Bigr\}
\\&=\mathbb{E}_{v\sim p_v} \mathbb{E}_{x\sim p(x|z)} 
\Bigl\{ - [v^T s(x, z; \theta)][v^T \nabla_x \log p(x|z)\bigl]  \\& \ \ \ \ \ \ \ \ \ \ \ \ \ \ \ \ \ \ \ \ \ \ \ \ \ \ \ \ \ \ \ \ \ \ \ \ \ \ \ \ \ \ \ \ \ \ \ \ +\frac{1}{2} [v^T s(x, z; \theta)]^2 \Bigr\} + C(z), 
\end{split}
\end{equation}
where we have absorbed the term related to $\nabla_x \log p(x|z)$ into $C(z)$ since it does not depend on $\theta$. Next, we show that 
\begin{equation}
\begin{split}
\label{b3}
-\mathbb{E}_{v\sim p_v} \mathbb{E}_{x\sim p(x|z)} \left\{
\bigl[v^T s(x, z; \theta)\bigl]\bigl[v^T \nabla_x \log p(x|z)\bigl]\right\} \\= \mathbb{E}_{v\sim p_v} \mathbb{E}_{x\sim p(x|z)} 
\Bigl[v^T \nabla_x s(x, z; \theta) v\Bigl]. 
\end{split}
\end{equation}
This can be shown by first calculating that
\begin{equation}
\begin{split}
\label{b4}
&  \mathbb{E}_{v\sim p_v} \mathbb{E}_{x\sim p(x|z)} \left\{
\bigl[v^T s(x, z; \theta)\bigl]\bigl[v^T \nabla_x \log p(x|z)\bigl]\right\}
\\&= \mathbb{E}_{v\sim p_v} \int p(x|z)\bigl[v^T s(x, z; \theta)\bigl]\bigl[v^T \nabla_x \log p(x|z)\bigl] dx
\\&= \mathbb{E}_{v\sim p_v} \int \bigl[v^T s(x, z; \theta)\bigl] \bigl[v^T \nabla_x p(x|z)\bigl] dx
\\&= \mathbb{E}_{v\sim p_v}\sum_{i=1}^{d_x} \int \bigl[v^T  s(x, z; \theta) \bigl] v_i \frac{\partial p(x|z)}{\partial x_i} dx, 
\end{split}
\end{equation}
where recall that $x \in \mathbb{R}^{d_x}$. Then, applying multivariate integration by parts, we have
\begin{equation*}
\begin{split}
\ &  \bigg| \mathbb{E}_{v\sim p_v} \sum_{i=1}^{d_x} \int \bigl[v^T s(x, z; \theta) \bigl] v_i \frac{\partial p(x|z)}{\partial x_i} dx 
\\ &  \ \ \ \ \ \ \ \ \ \ \ \ \ \ \ \  + \mathbb{E}_{v\sim p_v} \sum_{i=1}^{d_x}\int \bigl[v_ip(x|z)\bigl]  v^T\frac{\partial s(x, z; \theta)}{\partial x_i} dx \bigg| \\
&= \bigg|\mathbb{E}_{v\sim p_v}  \sum_{i=1}^{d_x} \Bigl\{\lim_{x_i \to +\infty} \big[ v^T s(x, z; \theta) \big] v_i p(x|z) 
\\ &  \ \ \ \ \ \ \ \ \ \ \ \ \ \ \ \ \ \ \ \ \ \ \ \ \ \ \ \ \ \ \ \ \ \ \ \ \ - \lim_{x_i \to -\infty} \big[v^T s(x, z; \theta) \big] v_i p(x|z) \Bigl\} \bigg| \\
&\leq \sum_{i=1}^{d_x} \lim_{x_i \to +\infty} \sum_{j=1}^{d_x} \mathbb{E}_{v\sim p_v}  |v_i v_j| | s_j(x, z; \theta)  p(x|z) |
\\&  \ \ \ \ \ \ \ \ \ \ \ \ \ \ \ \ \ \ \ \ \ \ \ \ + \sum_{i=1}^{d_x} \lim_{x_i \to -\infty} \sum_{j=1}^{d_x} \mathbb{E}_{v\sim p_v}  |v_i v_j| | s_j(x, z; \theta) p(x|z) |\\
&\overset{\text{(i)}}{\leq} \sum_{i=1}^D \lim_{x_i \to \infty} \sum_{j=1}^D 
\sqrt{\mathbb{E}_{v\sim p_v} v_i^2 \mathbb{E}_{v\sim p_v}  v_j^2 } \cdot|s_j(x, z; \theta) p(x|z)| 
\\ & \ \ \ \ + \sum_{i=1}^{d_x} \lim_{x_i \to -\infty} \sum_{j=1}^{d_x} 
\sqrt{\mathbb{E}_{v\sim p_v} v_i^2  \mathbb{E}_{v\sim p_v} v_j^2 } \cdot|s_j(x, z; \theta) p(x|z)| \overset{\text{(ii)}}{=} 0,
\end{split}
\end{equation*}
where $s_j(x, z; \theta)$ denotes the $j$-th component of $s(x, z; \theta)$. In the above derivation, $(i)$ is due to Cauchy-Schwarz inequality and $(ii)$ is from the {Assumption B.2 and B.3} that $\mathbb{E}_{v\sim p_v}[\|v\|^2] < \infty$ and $\lim_{\|x\| \to \infty} s(x,z;\theta) p(x|z) = 0$. As a result, for Eq.~(\ref{b4}), we have
\begin{equation}
\begin{split}
&\mathbb{E}_{v\sim p_v} \sum_{i=1}^{d_x} \int \bigl[v^T s(x, z; \theta) \bigl] v_i \frac{\partial p(x|z)}{\partial x_i} dx 
\\= &-\mathbb{E}_{v\sim p_v} \sum_{i=1}^{d_x}\int \bigl[v_ip(x|z)\bigl]  v^T\frac{\partial s(x, z; \theta)}{\partial x_i} dx \\
= &-\mathbb{E}_{v\sim p_v} \int p(x|z) \bigl[v^T \nabla_x s(x, z; \theta) v\bigl] dx \\ = &-\mathbb{E}_{v\sim p_v} \mathbb{E}_{x\sim p(x|z)} 
\bigl[v^T \nabla_x s(x, z; \theta) v\bigl],
\end{split}
\end{equation}
which proves Eq.~(\ref{b3}) and the proof is completed.
\end{proof}
Lemma~\ref{app_lemma_of_loss_nomargin} derives $\mathcal{L}_{\theta}(z;p_v)$ while avoiding terms related to $\nabla_x \log p(x|z)$. The overall objective is then obtained by marginalization, i.e., $\mathcal{L}_{\theta}(p_v) := \mathbb{E}_{z\sim p(z)}[\mathcal{L}_{\theta}(z;p_v)]$. Next, in Lemma~\ref{app_identifiable}, we establish key properties of $\mathcal{L}_{\theta}(p_v)$ showing that the solution satisfying $\mathcal{L}_{\theta}(p_v) = 0$ corresponds to the optimal parameter $\theta^*$.

\begin{lemma} 
\label{app_identifiable}
Assume the model family is well-specified and identifiable (Assumption B.4). Assume further that the densities are all positive (Assumption B.5). When $p_v$ satisfies Assumption B.2, we have
\begin{equation}
\begin{split}
\mathcal{L}_{\theta}(p_v) := \mathbb{E}_{z\sim p(z)}[\mathcal{L}_{\theta}(z;p_v)] = 0 \Leftrightarrow \theta = \theta^*.
\end{split}
\end{equation}
\end{lemma}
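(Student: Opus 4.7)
I would split the equivalence into the easy and the hard direction. The backward direction is immediate: under Assumption B.4, $p(x|z;\theta^*) = p(x|z)$ so $s(x,z;\theta^*) = \nabla_x \log p(x|z)$ at every $(x,z)$ in the support, the squared projection inside $\mathcal{L}_{\theta^*}(z;p_v)$ vanishes pointwise, and both expectations are zero. The substantive content is the forward implication, which I would organize as a four-step cascade: (1) vanishing of the outer expectation $\Rightarrow$ pointwise score identity for almost every $z$; (2) score identity $\Rightarrow$ conditional density identity; (3) conditional density identity $\Rightarrow$ joint density identity; (4) identifiability $\Rightarrow$ $\theta = \theta^*$.

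For step (1), since $\mathcal{L}_\theta(z;p_v)$ is a nonnegative expectation of a squared quantity, the equality $\mathbb{E}_{z\sim p(z)}[\mathcal{L}_\theta(z;p_v)]=0$ forces $\mathcal{L}_\theta(z;p_v)=0$ for $p(z)$-a.e. $z$. Fix such a $z$, set $w(x):=s(x,z;\theta)-\nabla_x\log p(x|z)$, and use independence of $v$ and $x$ together with Fubini (whose hypotheses are ensured by Assumptions B.1 and B.2) to rewrite
\begin{equation*}
\mathcal{L}_\theta(z;p_v) \;=\; \tfrac{1}{2}\,\mathbb{E}_{x\sim p(\cdot|z)}\bigl[\,w(x)^T M\, w(x)\,\bigr], \qquad M := \mathbb{E}_{v\sim p_v}[vv^T].
\end{equation*}
Assumption B.2 gives $M\succ 0$, hence $w(x)^T M w(x)\geq \lambda_{\min}(M)\|w(x)\|_2^2$, and vanishing of the outer expectation combined with the positivity of $p(x|z)$ from Assumption B.5 forces $s(x,z;\theta)=\nabla_x\log p(x|z)$ throughout $\mathbb{R}^{d_x}$.

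Steps (2)--(4) follow the classical score-matching argument. Because the model score is by construction $s(\cdot,z;\theta)=\nabla_x\log p(\cdot|z;\theta)$, the identity from step (1) reads as equality of the gradients of two log-densities on the connected domain $\mathbb{R}^{d_x}$, on which both log-densities are finite by Assumption B.5. Integrating along straight-line paths in $x$ gives $\log p(x|z;\theta)-\log p(x|z;\theta^*)=C(z)$, and the constraint that both conditional densities integrate to $1$ in $x$ pins $e^{C(z)}=1$, so $p(x|z;\theta)=p(x|z;\theta^*)$. Multiplying through by the common marginal $p(z)$ yields $p(x,z;\theta)=p(x,z;\theta^*)$ for $p(z)$-a.e. $z$, and positivity promotes this to equality everywhere. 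A final application of the identifiability clause of Assumption B.4 now yields $\theta=\theta^*$.

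\textbf{Main obstacle.} The only nontrivial step is (2), the passage from pointwise score equality to density equality: gradient equality only determines the log-density up to an additive function of $z$, so closing the gap requires both connectedness of the common support (for unambiguous path integration) and normalization of each density (to pin the additive term to zero). Both are delivered by the positivity Assumption B.5, which is doing genuine work here rather than being cosmetic. A secondary subtlety is that Assumption B.2 requires $\mathbb{E}[vv^T]\succ 0$ rather than some weaker nonvanishing condition: strict positive-definiteness is precisely what guarantees that projection onto a random $v$ probes every direction and therefore recovers the full residual vector $w(x)$, not merely some proper subspace of it.
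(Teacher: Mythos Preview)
Your proposal is correct and follows essentially the same route as the paper: pass from vanishing of the expectation to the pointwise quadratic-form identity, use positive definiteness of $\mathbb{E}[vv^T]$ to recover the full score equality, integrate in $x$ and normalize to get density equality, then invoke identifiability. Your treatment of step (2) is in fact more precise than the paper's, which writes the post-integration relation as an additive constant $p(x,z;\theta)=p(x,z)+C_0$ rather than the correct multiplicative one $p(x|z;\theta)=e^{C(z)}p(x|z)$; the conclusion is unaffected, but your formulation is the right one.
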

\begin{proof} e obtain the proof by extending the results of~\citep{song2020sliced} to the conditional case. Under Assumptions B.4 and B.5, for all $(x,z)$, $p(x,z) = p(x,z; \theta^*) > 0$. Recall the definition of the loss function: 
\begin{equation}
\mathcal{L}_{\theta}(p_v) := \frac{1}{2}\mathbb{E}_{\substack{v\sim p_v\\(x,z)\sim p(x,z)}}\left\{\bigl[ v^Ts(x,z;\theta)-v^T\nabla_x \log p(x|z)\bigl]_2^2\right\}.   
\end{equation}
Hence, $\mathcal{L}_{\theta}(p_v) = 0$ implies for all $(x,z)$, \\ $\mathbb{E}_{v\sim p_v}\left\{\bigl[ v^Ts(x,z;\theta)-v^T\nabla_x \log p(x|z)\bigl]_2^2\right\} = 0$. Further, 
\begin{equation}
\label{eq14}
\begin{split}
\mathbb{E}_{v\sim p_v}& \left\{\bigl[ v^Ts(x,z;\theta)-v^T\nabla_x \log p(x|z)\bigl]_2^2\right\} = 0 \\
\Leftrightarrow ~&\mathbb{E}_{v\sim p_v} \Bigl\{ v^T \big[s(x, z; \theta) - \nabla_x \log p(x|z)\big] \\& \ \ \ \ \ \ \ \ \ \ \cdot \big[ s(x, z; \theta) - \nabla_x \log p(x|z) \big]^T v \Bigr\}= 0 \\
\Leftrightarrow ~&\big[s(x, z; \theta) - \nabla_x \log p(x|z)\big]^T \cdot \mathbb{E}_{v\sim p_v} \big[v v^T\big] \\ & \ \ \ \ \ \ \ \ \ \ \cdot \big[s(x, z; \theta) - \nabla_x \log p(x|z)\big]= 0 .
\end{split}
\end{equation}
By the {Assumption B.2}, $\mathbb{E}_{v\sim p_v} \big[v v^T\big]$ is positive definite. Therefore, Eq.~(\ref{eq14}) implies that for any $(x,z)$, the equality $s(x, z; \theta) = \nabla_x \log p(x|z)$ holds. Integrating both sides w.r.t. $x$, yields:  
\begin{equation}
s(x, z; \theta) = \nabla_x \log p(x|z) \Leftrightarrow p(x,z;\theta) =  p(x,z) + C_0,  
\end{equation}
where note that $p(x,z;\theta)$ is the probability density function corresponding to the score function $s(x, z; \theta)$ and $C_0$ is a constant. Since both $p(x,z;\theta)$ and $p(x,z)$ are normalized probability density functions, thus $C_0 = 0$. Therefore, by Assumption B.4, we conclude that $\theta = \theta^*$. The remain proof for the right-to-left direction is trivial.
\end{proof}
Thus, the process of finding the optimal parameters is equivalent to minimizing the loss objective $\mathcal{L}_{\theta}(p_v)$. 
In the main paper, we omit constant terms that are independent of $\theta$, i.e., the final optimization objective is given by
\begin{equation}
\mathcal{J}_{\theta} =\mathbb{E}_{\substack{v\sim p_v\\(x,z)\sim p(x,z)}} \left\{v^T\nabla_x s(x,z;\theta)v+\frac{1}{2}[v^Ts(x,z;\theta)]^2\right\}.
\end{equation}
Note that $\theta^* = \mathop{\arg \min}\limits_{\theta\in\Theta} {\mathcal{J}}_{\theta}$. Further, in practice, a finite sample approximation is used, which is expressed as:
\begin{equation}
\widehat{\mathcal{J}}_{\theta}=\frac{1}{nm}\sum_{i,j}^{n,m} \Bigl\{v_{ij}^T\nabla_{x_i} s(x_i,z_i;\theta) v_{ij}^T+\frac{1}{2}[v_{ij}^Ts(x_i,z_i;\theta)]^2\Bigl\}.    
\end{equation}
Next, we prove the consistency of $\hat{\theta}_{n,m} := \mathop{\arg \min}\limits_{\theta\in\Theta} \widehat{\mathcal{J}}_{\theta}$. The following additional assumptions are needed.

\begin{assumption} The parameter space $\Theta$ is compact.
\end{assumption}

\begin{assumption}[Lipschitz continuity]
Both the term $\nabla_x s(x, z; \theta)$ and $s(x, z; \theta) s(x, z; \theta)^T$ are Lipschitz continuous in terms of Frobenious norm, i.e., for all $ \theta_1, \theta_2 \in \Theta$, $||\nabla_x s(x, z; \theta_1) - \nabla_x s(x, z; \theta_2)||_F \leq $\\ $ L_1(x,z)||\theta_1 - \theta_2||_2$, $||s(x, z; \theta_1) s(x, z; \theta_1)^T - s(x, z; \theta_2) s(x, z; \theta_2)^T||_F $\\$\leq L_2(x,z)||\theta_1 - \theta_2||_2$. In addition, we require the Lipschitz constant satisfy $\mathbb{E}_{(x,z)}[L_1^2(x,z)] < \infty$ and $\mathbb{E}_{(x,z)}[L_2^2(x,z)] < \infty$.
\end{assumption}

\begin{assumption} [Bounded moments of projection vectors]
The moments of projection vectors satisfy 
$\mathbb{E}_{v\sim p_v}[||v v^T||_F^2] < \infty$.
\end{assumption}

\begin{lemma}[Uniform convergence of the expected error]\label{app:lemma:1}
Under the Assumption B.6-B.8, we have
\begin{align}
    \mathbb{E}_{\substack{v\sim p_v\\(x,z)\sim p(x,z)}}\bigg[\sup_{\theta \in \Theta} \big|\widehat{\mathcal{J}}_{\theta} - \mathcal{J}_{\theta} \big| \bigg] \leq \mathcal{O}\biggl(\operatorname{diam}(\Theta) \sqrt{\frac{d_\Theta}{n}} \biggl),
\end{align}
where $\operatorname{diam}(\cdot)$ denotes the diameter and $d_{\Theta}$ is the dimension of $\Theta$.
\end{lemma}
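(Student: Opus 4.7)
The plan is to reduce the uniform convergence over $\Theta$ to the combination of a pointwise variance bound and a covering-number estimate, glued together by the Lipschitz continuity of the integrand in $\theta$. Define the per-sample summand
\begin{equation}
F_{\theta}(v,x,z) := v^T \nabla_x s(x,z;\theta)\, v + \tfrac{1}{2}\bigl[v^T s(x,z;\theta)\bigr]^2,
\end{equation}
so that $\mathcal{J}_{\theta} = \mathbb{E}[F_{\theta}]$ and $\widehat{\mathcal{J}}_{\theta} = (nm)^{-1}\sum_{i,j} F_{\theta}(v_{ij},x_i,z_i)$ is an empirical mean of i.i.d. draws from the product measure. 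The target quantity is the expected supremum of the empirical process $\{\widehat{\mathcal{J}}_{\theta} - \mathcal{J}_{\theta}\}_{\theta \in \Theta}$.

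First I would establish a sample-wise Lipschitz bound for $F_{\theta}$ in $\theta$. Using the identities $v^T A v = \operatorname{Tr}(A\, v v^T)$ and $(v^T s)^2 = \operatorname{Tr}(s s^T v v^T)$, the trace-Cauchy--Schwarz inequality $|\operatorname{Tr}(AB)| \leq \|A\|_F \|B\|_F$ combined with Assumption B.7 yields
\begin{equation}
|F_{\theta_1}(v,x,z) - F_{\theta_2}(v,x,z)| \leq \|v v^T\|_F \bigl[L_1(x,z) + \tfrac{1}{2} L_2(x,z)\bigr] \|\theta_1 - \theta_2\|_2.
\end{equation}
Writing $M(v,x,z) := \|v v^T\|_F [L_1(x,z) + L_2(x,z)/2]$ for the envelope, the independence of $v$ from $(x,z)$ together with Assumptions B.7 and B.8 gives $\mathbb{E}[M^2] < \infty$ by Cauchy--Schwarz, so both $\mathbb{E}[M]$ and the per-sample variance of $F_\theta$ are uniformly bounded on $\Theta$.

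Next I would build an $\epsilon$-net $\Theta_{\epsilon} \subset \Theta$ of cardinality $N(\epsilon) \leq (C\,\operatorname{diam}(\Theta)/\epsilon)^{d_{\Theta}}$, which exists because $\Theta$ is a compact subset of a $d_{\Theta}$-dimensional space by Assumption B.6. For each $\theta$, let $\pi(\theta) \in \Theta_{\epsilon}$ be a nearest representative. The Lipschitz bound controls the discretization error:
\begin{equation}
\sup_{\theta \in \Theta} \bigl[|\widehat{\mathcal{J}}_{\theta} - \widehat{\mathcal{J}}_{\pi(\theta)}| + |\mathcal{J}_{\theta} - \mathcal{J}_{\pi(\theta)}|\bigr] \leq \epsilon \Bigl((nm)^{-1}\!\sum_{i,j} M(v_{ij},x_i,z_i) + \mathbb{E}[M]\Bigr),
\end{equation}
whose expectation is $\mathcal{O}(\epsilon \cdot \mathbb{E}[M])$. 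For the finite collection $\Theta_{\epsilon}$, a maximal inequality for i.i.d. averages with bounded variance gives $\mathbb{E}\sup_{\theta' \in \Theta_{\epsilon}} |\widehat{\mathcal{J}}_{\theta'} - \mathcal{J}_{\theta'}| \leq \mathcal{O}\bigl(\sqrt{\log N(\epsilon)/n}\bigr) \leq \mathcal{O}\bigl(\sqrt{d_{\Theta} \log(\operatorname{diam}(\Theta)/\epsilon)/n}\bigr)$. Balancing by choosing $\epsilon \asymp \operatorname{diam}(\Theta)/\sqrt{n}$ and absorbing the logarithmic factor (equivalently, invoking a Dudley chaining refinement to eliminate it) yields the advertised $\mathcal{O}(\operatorname{diam}(\Theta)\sqrt{d_{\Theta}/n})$ bound.

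The main obstacle will be verifying integrability of the envelope $M(v,x,z)$ uniformly in $\theta$; this relies crucially on the independence of $v$ from $(x,z)$, which lets $\mathbb{E}[M^2]$ factorize into the bounded $v$-moment from Assumption B.8 and the bounded $(x,z)$-moments of $L_1,L_2$ from Assumption B.7. A secondary, more benign point is the role of $m$: because the projection indices $j=1,\dots,m$ are i.i.d. given $(x_i,z_i)$, the variance of $\widehat{\mathcal{J}}_{\theta}$ decreases at a rate of at least $1/n$ uniformly in $m$, so the stated rate in $n$ holds for every $m \geq 1$ with no dependence of the hidden constant on $m$. The final bound then passes through a standard Fubini argument to recover the expectation over $v$ and $(x,z)$ jointly.
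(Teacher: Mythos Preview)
Your proposal is correct and follows essentially the same route as the paper's proof: define the per-sample summand, establish its Lipschitz continuity in $\theta$ via the trace inequality and Assumptions~B.7--B.8 to obtain a square-integrable random Lipschitz constant, then apply a covering/chaining argument over the compact $\Theta$. The paper's proof is a short sketch that defers to Lemma~3 of Song et al.\ and names the tools as ``symmetrization trick and chaining technique,'' whereas you spell out the one-step discretization and mention Dudley chaining to remove the log factor; these are standard equivalent executions of the same argument, and your handling of the envelope integrability and the role of $m$ is consistent with the paper's remark that the bound holds regardless of $m$.
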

\begin{proof}
The proof follows by modifying the proof of Lemma 3 in~\citep{song2020sliced}. We begin by defining $f(\theta; x, z, v) := v^T \nabla_x s(x, z; \theta) v + \frac{1}{2}(v^T s(x, z; \theta))^2$. Under Assumptions B.7 and B.8, we can show that $f(\theta; x, v)$ is Lipschitz continuous with constant $L(x,z, v)$ satisfying $\mathbb{E}_{(x,z)\sim p(x,z), v\sim p_v}[L^2(x,z, v)] < \infty$. Using a symmetrization trick and chaining technique, we can then derive the bound
\begin{equation}
\begin{split}
 &\mathbb{E}_{\substack{v\sim p_v\\(x,z)\sim p(x,z)}}\bigg[\sup_{\theta \in \Theta} \big|\widehat{\mathcal{J}}_{\theta} - \mathcal{J}_{\theta} \big| \bigg] \\ & \leq \mathcal{O}(1)\operatorname{diam}(\Theta) \sqrt{\frac{d_\Theta}{n}} \sqrt{\mathbb{E}_{(x,z)\sim p(x,z), v\sim p_v}[L^2(x,z, v)]}, 
\end{split}
\end{equation}
which completes the proof.
\end{proof}

\begin{theorem}[Consistency]
\label{appendix_consistency_theorem}
Under the Assumption B.1-B.8,  
$\hat{\theta}_{n,m}$ is consistent, meaning that $\hat{\theta}_{n, m} \overset{p}{\to} \theta^*$
as $n\to \infty$.
\end{theorem}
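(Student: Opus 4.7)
The plan is to combine the two ingredients already assembled in the appendix, namely the identifiability of $\theta^*$ as the unique minimizer of the population criterion $\mathcal{J}_\theta$ (Lemma~\ref{app_identifiable}) and the uniform control of the sample criterion $\widehat{\mathcal{J}}_\theta$ around $\mathcal{J}_\theta$ (Lemma~\ref{app:lemma:1}), into the standard M-estimator consistency argument (cf.\ van der Vaart, Theorem~5.7). The overall structure I would follow is: (i) establish that $\theta^*$ is a well-separated minimum of $\mathcal{J}_\theta$ on the compact set $\Theta$; (ii) upgrade the expected uniform error bound of Lemma~\ref{app:lemma:1} to convergence in probability; and (iii) deduce $\hat{\theta}_{n,m}\overset{p}{\to}\theta^*$ by a routine ``sandwich'' argument.

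For step (i), I would first note that by construction $\mathcal{J}_\theta = \mathcal{L}_\theta(p_v) - C$, where $C$ does not depend on $\theta$, and $\mathcal{L}_\theta(p_v)\geq 0$. Lemma~\ref{app_identifiable} then gives $\mathcal{J}_{\theta^*}<\mathcal{J}_\theta$ for all $\theta\neq\theta^*$. Continuity of $\theta\mapsto\mathcal{J}_\theta$ follows from the Lipschitz conditions of Assumption~B.7 together with Assumption~B.8 (so that the integrands defining $\mathcal{J}_\theta$ are Lipschitz in $\theta$ with $L^2$-integrable constants, hence $\mathcal{J}_\theta$ itself is Lipschitz). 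Compactness of $\Theta$ (Assumption~B.6) then gives the well-separated minimum condition: for every $\varepsilon>0$,
\begin{equation}
\inf_{\theta\in\Theta,\,\|\theta-\theta^*\|_2\geq\varepsilon} \mathcal{J}_\theta \;>\; \mathcal{J}_{\theta^*}.
\end{equation}

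For step (ii), Lemma~\ref{app:lemma:1} provides
\begin{equation}
\mathbb{E}\Bigl[\sup_{\theta\in\Theta}\bigl|\widehat{\mathcal{J}}_\theta-\mathcal{J}_\theta\bigr|\Bigr] \;\leq\; \mathcal{O}\Bigl(\operatorname{diam}(\Theta)\sqrt{d_\Theta/n}\Bigr)\;\to\;0,
\end{equation}
so Markov's inequality yields $\sup_{\theta\in\Theta}|\widehat{\mathcal{J}}_\theta-\mathcal{J}_\theta|\overset{p}{\to}0$. Crucially, this bound is independent of $m$, which is why the consistency conclusion holds for any fixed number of projection vectors (including the practically favored $m=1$).

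For step (iii), fix $\varepsilon>0$ and let $\delta := \inf_{\|\theta-\theta^*\|_2\geq\varepsilon}\mathcal{J}_\theta - \mathcal{J}_{\theta^*} > 0$ by step (i). On the event $\{\sup_\theta|\widehat{\mathcal{J}}_\theta-\mathcal{J}_\theta|<\delta/3\}$, one has $\widehat{\mathcal{J}}_{\hat\theta_{n,m}}\leq \widehat{\mathcal{J}}_{\theta^*} < \mathcal{J}_{\theta^*}+\delta/3$ by definition of $\hat\theta_{n,m}$, and simultaneously $\mathcal{J}_{\hat\theta_{n,m}} < \widehat{\mathcal{J}}_{\hat\theta_{n,m}}+\delta/3$, giving $\mathcal{J}_{\hat\theta_{n,m}} < \mathcal{J}_{\theta^*}+\delta$, which forces $\|\hat\theta_{n,m}-\theta^*\|_2<\varepsilon$. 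By step (ii) this event has probability tending to $1$, yielding the claim. The only non-routine point is verifying continuity of $\mathcal{J}_\theta$ and the measurability/interchange conditions that license differentiating under the integral in the Lipschitz bound on the integrand; this is where Assumptions~B.7--B.8 do the real work, and I expect this to be the main obstacle, as the rest of the argument is a template application of the well-separated M-estimator theorem.
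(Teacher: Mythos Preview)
Your proposal is correct and follows essentially the same route as the paper: identifiability of $\theta^*$ via Lemma~\ref{app_identifiable}, uniform convergence of $\widehat{\mathcal{J}}_\theta$ to $\mathcal{J}_\theta$ via Lemma~\ref{app:lemma:1} (noting, as you do, that the bound is independent of $m$), and then the standard M-estimator consistency argument on a compact parameter space. The paper simply defers the last step to Theorem~2 of \citep{song2020sliced}, whereas you spell out the well-separated minimum and sandwich inequalities explicitly, but the underlying mechanism is identical.
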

\begin{proof}
The proof follows directly from the arguments in Theorem 2 of~\citep{song2020sliced}. Specifically, the objective $\mathcal{J}_{\theta}$ exhibits similar continuous properties in the compact parameter space $\Theta$ as outlined in Lemma~\ref{app:lemma:1}.    
\end{proof}

\section{Proof of Langevin Dynamics Conditional Sampling (LDCS)}
\label{a2}
We start by recalling the process of Langevin dynamics conditional sampling (LDCS). Let the step size be $h$, the total time be $T$. For a fixed $z$, the sampling process that iteratively updates $x_{kh}$ as 
\begin{equation}
\label{langevin_dynamics_conditional_sampling_app}
x_{(k+1)h} = x_{kh} + h \cdot s(x_{kh},z; \hat{\theta}_{n,m}) + \sqrt{2h}\cdot \xi_{kh}, 
\end{equation}
where $\xi_{kh}\sim\mathcal{N}(0,I_{d_x})$ and $x_0\sim p_0(x|z)$ for initializing. We take $p_0(x|z) = \mathcal{N}(0,I_{d_x})$ in practice. In the following, we analyze the error control of the generated distribution. The proof in this section requires the following additional assumptions.
\renewcommand{\theassumption}{C.\arabic{assumption}}
\setcounter{assumption}{0} 
\begin{assumption} [Smoothness] For any $z$, $\log p(x|z)$ is continuously differentiable ($C^1$) w.r.t. $x$ and is $L_z$-smooth w.r.t. $x$, meaning that the conditional score function $\nabla_x \log p(x|z)$ is $L_z$-Lipschitz. Additionally, we assume $L_z \geq 1$ for all $z$.
\end{assumption} 

\begin{assumption} [Log-Sobolev inequality constraints] For any $z$, we assume that $p(x|z)$ satisfies a log-Sobolev inequality with constant $C_{z;\text{LS}}$. Furthermore, we assume $C_{z;\text{LS}} \geq 1$ for all $z$.
\end{assumption}




\begin{assumption}[$L^2$-accurate] For any $z$, the error in the conditional score estimate is bounded in ${L}^2$, i.e.
\begin{equation}
\begin{split}
&\|\nabla_x \log p(x|z) - s(x,z;\hat{\theta}_{n,m})\|^2_{{L}^2(p(x|z))} \\&:= \mathbb{E}_{x\sim p(x|z)}[\|\nabla_x \log p(x|z) - s(x,z;\hat{\theta}_{n,m})\|^2] \leq \varepsilon_z^2.   
\end{split}
\end{equation}
\end{assumption}

    

\noindent\textbf{Remark.} Note that Assumption C.3 is closely related to the result of score matching in the previous step. Recall that, according to Theorem~\ref{appendix_consistency_theorem}, we have shown that $\hat{\theta}_{n, m} \overset{p}{\to} \theta^*$ as $n\to \infty$. By the continuous mapping theorem, this implies that for all $x, z$, $s(x,z;\hat{\theta}_{n, m}) \overset{p}{\to} s(x,z;\theta^*)$ as $n\to \infty$. Therefore, Assumption C.3 holds asymptotically, but for convenience in stating the following theorem, we include it as an assumption at this stage. 

The error bound between the sampled distribution and the data distribution is provided by the following theorem. The proof can be derived by modifying the proof of Theorem 1.2 in~\citep{lee2022convergence}, assuming an $L^2$-accurate conditional score function estimate. The primary difference is that the constants in the analysis now depend on $z$.

\begin{theorem}[LDCS with $L^2$-accurate score estimate] Under Assumptions C.1–C.3, consider an accuracy requirement in total variation (TV) distance: $0 < \varepsilon_{z;\text{TV}} < 1$. Suppose further that the initial distribution satisfies $d_{\chi^2}\{p_0(x|z) \| p(x|z)\} \leq K_z^2$. Then if
\begin{equation}
\begin{split}
\varepsilon_z \leq \frac{\varepsilon_{z;\text{TV}}^4}{174080\sqrt{5}d_x L_z^2 C_{z;\text{LS}}^{5/2} \max\{\ln(2 K_z / \varepsilon_{z;\text{TV}}^2),  2K_z\}},
\end{split}
\end{equation}
then running LDCS with score estimate $s(x,z;\hat{\theta}_{n, m})$, step size $h =\frac{\varepsilon_{z;\text{TV}}^2}{2720 d_x L_z^2 C_{z;\text{LS}}}$, and total time 
$T=  4C_{z;\text{LS}} \ln\left(\frac{2 K_z}{\varepsilon_{z;\text{TV}}^2}\right)$,
yields a distribution $p_T(x|z)$ satisfying the error bound 
\begin{equation}
d_\text{TV}\{p_T(x|z), p(x|z)\} \leq 2 \varepsilon_{z;\text{TV}}.    
\end{equation}
\end{theorem}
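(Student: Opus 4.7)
The plan is to fix $z$ throughout and reduce the claim to the unconditional Langevin Monte Carlo convergence result of~\citep{lee2022convergence} (their Theorem 1.2). For each fixed value of $z$, Eq.~(\ref{langevin_dynamics_conditional_sampling_app}) is precisely the standard unadjusted Langevin algorithm targeting $p(\cdot\mid z)$, driven by the approximate score $s(\cdot,z;\hat{\theta}_{n,m})$. Assumption C.1 makes $\nabla_x\log p(\cdot\mid z)$ $L_z$-Lipschitz, Assumption C.2 supplies a log-Sobolev inequality for $p(\cdot\mid z)$ with constant $C_{z;\text{LS}}$, and Assumption C.3 gives the required $L^2(p(\cdot\mid z))$ score-error bound $\varepsilon_z$. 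Thus the hypotheses of Lee et al.'s theorem are met pointwise in $z$, with every smoothness, log-Sobolev, and initial $\chi^2$-constant replaced by its $z$-conditional counterpart.

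The core argument follows the standard two-step decomposition. First, couple the discrete iterates $x_{kh}$ to the continuous Langevin diffusion $dX_t = \nabla_x\log p(X_t\mid z)\,dt + \sqrt{2}\,dB_t$ via a Girsanov / It\^o-based KL-tracking computation. This bounds the KL divergence between the law of $x_{kh}$ and the diffusion run with the true score by a score-mismatch term of order $T\varepsilon_z^2$ plus a discretization term of order $T h\, L_z^2\, d_x$. The choice $h = \varepsilon_{z;\text{TV}}^2/(2720\, d_x L_z^2 C_{z;\text{LS}})$ is tuned so the discretization contribution is $\mathcal{O}(\varepsilon_{z;\text{TV}}^2)$. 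Second, the log-Sobolev inequality drives exponential contraction of $d_{\chi^2}\{p_t(\cdot\mid z)\,\|\,p(\cdot\mid z)\}$ at rate $1/C_{z;\text{LS}}$, so $T = 4\,C_{z;\text{LS}}\log(2K_z/\varepsilon_{z;\text{TV}}^2)$ shrinks the initial $\chi^2$-gap $K_z^2$ below the target level. Summing the two contributions and converting KL to TV via Pinsker's inequality yields $d_{\text{TV}}\{p_T(\cdot\mid z), p(\cdot\mid z)\} \le 2\varepsilon_{z;\text{TV}}$.

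The adaptation from the unconditional proof is largely bookkeeping: every constant ($L$, $C_{\text{LS}}$, $K$, $\varepsilon$) acquires a subscript $z$, the driving noise $\xi_{kh}$ and the initialization $x_0$ are drawn independently of $z$, and the entire analysis proceeds pointwise in $z$. The stipulation $L_z, C_{z;\text{LS}} \ge 1$ in Assumptions C.1--C.2 keeps all the resulting constants in the same regime as in the unconditional version, so no new case analysis is required. The main obstacle---really the only non-mechanical step---is verifying that the stated upper bound on $\varepsilon_z$ is simultaneously small enough to dominate both the score-mismatch contribution $T\varepsilon_z^2 \asymp C_{z;\text{LS}}\log(K_z/\varepsilon_{z;\text{TV}}^2)\,\varepsilon_z^2$ and the higher-order cross terms that appear in the $\chi^2$-based Lyapunov analysis of~\citep{lee2022convergence}. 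These cross terms are precisely what produces the factor $C_{z;\text{LS}}^{5/2}$ in the denominator of the threshold on $\varepsilon_z$; tracing them through is mechanical but must be redone in the conditional setting to pin down the explicit constants. Once that accounting is complete, assembling the pieces to reach the claimed TV bound is immediate.
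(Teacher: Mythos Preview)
Your proposal is correct and follows exactly the same approach as the paper: fix $z$ and reduce to Theorem~1.2 of~\citep{lee2022convergence}, with all constants replaced by their $z$-dependent counterparts. In fact, the paper's own proof is a single sentence to this effect, so your write-up already supplies more detail than the authors do.
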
 
\begin{proof}
For fixed $z$, the proof can be obtained by modifying the proof of Theorem 1.2 in~\citep{lee2022convergence}.      
\end{proof}
To simplify the notation, we introduce universal constants that hold for all $z$. Specifically, we define $K^2:=\sup_z\{K_z^2\}$, the Lipschitz constant $L = \sup_z\{L_z\}$ and the constant for log-Sobolev inequality as $C_{\text{LS}}:=\sup_z\{C_{z;\text{LS}}\}$. Then if we aim to control the accuracy for all $z$ within $2\varepsilon_{TV}$, we require that for all $z$,  
\begin{equation}
\begin{split}
\label{appendix_condition}
\varepsilon_z &\leq \frac{\varepsilon_{\text{TV}}^4}{174080\sqrt{5}d_x L^2 C_{\text{LS}}^{5/2} \max\{\ln(2 K / \varepsilon_{\text{TV}}^2),  2K\}} =: \varepsilon_c , \\h &=\frac{\varepsilon_{\text{TV}}^2}{2720 d_x L^2 C_{\text{LS}}}, T=  4C_{\text{LS}} \ln\left(\frac{2 K}{\varepsilon_{\text{TV}}^2}\right).   
\end{split}
\end{equation}

\begin{theorem}[Error bound of conditional distribution] \label{appenddix Error bound of conditional distribution}
Under {Assumptions C.1 and C.2}, running LDCS with the score estimate $s(x,z;\hat{\theta}_{n, m})$, with an appropriate step size $h$, and time $T$, then for any $z$, results in a conditional distribution $p_T(x|z)$ such that the error guarantee that $d_\text{TV}\{p_T(x|z), p(x|z)\} = o_p(1)$.
\end{theorem}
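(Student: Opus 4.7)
The plan is to bridge the consistency of the score estimator established in Theorem~\ref{appendix_consistency_theorem} with the quantitative TV bound in the preceding ``LDCS with $L^2$-accurate score estimate'' theorem, by letting the target TV accuracy vanish at an appropriately slow rate. Working pointwise in $z$ (as the statement demands), I would first promote parameter consistency to an $L^2$-score bound: Theorem~\ref{appendix_consistency_theorem} gives $\hat\theta_{n,m}\overset{p}{\to}\theta^*$, Lemma~\ref{app_identifiable} gives $s(x,z;\theta^*)=\nabla_x\log p(x|z)$, and combining the continuous mapping theorem with dominated convergence (using the square-integrable envelope supplied by Assumptions B.1 and B.7) yields
\[
\varepsilon_z(n)^2 := \mathbb{E}_{x\sim p(x|z)}\bigl\|\nabla_x\log p(x|z) - s(x,z;\hat\theta_{n,m})\bigr\|^2 \overset{p}{\to} 0.
\]

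Next, for each $n$, I would apply the preceding theorem with a target TV accuracy $\varepsilon_{\text{TV},n}\downarrow 0$, step size $h_n$ and total time $T_n$ set by the schedule in Eq.~(\ref{appendix_condition}) with the $z$-dependent constants $L_z$, $C_{z;\text{LS}}$, $K_z$. The theorem requires $\varepsilon_z(n)\le\varepsilon_{c,n}$, where $\varepsilon_{c,n}\asymp \varepsilon_{\text{TV},n}^4/\mathrm{poly}(d_x,L_z,C_{z;\text{LS}},K_z,\log(1/\varepsilon_{\text{TV},n}))$. On the event $\{\varepsilon_z(n)\le\varepsilon_{c,n}\}$ it concludes $d_{\text{TV}}\{p_T(x|z),p(x|z)\}\le 2\varepsilon_{\text{TV},n}$.

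The remaining step is to tune the schedule so that this event has probability tending to one. Because $\varepsilon_z(n)\overset{p}{\to}0$ without a prescribed rate, I would use a subsequence argument: along every subsequence there is a further subsequence on which $\varepsilon_z(n)\to 0$ almost surely, and $\varepsilon_{\text{TV},n}$ can be chosen to decay slowly enough that $\varepsilon_{c,n}$ eventually dominates $\varepsilon_z(n)$ on that subsubsequence. This forces $d_{\text{TV}}\{p_T(x|z),p(x|z)\}\to 0$ almost surely along the subsubsequence, which by the standard subsequence criterion for convergence in probability delivers the desired $d_{\text{TV}}\{p_T(x|z),p(x|z)\}=o_p(1)$.

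The main obstacle is the quartic-in-$\varepsilon_{\text{TV}}$ constraint on the admissible score error: the tolerance $\varepsilon_{c,n}$ shrinks far faster than the TV target, so a deterministic schedule matched to a pre-assumed rate of score consistency would be fragile. The subsequence route bypasses this by adapting the TV target to whatever stochastic rate the score estimate happens to achieve. A secondary technical point is justifying the upgrade from pointwise-in-$\theta$ convergence of the score in Step 1 to $L^2(p(x|z))$-convergence in $x$; this is precisely where the Lipschitz envelope of Assumption B.7 combined with the second-moment control of Assumption B.1 is essential.
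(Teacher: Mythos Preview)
Your proposal is correct and shares the same skeleton as the paper---consistency of $\hat\theta_{n,m}$, then score accuracy, then the quantitative LDCS bound---but it diverges from the paper at the point you flag as ``the main obstacle.'' You let the TV target $\varepsilon_{\text{TV},n}\downarrow 0$ and then resort to a subsequence argument to reconcile the unknown rate of $\varepsilon_z(n)$ with the shrinking tolerance $\varepsilon_{c,n}$. The paper instead keeps $\varepsilon_{\text{TV}}$ \emph{fixed}: this freezes $h$, $T$, and $\varepsilon_c$ once and for all, so it suffices to show that the event $\{\varepsilon_z(n)\le\varepsilon_c\}$ has probability tending to one, giving $\mathbb{P}\bigl(d_{\text{TV}}\{p_T(x|z),p(x|z)\}\le 2\varepsilon_{\text{TV}}\bigr)\to 1$ for every fixed $\varepsilon_{\text{TV}}$, which is exactly the definition of $o_p(1)$. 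The quartic dependence you worry about is then immaterial, because $\varepsilon_c$ never moves; no schedule or subsequence machinery is required. Your route is valid but heavier than necessary.

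Conversely, your handling of the pointwise-to-$L^2$ upgrade is more careful than the paper's. The paper passes from the continuous-mapping statement $s(x,z;\hat\theta_{n,m})\overset{p}{\to}s(x,z;\theta^*)$ (pointwise in $x$) to the $L^2(p(x|z))$ bound via the displayed inequality
\[
\mathbb{P}\Bigl(\mathbb{E}_{x\sim p(x|z)}\bigl\|s(x,z;\theta^*)-s(x,z;\hat\theta_{n,m})\bigr\|^2\le\varepsilon_c^2\Bigr)\ \ge\ \mathbb{P}\Bigl(\bigl\|s(x,z;\theta^*)-s(x,z;\hat\theta_{n,m})\bigr\|^2\le\varepsilon_c^2\Bigr),
\]
whose right-hand side still carries a free $x$, so the step is left informal. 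Your dominated-convergence argument with the Lipschitz envelope from Assumption~B.7 (and the moment control of Assumption~B.1) is precisely what makes this passage rigorous, and is a genuine improvement over the paper's presentation.
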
 
\begin{proof}
By Theorem~\ref{appendix_consistency_theorem}, we have shown that $\hat{\theta}_{n, m} \overset{p}{\to} \theta^*$ as $n\to \infty$. Applying the continuous mapping theorem, we obtain that for all $x, z$, $s(x,z;\hat{\theta}_{n, m}) \overset{p}{\to} s(x,z;\theta^*)$ as $n\to \infty$. In other equivalent form, for all $x,z$, for any $\epsilon>0$, we have
\begin{equation}
\lim_{n\to \infty} \mathbb{P}\left(\|s(x,z;\hat{\theta}_{n, m}) - s(x,z;\theta^*)\|\leq \epsilon\right)  = 1.   
\end{equation}
Additionally, since under the condition given by Eq.~(\ref{appendix_condition}), the event $d_\text{TV}\{p_T(x|z), p(x|z)\}\leq 2\varepsilon_{\text{TV}}$ will happen, yield:
\begin{equation}
\begin{split}
 &\mathbb{P}\left(d_\text{TV}\{p_T(x|z), p(x|z)\}\leq 2\varepsilon_{\text{TV}} \right) \\\geq ~&\mathbb{P}\left(\mathbb{E}_{x\sim p(x|z)}[\|\nabla_x \log p(x|z) - s(x,z;\hat{\theta}_{n,m})\|^2] \leq \varepsilon_c^2 \right)\\
 = ~&\mathbb{P}\left(\mathbb{E}_{x\sim p(x|z)}[\|s(x,z;\theta^*) - s(x,z;\hat{\theta}_{n,m})\|^2] \leq \varepsilon_c^2 \right)\\
\geq ~&\mathbb{P}\left(\|s(x,z;\theta^*) - s(x,z;\hat{\theta}_{n,m})\|^2 \leq \varepsilon_c^2 \right).
\end{split}
\end{equation}
By setting $\epsilon = \varepsilon_c$, and taking the limit on both sides, we obtain
\begin{equation}
\begin{split}
&\lim_{n\to \infty}\mathbb{P}\left(d_\text{TV}\{p_T(x|z), p(x|z)\}\leq 2\varepsilon_{\text{TV}} \right)\\ &\geq \lim_{n\to \infty} \mathbb{P}\left(\|s(x,z;\theta^*) - s(x,z;\hat{\theta}_{n,m})\|^2 \leq \varepsilon_c^2 \right) = 1  
\end{split}
\end{equation}
for any given $\varepsilon_{\text{TV}}\in (0,1)$, thus by definition of ``converge in distribution'' notion, we have $d_\text{TV}\{p_T(x|z), p(x|z)\} = o_p(1)$.
\end{proof}

\section{Proof of Exchangeablility}
\label{a3}
In this section, we prove the exchangeability property, which ensures the validity of $p$-values under certain assumptions.

\begin{proposition}[Exchangeablility of triples] 
\label{app_exchangeabliliy_random_variable}
Let ${\buildrel d \over =}$ denotes equality in distribution. Then under $\mathcal{H}_0$, and further assuming that for all $b\in [B]$, $(X^{(b)},Y,Z)~{\buildrel d \over =}~(X,Y,Z)$, the resulting random sequence of triples $(X^{(b)},Y,Z)_{b=0}^B$ is exchangeable. Recall that in the main paper, we use $(X^{(0)}, Y, Z)$ to represent the observed triple $(X,Y,Z)$.
\end{proposition}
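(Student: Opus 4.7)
The plan is to prove exchangeability of $(X^{(b)}, Y, Z)_{b=0}^B$ by first establishing that $X^{(0)}, X^{(1)}, \ldots, X^{(B)}$ is conditionally i.i.d.\ given $(Y, Z)$, and then observing that exchangeability in the $X^{(b)}$-coordinates (with $Y, Z$ held fixed across the permutation) is exactly the claimed triple exchangeability. This mirrors the strategy of the in-text proposition but carries $Y$ through the argument explicitly.

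First, I would analyze the conditional law of each generated $X^{(b)}$ given $(Y, Z)$. By the Langevin construction in Eq.~(\ref{generative_process}), each $X^{(b)}$ for $b \geq 1$ is a deterministic measurable function $\phi_T$ of the initialization $x_0^{(b)}$, the Brownian increments $\{\xi_{kh}^{(b)}\}$ along the $b$-th chain, and the conditioning variable $Z$; crucially, it never touches $Y$. Since the initializations $\{x_0^{(b)}\}_{b=1}^B$ and the noise families $\{\xi_{kh}^{(b)}\}_{b=1}^B$ are mutually independent across $b$ and jointly independent of $(Y, Z)$, the generated $X^{(1)}, \ldots, X^{(B)}$ are conditionally i.i.d.\ given $(Y, Z)$, with conditional law depending only on $Z$.

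Second, under $\mathcal{H}_0$ we have $X \perp\!\!\!\perp Y \mid Z$, so the observed $X = X^{(0)}$ is conditionally independent of $Y$ given $Z$; together with the hypothesis $(X^{(b)}, Y, Z) {\buildrel d \over =} (X, Y, Z)$, the conditional law of $X^{(0)}$ given $(Y, Z)$ must coincide with that of each $X^{(b)}$, $b \geq 1$. Combining the two steps, the full sequence $X^{(0)}, X^{(1)}, \ldots, X^{(B)}$ is conditionally i.i.d.\ given $(Y, Z)$, hence exchangeable conditionally (as conditionally i.i.d.\ implies exchangeability via the representation theorem already invoked in the main text). So for any permutation $\pi$ of $\{0, 1, \ldots, B\}$,
\begin{equation*}
(X^{(\pi(b))})_{b=0}^B \mid (Y, Z) \ {\buildrel d \over =}\ (X^{(b)})_{b=0}^B \mid (Y, Z).
\end{equation*}
Since $(Y, Z)$ is unaffected by the permutation, integrating against the joint law of $(Y, Z)$ yields $(X^{(\pi(b))}, Y, Z)_{b=0}^B {\buildrel d \over =} (X^{(b)}, Y, Z)_{b=0}^B$, which is the desired triple exchangeability.

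The main obstacle is the independence bookkeeping in the first step: one must verify that the noise driving distinct Langevin chains, together with the $B+1$ initializations and the observed $X^{(0)}$, are genuinely mutually independent and jointly independent of $(Y, Z)$, so that the conditional-i.i.d.\ property is rigorous rather than implicit. Because the sampler is constructed with fresh noise at every step of every chain and never reads from $Y$, this reduces to checking the product structure of the underlying probability space, after which the representation theorem delivers the conclusion without further work.
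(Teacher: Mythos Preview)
Your proposal is correct and follows essentially the same strategy as the paper's proof: use the Langevin construction to exhibit the $X^{(b)}$ as conditionally i.i.d.\ given the conditioning data, then invoke the representation theorem to obtain exchangeability. If anything, your version is more careful than the paper's---the paper's argument only explicitly treats $b\in[B]$ and leaves the incorporation of the observed $X^{(0)}$ (via $\mathcal{H}_0$ and the distributional hypothesis) and the role of $Y$ implicit, whereas you spell these out.
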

\begin{proof}
A sequence of random variables is said to be exchangeable if its distribution is invariant under variable permutations. By the "representation theorem"~\citep{diaconis1980finite} for exchangeable sequences of random variables, that show that every sequence of conditionally $i.i.d.$ random variables can be considered as a sequence of exchangeable random variables. Recall the process of our generative model, we start from $i.i.d.$ sequence of init random variables $x_0^{(b)}$, which are iteratively updated as:
\begin{equation}
x_{(k+1)h}^{(b)} = x_{kh}^{(b)} +h \cdot s(x_{kh}^{(b)},Z; \theta) + \sqrt{2h}\cdot\xi_{kh},      
\end{equation}
where $\xi_{kh}\sim\mathcal{N}(0,I_{d_x})$. Note that for each step $t=kh$, we can represent the generated process of $x_T^{(b)}$ as 
\begin{equation}
\label{app_generative_process}
x_T^{(b)} = \phi_{T}(\cdots \phi_t(\cdots \phi_1(x_0^{(b)}; Z, \xi_{h}); Z, \xi_{kh}); Z, \xi_{T}),    
\end{equation}
where $\phi_t(x_{(k-1)h}^{(b)}; Z, \xi_{kh}) = x_{(k-1)h}^{(b)} + h \cdot s(x_{(k-1)h}^{(b)},Z; \theta) + \sqrt{2h}\cdot\xi_{kh}$. By the construction of Eq.~(\ref{app_generative_process}), since the score function $s(\cdot,z;\theta)$ is measurable and the additional noise $\xi_{kh}$ and $Z$ are independent to the $x_0^{(b)}$, the resulting random sequence of random variables $(X^{(b)},Y,Z)_{b=1}^B$ is exchangeable according to the "representation theorem", thus completes the proof.
\end{proof}
Next, we show the exchangeability of the statistic derived from the random sequence, as shown in the following corollary. 
\begin{corollary}[Exchangeablility of statistics] 
\label{app_exchangeabliliy_statistics}
Let ${\buildrel d \over =}$ denotes equality in distribution. Then under $\mathcal{H}_0$, and further assume that for all $b\in [B]$, $(X^{(b)},Y,Z)~{\buildrel d \over =}~(X,Y,Z)$, the resulting random sequence of statistics $[{\rho(X^{(b)}, Y, Z)}]_{b=0}^B$ is exchangeable. 
\end{corollary}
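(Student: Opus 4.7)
The plan is to derive this corollary directly from Proposition~\ref{app_exchangeabliliy_random_variable} (exchangeability of triples) by invoking the general principle that exchangeability is preserved under componentwise application of a common measurable map. Specifically, I would first recall that the triples $(X^{(b)}, Y, Z)_{b=0}^B$ are exchangeable, which means that for any permutation $\pi$ of $\{0, 1, \dots, B\}$,
\begin{equation*}
\bigl((X^{(\pi(0))}, Y, Z), (X^{(\pi(1))}, Y, Z), \dots, (X^{(\pi(B))}, Y, Z)\bigr) \stackrel{d}{=} \bigl((X^{(0)}, Y, Z), \dots, (X^{(B)}, Y, Z)\bigr).
\end{equation*}

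Next, I would consider the map $\Phi:(\mathcal{X}\times\mathcal{Y}\times\mathcal{Z})^{B+1} \to \mathbb{R}^{B+1}$ defined by $\Phi(u_0, \dots, u_B) := (\rho(u_0), \dots, \rho(u_B))$, where $\rho:\mathcal{X}\times\mathcal{Y}\times\mathcal{Z}\to\mathbb{R}$ is the (measurable) test statistic. Since $\rho$ is measurable, $\Phi$ is measurable, and it commutes with coordinate permutations in the obvious sense: for any permutation $\pi$,
\begin{equation*}
\Phi(u_{\pi(0)}, \dots, u_{\pi(B)}) = (\rho(u_{\pi(0)}), \dots, \rho(u_{\pi(B)})).
\end{equation*}
Applying $\Phi$ to both sides of the distributional equality above and invoking the fact that equality in distribution is preserved under measurable maps yields that $[\rho(X^{(b)}, Y, Z)]_{b=0}^B$ has the same distribution as its permutation by $\pi$ for every $\pi$, which is exactly the definition of exchangeability.

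No real obstacle is expected here since the argument is essentially a one-line consequence of the definition of exchangeability together with measurability of $\rho$. The only detail worth emphasizing is that the same $(Y, Z)$ factor appears in every coordinate, so when the permutation $\pi$ is applied it only reshuffles the $X^{(b)}$ components while leaving $Y$ and $Z$ untouched in each slot; this is consistent with the structure of Proposition~\ref{app_exchangeabliliy_random_variable} and ensures that the hypothesis $(X^{(b)}, Y, Z)\stackrel{d}{=}(X, Y, Z)$ for every $b\in[B]$ is exactly what we need to include the index $b=0$ in the exchangeable sequence. The corollary then follows immediately.
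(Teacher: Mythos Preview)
Your proposal is correct and follows essentially the same route as the paper: invoke Proposition~\ref{app_exchangeabliliy_random_variable} to get exchangeability of the triples, then push the sequence through the measurable statistic $\rho$ to conclude exchangeability of $[\rho(X^{(b)},Y,Z)]_{b=0}^B$. The only cosmetic difference is that the paper phrases the second step as an appeal to the ``representation theorem'' rather than spelling out, as you do, that componentwise application of a common measurable map preserves equality in distribution and hence exchangeability.
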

\begin{proof}
By Proposition~\ref{app_exchangeabliliy_random_variable}, the resulting random sequence of triples $(X^{(b)},Y,Z)_{b=0}^B$ is exchangeable. Since $\rho$ is a measurable function, the sequence of statistics $[{\rho(X^{(b)}, Y, Z)}]_{b=0}^B$ is also exchangeable by the "representation theorem". 
\end{proof}
Given that the sequence of statistics is exchangeable, we now demonstrate that the $p$-value obtained by
\begin{equation}
\label{app_pvalue}
p\text{-value}= \frac{1+\sum_{b=1}^B\mathbf{1}\{\rho(X^{(b)},Y,Z)\geq \rho(X,Y,Z)\}}{1+B}      
\end{equation}
is valid, as shown in the following proposition. 
\begin{proposition}[Valid $p$-value] 
\label{app_valid_pvalue}
Under $\mathcal{H}_0$, and assuming that for all $b\in [B]$, $(X^{(b)},Y,Z)~{\buildrel d \over =}~(X,Y,Z)$, let the random sequence of statistics be $[{\rho(X^{(b)}, Y, Z)}]_{b=0}^B$. Then the $p$-value given by Eq.~(\ref{app_pvalue}) is valid, i.e., 
\begin{equation}
\mathbb{P}(p\text{-value}\leq \alpha|\mathcal{H}_0)\leq \alpha, ~~\text{for any given }\alpha\in (0,1).    
\end{equation}
\end{proposition}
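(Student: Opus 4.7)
The plan is to reduce the validity statement to the classical rank-based inequality for exchangeable sequences. First I would introduce the shorthand $T_b := \rho(X^{(b)}, Y, Z)$ for $b = 0, 1, \ldots, B$, invoke Corollary~\ref{app_exchangeabliliy_statistics} to assert that $(T_0, T_1, \ldots, T_B)$ is exchangeable under $\mathcal{H}_0$ together with the assumed distributional equality, and observe that $\mathbf{1}\{T_0 \geq T_0\} = 1$ permits rewriting the $p$-value in the symmetric form
\begin{equation*}
p\text{-value} \;=\; \frac{1}{B+1}\sum_{b=0}^{B} \mathbf{1}\{T_b \geq T_0\} \;=\; \frac{R}{B+1},
\end{equation*}
where $R := \sum_{b=0}^{B} \mathbf{1}\{T_b \geq T_0\}$ takes integer values in $\{1,\ldots,B+1\}$. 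Since $R$ is integer-valued, the event $\{p\text{-value} \leq \alpha\}$ coincides with $\{R \leq \lfloor \alpha(B+1)\rfloor\}$, so the problem reduces to showing $\mathbb{P}(R \leq k) \leq k/(B+1)$ for every integer $k \in \{0,\ldots,B+1\}$.

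Next I would apply a symmetrization argument. Defining $R_b := |\{i : T_i \geq T_b\}|$ so that $R = R_0$, exchangeability implies that $R_b \stackrel{d}{=} R_0$ for every $b$, and hence
\begin{equation*}
\mathbb{P}(R_0 \leq k) \;=\; \frac{1}{B+1}\,\mathbb{E}\!\left[\sum_{b=0}^{B} \mathbf{1}\{R_b \leq k\}\right].
\end{equation*}
The key combinatorial step is to show pointwise that $\sum_{b=0}^{B} \mathbf{1}\{R_b \leq k\} \leq k$ for every realization $(t_0, \ldots, t_B)$. Sorting the values in decreasing order and grouping equal values into tie blocks, every element of a tie block shares the same $R_b$ value, namely the largest rank position occupied by that block. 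A tie block therefore contributes to the sum only when it lies entirely within the top $k$ positions, and the total size of such tie blocks is at most $k$. Substituting this pointwise bound into the identity above and choosing $k = \lfloor \alpha(B+1) \rfloor$ yields $\mathbb{P}(p\text{-value} \leq \alpha) \leq k/(B+1) \leq \alpha$.

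The main obstacle I anticipate is the careful treatment of ties among the $T_b$: if the law of $\rho(X, Y, Z)$ has atoms, then $R_0$ is not uniformly distributed over $\{1, \ldots, B+1\}$ and the naive \textit{``rank is uniform''} argument breaks. The combinatorial counting step above is precisely the mechanism that absorbs ties — any ties can only shrink the cardinality of $\{b : R_b \leq k\}$ relative to the tie-free case, thereby preserving the one-sided inequality. Consequently no continuity assumption on $\rho$ or on the joint law of $(X, Y, Z)$ is required, and the conclusion $\mathbb{P}(p\text{-value} \leq \alpha \mid \mathcal{H}_0) \leq \alpha$ follows for every $\alpha \in (0,1)$.
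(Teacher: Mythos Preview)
Your proposal is correct and follows essentially the same approach as the paper: both invoke exchangeability of the statistics (via Corollary~\ref{app_exchangeabliliy_statistics}) and then reduce the claim to a rank/order-statistics inequality of the form $\mathbb{P}(R \leq k) \leq k/(B+1)$. Your combinatorial treatment of ties is more careful than the paper's, which simply asserts the bound ``by the property of order statistics'' and even writes an equality that strictly holds only in the tie-free case.
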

\begin{proof}
For simplify, we also write $\mathbb{P}(\cdot|\mathcal{H}_0)$ as $\mathbb{P}_{\mathcal{H}_0}$. For any given $\alpha\in (0,1)$, we have
\begin{equation}
\begin{split}
 &\mathbb{P}_{\mathcal{H}_0}(p\text{-value}\leq \alpha)  \\&= \mathbb{P}_{\mathcal{H}_0}\left(\frac{1+\sum_{b=1}^B\mathbf{1}\{\rho(X^{(b)},Y,Z)\geq \rho(X,Y,Z)\}}{1+B}\leq \alpha\right)\\
 &\leq  \mathbb{P}_{\mathcal{H}_0}\left({\sum_{b=1}^B\mathbf{1}\{\rho(X^{(b)},Y,Z)\geq \rho(X,Y,Z)\}}\leq \lfloor\alpha({1+B})\rfloor\right).
\end{split}
\end{equation}
Since the sequence $[{\rho(X^{(b)}, Y, Z)}]_{b=0}^B$ is exchangeable, by the property of order statistics, we have
\begin{equation}
\begin{split}
&\mathbb{P}_{\mathcal{H}_0}\left({\sum_{b=1}^B\mathbf{1}\{\rho(X^{(b)},Y,Z)\geq \rho(X,Y,Z)\}}\leq \lfloor\alpha({1+B})\rfloor\right) \\& = \frac{\lfloor\alpha({1+B})\rfloor}{1+B}  \leq \alpha,   
\end{split}
\end{equation}
which completes the proof. 
\end{proof}
\noindent\textbf{Remark.} Note that all the above results assume that for all $b\in [B]$, $(X^{(b)},Y,Z)~{\buildrel d \over =}~(X,Y,Z)$, that the distribution of the generated samples perfectly models the conditional distribution. However, in practical applications, the generative model may introduce some error, even if we have provided an upper bound on this error, as analyzed in detail in Sec.~\ref{a1} and Sec.~\ref{a2}. Therefore the actual $p$-value estimate will have some deviation compared to the theoretical value caused by the estimation error. As a result, in the next Sec.~\ref{a4}, we will further examine the validity of the $p$-value within our CI testing framework and obtain the Type I error Bound.

\section{Proof of Type I error Bound}
\label{a4}
To simplify, we separate the samples used in the previous stage of generative modeling from the samples used in the CI test, and the number of samples is denoted as $N$ and $n$, respectively. We denote the estimated conditional distribution as $p_{T;N}(x|z)$. We further define $\boldsymbol{X}:=(x_1, x_2,...,x_n)^T, \boldsymbol{Y}:=(y_1, y_2,...,y_n)^T$ and $ \boldsymbol{Z}:=(z_1, z_2,...,z_n)^T$ as the vectors formed by $n$ samples. Additionally, for $b\in[B]$, we define $\boldsymbol{X}^{(b)}:=\bigl(x_1^{(b)}, x_2^{(b)},...,x_n^{(b)}\bigl)^T$ that are the generated vector corresponding to $\boldsymbol{Z}$. Then the estimation of statistic is given by $\hat{\rho}=\rho(\boldsymbol{X}, \boldsymbol{Y}, \boldsymbol{Z})$. Recall that we have defined the estimation of threshold in the main paper as $c_\alpha:=\inf\{c\in \mathbb{R}: \mathbb{P}(\hat{\rho}>c)\leq \alpha\}$. The following results give a bound for Type I error given $\boldsymbol{Y}, \boldsymbol{Z}$.  

\begin{lemma}[Type I error bound given $\boldsymbol{Y}, \boldsymbol{Z}$]
\label{theorem_type_I_conditional}
Assume $\mathcal{H}_0: X\perp \!\!\! \perp Y|Z$ is true. Under all the Assumptions in Sec.~\ref{a1} and~\ref{a2}, for any signifiance level $\alpha\in (0,1)$, given $\boldsymbol{Y}, \boldsymbol{Z}$, the bound for the Type I error is obtained as 
\begin{equation}
\begin{split}
\mathbb{P}_{\mathcal{H}_0}(\hat{\rho} > c_{\alpha}|\boldsymbol{Y}, \boldsymbol{Z}) &\leq \alpha + d_\text{TV}\{p_{T;N}(\cdot|\boldsymbol{Z}), p(\cdot|\boldsymbol{Z})\}\\
&= \alpha + o_p(1), \text{ as } N\to \infty.       
\end{split}
\end{equation}
\end{lemma}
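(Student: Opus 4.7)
The plan is to decompose the conditional Type I error into an ``ideal'' part that exchangeability controls at level $\alpha$, plus a coupling residual quantified by a total variation distance between the estimated and true conditional distributions. I would first introduce fictitious pseudo-samples $\tilde{\boldsymbol{X}}^{(1)}, \ldots, \tilde{\boldsymbol{X}}^{(B)}$ drawn i.i.d.\ from the \emph{true} conditional $p(\cdot\mid \boldsymbol{Z})$ given $\boldsymbol{Z}$, together with the associated ideal threshold $\tilde c_\alpha$ computed from $\rho(\tilde{\boldsymbol{X}}^{(b)}, \boldsymbol{Y}, \boldsymbol{Z})$. Under $\mathcal{H}_0$ and conditional on $(\boldsymbol{Y}, \boldsymbol{Z})$, the observed $\boldsymbol{X}$ itself has law $p(\cdot\mid \boldsymbol{Z})$ (since $X \perp\!\!\!\perp Y \mid Z$), so the augmented tuple $(\boldsymbol{X}, \tilde{\boldsymbol{X}}^{(1)}, \ldots, \tilde{\boldsymbol{X}}^{(B)})$ is exchangeable given $(\boldsymbol{Y}, \boldsymbol{Z})$; applying the conditional version of Proposition~\ref{app_valid_pvalue} immediately yields
\[
\mathbb{P}_{\mathcal{H}_0}\bigl(\rho(\boldsymbol{X}, \boldsymbol{Y}, \boldsymbol{Z}) > \tilde c_\alpha \,\big|\, \boldsymbol{Y}, \boldsymbol{Z}\bigr) \leq \alpha.
\]

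Next I would transfer this ideal bound to the actual test via a coupling argument. Because the rejection event $\{\hat\rho > c_\alpha\}$ is a measurable function of the pseudo-sample tuple together with the fixed $(\boldsymbol{X}, \boldsymbol{Y}, \boldsymbol{Z})$, and the pseudo-samples are the only components whose law differs between the actual and ideal experiments, a maximal coupling of the joint laws of $(\boldsymbol{X}^{(1)}, \ldots, \boldsymbol{X}^{(B)})$ under $p_{T;N}(\cdot\mid \boldsymbol{Z})^{\otimes B}$ and $p(\cdot\mid \boldsymbol{Z})^{\otimes B}$ gives
\[
\bigl|\mathbb{P}_{\mathcal{H}_0}(\hat\rho > c_\alpha \mid \boldsymbol{Y}, \boldsymbol{Z}) - \mathbb{P}_{\mathcal{H}_0}(\hat\rho > \tilde c_\alpha \mid \boldsymbol{Y}, \boldsymbol{Z})\bigr| \leq d_{\text{TV}}\bigl\{p_{T;N}(\cdot\mid\boldsymbol{Z}),\, p(\cdot\mid\boldsymbol{Z})\bigr\},
\]
where the right-hand side is read at the level of the joint laws on the full $n$-dimensional sample vector. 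Combining this with the exchangeability bound produces the deterministic inequality $\mathbb{P}_{\mathcal{H}_0}(\hat\rho > c_\alpha \mid \boldsymbol{Y}, \boldsymbol{Z}) \leq \alpha + d_{\text{TV}}\{p_{T;N}(\cdot\mid\boldsymbol{Z}),\, p(\cdot\mid\boldsymbol{Z})\}$.

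The final step is to upgrade the TV remainder to $o_p(1)$ by invoking Theorem~\ref{appenddix Error bound of conditional distribution}, which already establishes pointwise-in-$z$ convergence $d_{\text{TV}}\{p_{T;N}(\cdot\mid z),\, p(\cdot\mid z)\} = o_p(1)$ as $N \to \infty$; since $n$ is fixed while $N \to \infty$, a tensorization/union argument over the $n$ coordinates (possible because both product measures factor across the $i$-indexed samples given $\boldsymbol{Z}$) lifts the pointwise guarantee to the joint TV distance on the full $n$-sample space. The main obstacle I anticipate is the coupling step, because the threshold $c_\alpha$ is itself a function of the pseudo-samples: care is needed to construct the actual and ideal experiments on a common probability space so that $c_\alpha$ and $\tilde c_\alpha$ coincide on the coupling event, and to keep the residual as a single TV term at the joint-law level rather than accumulating a spurious factor of $B$ from naive sample-by-sample tensorization.
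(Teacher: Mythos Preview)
Your overall architecture (exchangeability to control the ideal test at level $\alpha$, then a TV coupling to pass to the actual test) is exactly the paper's, but you couple in the wrong direction and this breaks the stated deterministic inequality. You replace the $B$ pseudo-samples $\boldsymbol{X}^{(1)},\ldots,\boldsymbol{X}^{(B)}$ (drawn from $p_{T;N}$) by ideal ones $\tilde{\boldsymbol{X}}^{(1)},\ldots,\tilde{\boldsymbol{X}}^{(B)}$ (drawn from $p$), so the two experiments differ in \emph{all $B$ coordinates}; the maximal-coupling bound then yields $d_{\mathrm{TV}}\bigl(p_{T;N}(\cdot\mid\boldsymbol{Z})^{\otimes B},\,p(\cdot\mid\boldsymbol{Z})^{\otimes B}\bigr)$, which is in general only $\le B\cdot d_{\mathrm{TV}}\{p_{T;N}(\cdot\mid\boldsymbol{Z}),\,p(\cdot\mid\boldsymbol{Z})\}$. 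The factor $B$ you hope is ``spurious'' is not: since the threshold $c_\alpha$ is a genuine function of all $B$ pseudo-samples, no refinement of the coupling removes it under your decomposition. Your argument therefore proves $\mathbb{P}_{\mathcal{H}_0}(\hat\rho>c_\alpha\mid\boldsymbol{Y},\boldsymbol{Z})\le \alpha + B\cdot d_{\mathrm{TV}}\{p_{T;N},p\}$, which still gives the asymptotic $\alpha+o_p(1)$ because $B$ is fixed, but does not deliver the first line of the lemma as stated.

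The paper avoids this by swapping the \emph{single observed} $\boldsymbol{X}$ instead: it introduces $\hat{\boldsymbol{X}}\sim p_{T;N}(\cdot\mid\boldsymbol{Z})$ and compares $(\boldsymbol{X},\boldsymbol{X}^{(1)},\ldots,\boldsymbol{X}^{(B)})$ with $(\hat{\boldsymbol{X}},\boldsymbol{X}^{(1)},\ldots,\boldsymbol{X}^{(B)})$. Now only the first coordinate differs, and by independence the joint TV collapses to exactly $d_{\mathrm{TV}}\{p_{T;N}(\cdot\mid\boldsymbol{Z}),\,p(\cdot\mid\boldsymbol{Z})\}$; meanwhile $(\hat{\boldsymbol{X}},\boldsymbol{X}^{(1)},\ldots,\boldsymbol{X}^{(B)})$ is i.i.d.\ from $p_{T;N}$ and hence exchangeable, so the $\alpha$ bound applies. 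This is the missing idea: couple on the side where you need to change only one copy.
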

\begin{proof}
By definition, the statistic $\hat{\rho}$ results in a $p$-value $< \alpha$ if and only if the observed variables are contained in the set $A_{\alpha}^B$, where each element $(\boldsymbol{x}, \boldsymbol{x}^{(1)}, ..., \boldsymbol{x}^{(B)})$ satisfies  
\begin{equation}
\frac{1}{B+1}\Bigl[1+\sum^B_{b=1} \mathbf{1}\{\rho(\boldsymbol{x}^{(b)}, \boldsymbol{Y}, \boldsymbol{Z})\geq \rho(\boldsymbol{x}, \boldsymbol{Y}, \boldsymbol{Z})\}\Bigl]< \alpha.     
\end{equation}
Let $\hat{\boldsymbol{X}}\sim p_{T;N}(\cdot|\boldsymbol{Z})$ be sampled from the estimated conditional distribution. Then it holds that, 
\begin{equation}
\begin{split}
\mathbb{P}_{\mathcal{H}_0}&(\hat{\rho} > c_{\alpha}|\boldsymbol{Y}, \boldsymbol{Z}) = \mathbb{P}_{\mathcal{H}_0}\left((\boldsymbol{X}, \boldsymbol{X}^{(1)},...,\boldsymbol{X}^{(B)})\in A_{\alpha}^B|\boldsymbol{Y}, \boldsymbol{Z}\right)\\
& = \mathbb{P}_{\mathcal{H}_0}\left((\hat{\boldsymbol{X}}, \boldsymbol{X}^{(1)},...,\boldsymbol{X}^{(B)})\in A_{\alpha}^B|\boldsymbol{Y}, \boldsymbol{Z}\right) \\ &\ \ \ \ + \mathbb{P}_{\mathcal{H}_0}\left((\boldsymbol{X}, \boldsymbol{X}^{(1)},...,\boldsymbol{X}^{(B)})\in A_{\alpha}^B|\boldsymbol{Y}, \boldsymbol{Z}\right) \\ &\ \ \ \ - \mathbb{P}_{\mathcal{H}_0}\left((\hat{\boldsymbol{X}}, \boldsymbol{X}^{(1)},...,\boldsymbol{X}^{(B)})\in A_{\alpha}^B|\boldsymbol{Y}, \boldsymbol{Z}\right)\\
\leq &\mathbb{P}_{\mathcal{H}_0}\left((\hat{\boldsymbol{X}}, \boldsymbol{X}^{(1)},...,\boldsymbol{X}^{(B)})\in A_{\alpha}^B|\boldsymbol{Y}, \boldsymbol{Z}\right) \\ &~~~~~~+ d_{\text{TV}}\Bigl\{({\boldsymbol{X}}, \boldsymbol{X}^{(1)},...,\boldsymbol{X}^{(B)}|\boldsymbol{Y}, \boldsymbol{Z}),(\hat{\boldsymbol{X}}, \boldsymbol{X}^{(1)},...,\boldsymbol{X}^{(B)}|\boldsymbol{Y}, \boldsymbol{Z})\Bigl\}.
\end{split}
\end{equation}    
By the definition of $\hat{\boldsymbol{X}}$, perform the same analysis as in Proposition~\ref{app_exchangeabliliy_random_variable} and Corollary~\ref{app_exchangeabliliy_statistics}, we can show that
\begin{equation}
\rho(\hat{\boldsymbol{X}}, \boldsymbol{Y}, \boldsymbol{Z}), \rho(\boldsymbol{X}^{(1)}, \boldsymbol{Y}, \boldsymbol{Z}), ...., \rho(\boldsymbol{X}^{(B)}, \boldsymbol{Y}, \boldsymbol{Z})    
\end{equation}
are exchangeable conditionally on $\boldsymbol{Y}, \boldsymbol{Z}$. Hence by combining the property of rank test similar to  Proposition~\ref{app_valid_pvalue}, we obtain that 
\begin{equation}
\mathbb{P}_{\mathcal{H}_0}\left((\hat{\boldsymbol{X}}, \boldsymbol{X}^{(1)},...,\boldsymbol{X}^{(B)})\in A_{\alpha}^B|\boldsymbol{Y}, \boldsymbol{Z}\right)\leq \alpha.    
\end{equation}
And by the definition of $\text{TV}$ distance of probability measures, we can further calculating that
\begin{equation}
\begin{split}
&d_{\text{TV}}\Bigl\{({\boldsymbol{X}}, \boldsymbol{X}^{(1)},...,\boldsymbol{X}^{(B)}|\boldsymbol{Y}, \boldsymbol{Z}),(\hat{\boldsymbol{X}}, \boldsymbol{X}^{(1)},...,\boldsymbol{X}^{(B)}|\boldsymbol{Y}, \boldsymbol{Z})\Bigl\} \\
&= \frac{1}{2}\int \bigl|p_{\hat{\boldsymbol{X}}, \boldsymbol{X}^{(1)},...,\boldsymbol{X}^{(B)}|\boldsymbol{Y}, \boldsymbol{Z}}(\boldsymbol{x}, \boldsymbol{x}^{(1)}, ..., \boldsymbol{x}^{(B)}) \\& \ \ \ \ \ \ \ \  - p_{{\boldsymbol{X}}, \boldsymbol{X}^{(1)},...,\boldsymbol{X}^{(B)}|\boldsymbol{Y}, \boldsymbol{Z}}(\boldsymbol{x}, \boldsymbol{x}^{(1)}, ..., \boldsymbol{x}^{(B)})\bigr|d\boldsymbol{x}d\boldsymbol{x}^{(1)}\cdots d\boldsymbol{x}^{(B)} \\
& = \frac{1}{2}\int \left|p_{\hat{\boldsymbol{X}}|\boldsymbol{Y}, \boldsymbol{Z}}(\boldsymbol{x}) - p_{\boldsymbol{X}|\boldsymbol{Y}, \boldsymbol{Z}}(\boldsymbol{x}) \right|d\boldsymbol{x}\\&=d_{\text{TV}}\Bigl\{(\hat{\boldsymbol{X}}|\boldsymbol{Y}, \boldsymbol{Z}),({\boldsymbol{X}}|\boldsymbol{Y}, \boldsymbol{Z})\Bigl\}, 
\end{split}
\end{equation}
where the calculation is based on the property that $(\boldsymbol{x}, \boldsymbol{x}^{(1)}, ..., \boldsymbol{x}^{(B)})$ is independent of each other. As a result, we obtain the bound of Type I error rate as
\begin{equation}
\begin{split}
\mathbb{P}_{\mathcal{H}_0}(\hat{\rho} > c_{\alpha}|\boldsymbol{Y}, \boldsymbol{Z}) - \alpha &\leq  d_{\text{TV}}\Bigl\{(\hat{\boldsymbol{X}}|\boldsymbol{Y}, \boldsymbol{Z}),({\boldsymbol{X}}|\boldsymbol{Y}, \boldsymbol{Z})\Bigl\} \\&=d_\text{TV}\{p_{T;N}(\cdot|\boldsymbol{Z}), p(\cdot|\boldsymbol{Z})\}\\&\leq \sum_{i=1}^n d_\text{TV}\{p_{T;N}(\cdot|z_i), p(\cdot|z_i)\}, 
\end{split}
\end{equation}
then by combining Theorem~\ref{appenddix Error bound of conditional distribution}, we complete the proof.
\end{proof}
Next, we show that the Type I error rate can be unconditionally controlled, as shown in Theorem~\ref{app_theorem_type_I}.
\begin{theorem}[Type I error bound]
\label{app_theorem_type_I}
Assume $\mathcal{H}_0: X\perp \!\!\! \perp Y|Z$ is true. Under all the Assumptions in Sec.~\ref{a1} and~\ref{a2}, for any signifiance level $\alpha\in (0,1)$, the bound for the Type I error is obtained as
\begin{equation}
\mathbb{P}(p\text{-value}\leq \alpha |\mathcal{H}_0) \leq \alpha + o_p(1), \text{ as } N\to \infty.    
\end{equation}
\end{theorem}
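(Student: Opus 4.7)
The plan is to lift the conditional Type I error bound of Lemma~\ref{theorem_type_I_conditional} to an unconditional statement by integrating out the test data $(\boldsymbol{Y}, \boldsymbol{Z})$, and then to control the resulting expected TV distance using the distribution-level convergence established in Theorem~\ref{appenddix Error bound of conditional distribution}. First, I would condition on the training data (which determines the score model $s(\cdot,\cdot;\hat{\theta}_{n,m})$) and use the equivalence between the event $\{p\text{-value}\leq\alpha\}$ and $\{\hat{\rho}>c_\alpha\}$ together with the law of total probability to write
\begin{equation*}
\mathbb{P}(p\text{-value}\leq \alpha \mid \mathcal{H}_0) = \mathbb{E}_{\boldsymbol{Y}, \boldsymbol{Z}}\!\left[\mathbb{P}_{\mathcal{H}_0}(\hat{\rho} > c_\alpha \mid \boldsymbol{Y}, \boldsymbol{Z})\right].
\end{equation*}

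Next, I would plug in the pointwise bound from Lemma~\ref{theorem_type_I_conditional}, namely $\mathbb{P}_{\mathcal{H}_0}(\hat{\rho} > c_\alpha \mid \boldsymbol{Y}, \boldsymbol{Z}) \leq \alpha + \sum_{i=1}^n d_\text{TV}\{p_{T;N}(\cdot|z_i), p(\cdot|z_i)\}$, and exploit the $i.i.d.$ structure of $\boldsymbol{Z}$ together with linearity of expectation to obtain
\begin{equation*}
\mathbb{P}(p\text{-value}\leq \alpha \mid \mathcal{H}_0) \leq \alpha + n\,\mathbb{E}_{z\sim p(z)}\!\left[d_\text{TV}\{p_{T;N}(\cdot|z), p(\cdot|z)\}\right].
\end{equation*}
It then suffices to show that the residual expectation is $o_p(1)$ in the training-set randomness as $N\to\infty$. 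By Theorem~\ref{appenddix Error bound of conditional distribution}, for each fixed $z$ we have $d_\text{TV}\{p_{T;N}(\cdot|z), p(\cdot|z)\}=o_p(1)$; since TV distance is uniformly bounded by $1$, dominated convergence upgrades this pointwise-in-probability statement to convergence in mean, giving $\mathbb{E}_{\mathcal{D}^{\text{train}}}[d_\text{TV}\{p_{T;N}(\cdot|z), p(\cdot|z)\}]\to 0$ for each $z$. Using Fubini to swap the $z$- and $\mathcal{D}^{\text{train}}$-expectations (justified because the integrand is nonnegative and bounded) and then applying dominated convergence once more over $z$, I conclude $\mathbb{E}_{\mathcal{D}^{\text{train}}}\mathbb{E}_{z\sim p(z)}[d_\text{TV}\{p_{T;N}(\cdot|z), p(\cdot|z)\}]\to 0$; Markov's inequality then yields $\mathbb{E}_{z\sim p(z)}[d_\text{TV}\{p_{T;N}(\cdot|z), p(\cdot|z)\}]=o_p(1)$, which finishes the argument.

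The main subtlety — and the step requiring the most care — will be a clean bookkeeping of the two distinct sources of randomness at play: the test sample $(\boldsymbol{X},\boldsymbol{Y},\boldsymbol{Z})$ of fixed size $n$, over which the outer probability is taken, and the training data of size $N$ (together with the Langevin noise) that produces the estimated conditional law $p_{T;N}$ against which the $o_p(1)$ is measured. Because TV distance is uniformly bounded and all integrands are nonnegative, the exchange of integration orders and the passage from in-probability to in-mean convergence are routine via dominated convergence, but conflating the two randomness scales would invite an invalid interchange of limits. Once that bookkeeping is set up explicitly, the result follows essentially mechanically from the lemma and the distribution error bound.
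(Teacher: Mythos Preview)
Your proposal is correct and follows essentially the same route as the paper: marginalize the conditional bound of Lemma~\ref{theorem_type_I_conditional} over $(\boldsymbol{Y},\boldsymbol{Z})$, use that TV distance is uniformly bounded by $1$ to justify dominated convergence, and conclude via Theorem~\ref{appenddix Error bound of conditional distribution}. Your write-up is in fact more careful than the paper's (which compresses the whole argument into one line), particularly in spelling out the two layers of randomness and the Fubini/dominated-convergence/Markov chain that lifts the pointwise-in-$z$ statement $d_{\text{TV}}=o_p(1)$ to $\mathbb{E}_z[d_{\text{TV}}]=o_p(1)$.
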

\begin{proof}
Applying Theorem~\ref{theorem_type_I_conditional} and Lebesgue dominated convergence theorem, by marginalizing over $\boldsymbol{Y}, \boldsymbol{Z}$ and note that the TV distance is upper bounded by $1$, thus we have
\begin{equation}
\begin{split}
\mathbb{P}(p\text{-value}\leq \alpha |\mathcal{H}_0) &\leq \alpha + \mathbb{E}_{\boldsymbol{Z}}\bigl[d_\text{TV}\{p_{T;N}(\cdot|\boldsymbol{Z}), p(\cdot|\boldsymbol{Z})\}\bigl] \\&\leq  \alpha + o_p(1), \text{ as } N\to \infty, 
\end{split}
\end{equation}
which completes the proof.
\end{proof}
Thus, these theories prove that our test is valid in the sense that asymptotically the Type I error can be well controlled, and more intuitively, when the training samples for the generative model are sufficiently large, the resulting sample distribution is sufficiently close to the true distribution, so that the upper bound of the Type I error rate is precisely controlled to any given $\alpha$.

\section{Implementation Details.}
\label{a5}

\subsection{Details of Compared Methods}
The compared methods in our experiments are described below.
\begin{itemize}[leftmargin=5mm]
\item \textbf{CCIT}~\citep{sen2017model}: Transforms the CI test into a classification problem, leveraging powerful classifiers such as gradient-boosted trees.
\item \textbf{RCIT}~\citep{strobl2019approximate}: Approximates the kernel-based CI test (KCIT) using random Fourier features for scalability. 
\item \textbf{FCIT}~\citep{chalupka1804fast}: Performs a fast conditional independence test by comparing the mean squared errors (MSE) from regressing $Y$ on $X, Z$, versus regressing $Y$ on $Z$ along. 
\item \textbf{GCM}~\citep{shah2020hardness}: Computes a normalized statistic for conditional independence testing based on the sample covariance between the regression residuals of $X$ and $Y$ given $Z$.
\item \textbf{KCIT}~\citep{zhang2012kernel}: A kernel-based CI test that constructs test statistics using kernel embeddings of the distributions.
\item \textbf{LPCIT}~\citep{scetbon2022asymptotic}: Measures conditional dependence by evaluating differences between analytic kernel embeddings of distributions at a finite set of locations.
\item \textbf{GCIT}~\citep{bellot2019conditional}: Employs generative adversarial networks (GANs) to model conditional distributions for CI testing. 
\item \textbf{DGCIT}~\citep{shi2021double}: Uses two GANs to model the conditional distributions $\mathbb{P}(X|Z)$ and $\mathbb{P}(Y|Z)$, and designs a random mapping-based statistic using neural networks.
\item \textbf{NNLSCIT}~\citep{li2023nearest}: Integrates a classifier-based conditional mutual information estimator. A $k$-nearest-neighbor local sampling strategy is used to approximate the null hypothesis samples.
\end{itemize}

\noindent Below are the GitHub URLs of the compared methods:
\begin{itemize}[leftmargin=5mm]
\item \textbf{CCIT}~\url{https://github.com/rajatsen91/CCIT}.
\item \textbf{RCIT}:~\url{https://github.com/ericstrobl/RCIT}.
\item \textbf{FCIT}:~\url{https://github.com/kjchalup/fcit}.
\item \textbf{GCM}: The R package is available.
\item \textbf{KCIT}:~\url{http://people.tuebingen.mpg.de/kzhang/KCI-test.zip}.
\item \textbf{LPCIT}:~\url{https://github.com/meyerscetbon/lp-ci-test}.
\item \textbf{GCIT}:~\url{https://github.com/alexisbellot/GCIT}.
\item \textbf{DGCIT}:~\url{https://github.com/tianlinxu312/dgcit}.
\item \textbf{NNLSCIT}:~\url{https://github.com/LeeShuai-kenwitch/NNLSCIT}.
\end{itemize}

\subsection{Details of SGMCIT}
We give the detailed implementation of SGMCIT as follows. 

\noindent\textbf{Model Architecture.}
The conditional score model is based on a multi-layer perceptron (MLP) with three fully connected layers, each followed by Swish activations. The input size is $d_x + d_z$, while the output size is $d_x$, with a hidden layer dimension of $64$. 

\noindent\textbf{Projection Vectors.} We set the distribution of the projection vectors to be Gaussian and set the projection number $m = 1$.

\noindent\textbf{Hyperparameters.} The model is trained using the Adam optimizer with a learning rate of $1 \times 10^{-4}$ over $100$ epochs. The batchsize is set to $50$ by default.  

\noindent \textbf{Data Preprocessing.}
To normalize the input data, we apply min-max scaling, transforming all features into the $[0, 1]$ range. Following this, a logit transformation is applied: $\log(x) - \log(1 - x)$.

\noindent \textbf{Sampling Procedure.}
The Langevin dynamics conditional sampling process is governed by the step size $h$ and the total number of steps. In this work, we set the step size $h$ to $0.1$ and the number of steps to $200$. For the samples used to model the null hypothesis distribution, we take a parallel implementation, i.e., we generate $B$ samples in parallel, which greatly reduces the time required to generate the samples utilizing parallel hardware.

\noindent \textbf{Code.} For more details, our code is available at \url{https://github.com/jinchenghou123/SGMCIT}.

\section{More Experiments Results}
\label{a6}
In this section, we provide more experimental results, including some visualization results, as well as results under two additional metrics, with the runtime results.

\subsection{Detailed Visualization Results}
In this section, we give the detailed visualization results of generative model based methods GCIT, DGCIT and SGMCIT (Ours). The results corresponding to benchmark datasets and high-dimensional confounder setting in the main paper.

\noindent \textbf{Visualization results on Benchmarks}. Figs.~\ref{fig:vissocit},~\ref{fig:visgcit}, and~\ref{fig:visdgcit} demonstrate the performance of SGMCIT, GCIT, and DGCIT in estimating the marginal distribution under different transformation functions: linear, square, cos, tanh, and exp functions. SGMCIT consistently performs well across all settings, with the marginal distributions of the generated samples closely matching those of the observed data. In contrast, both GCIT and DGCIT struggle with highly non-linear transformations, such as cosine and exponential functions.

\noindent \textbf{Visualization results of high-dimensional confounder setting}. Fig.~\ref{fig:vissocit-hd} compares the performance of GCIT, DGCIT, and SGMCIT in the high-dimensional confounder setting. In this case, both SGMCIT and DGCIT handle the high-dimensional setting effectively, outperforming GCIT. A further analysis of the approximation performance across different regions of the probability density shows that SGMCIT provides accurate approximations in various density regions. In contrast, while DGCIT yields similar overall distribution, it struggles with local accuracy, reflecting its inability to model the conditional distribution in certain areas. 

Overall, SGMCIT outperforms both GCIT and DGCIT in most scenarios, highlighting its ability to model complex distribution. Also, the visualization results provide interpretability for the performance of the CI testing in corresponding experimental results.

\subsection{Results under Additional Metrics}
In this section, we present experimental results using two additional metrics. For context, the main paper focuses on Type I and Type II errors, here we introduce a total of four evaluation metrics.

\noindent \textbf{Performance metrics.}
We assess performance using four metrics: (1) Type I error rate, which measures validity by ensuring the error rate remains controlled at any significance level $\alpha$; (2) Testing power, defined as $1 -$ Type II error rate, reflecting the ability to detect conditional dependencies; (3) Kolmogorov-Smirnov (KS) statistic, which under $\mathcal{H}_0$ compares the $p$-value distribution to a uniform [0,1], with smaller values indicating better uniformity; and (4) Area under the power curve (AUPC), which measures the empirical cumulative distribution of $p$-values under $\mathcal{H}_1$, with values closer to one indicating higher power. 

\noindent\textbf{Results and analysis.} 
The results for Cases 1 and 2 are shown in Fig.~\ref{fig:experiments_gaussian1}, while those for Cases 3 and 4 are shown in Fig.~\ref{fig:experiments_gaussian2}.

Across all metrics, SGMCIT excels at controlling Type I errors while maintaining high testing power across a variety of function combinations, establishing it as the most reliable method in these experiments. The KS statistic for SGMCIT demonstrates good uniformity of the $p$-value distribution across a large number of function combination settings, reflecting its effective modeling of the conditional distribution. The AUPC results align closely with the power results, further showcasing SGMCIT’s high power. In comparison, while most other methods perform well with the linear and tanh functions, they struggle with some other settings. For instance, DGCIT often fails to control Type I errors effectively, CCIT shows weak performance in terms of testing power, and GCIT exhibits poor $p$-value uniformity.

\subsection{Experimental Results of Running Time}
This section presents the results of running time for each method. All experiments are performed on the same device. The runtime for a single test is reported in a high-dimensional confounder setting with standard Gaussian noise.

\noindent\textbf{Results.} Fig.~\ref{fig:runtime} shows the performance of all methods. When varying the sample size, we fixed $d_z = 100$, while for varying dimensionality, we set the sample size to 1000.

\noindent\textbf{Analysis.} SGMCIT, RCIT, GCM, and GCIT exhibit consistently low runtime, demonstrating strong scalability with respect to sample size. KCIT and LPCIT stand out with significantly longer runtime as the sample size increases. For example, KCIT exceeds 300 seconds for 10,000 samples, while LPCIT approaches 500 seconds. 

SGMCIT and GCIT maintain low and stable runtime across all dimensions, demonstrating their efficiency in high-dimensional settings. This can be attributed to the full utilization of parallel hardware by the generative model. DGCIT, while also utilizing generative models, has a longer overall runtime due to the multiple models that need to be trained as well as ineffective statistic design. Notably, LPCIT shows exponential growth in runtime, becoming the slowest method as the dimensionality exceeds 60. For instance, at the 100-dimensional setting, LPCIT's runtime exceeds 200 seconds. 

These results demonstrate that SGMCIT is computationally efficient, handling both large sample sizes and high-dimensional conditioning sets effectively. Among generative model-based CI methods, SGMCIT performs the best in terms of runtime efficiency.

\subsection{Additional Baseline Results on Real Data}

We evaluated all baseline methods on a real-world dataset, using the same experimental setup as in previous sections with a test sample size of 1000. Results are summarized in Table~\ref{tab:real_pval}.
\begin{table}[h]
\caption{
\normalsize{The CI results of 10 methods on real datasets.
}}
\begin{center}
\scalebox{0.9}{
\begin{tabular}{l|l}
\toprule
Method     & $p$-value\\ 
\midrule
CCIT &  0.68 \\ 
RCIT & 0.00 \\ 
FCIT & 0.03 \\
GCM  &  0.00 \\
KCIT &  0.00 \\
LPCIT & 0.00 \\
GCIT &  0.00 \\
DGCIT & 0.00 \\
NNLSCIT & 0.27  \\
SGMCIT & 0.00 \\
\bottomrule
\end{tabular}
}
\end{center}
\label{tab:real_pval}
\end{table}

\noindent\textbf{Analysis.} Although the ground truth is unknown, most methods—including our proposed SGMCIT—reject the null hypothesis, indicating that $X$ and $Y$ are not conditionally independent given $Z$. This aligns with our model's conclusion. In contrast, CCIT and NNLSCIT produce higher $p$-values. However, their poor power in synthetic experiments, where the ground truth is known, suggests that these results may be less reliable.

\begin{figure*}
    \centering
    \includegraphics[scale=0.74]{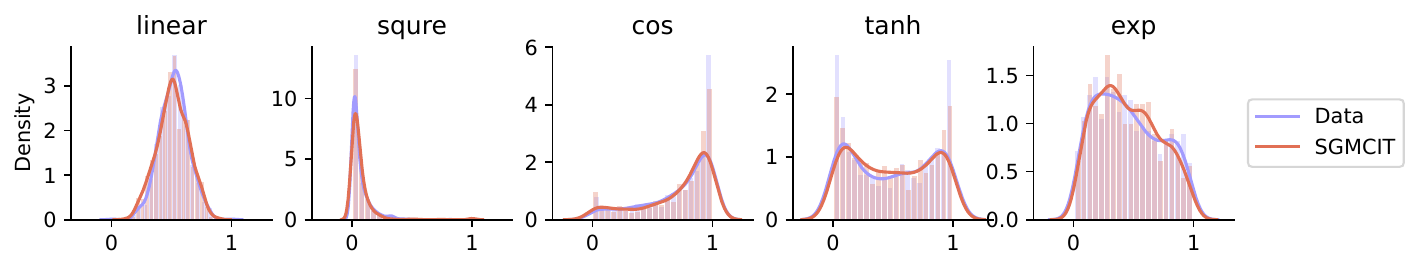}
    \caption{The visualization results of SGMCIT for the marginal distribution of $X$ under Case 4 of benchmark datasets.}
    \label{fig:vissocit}
\end{figure*}

\begin{figure*}
    \centering
    \includegraphics[scale=0.73]{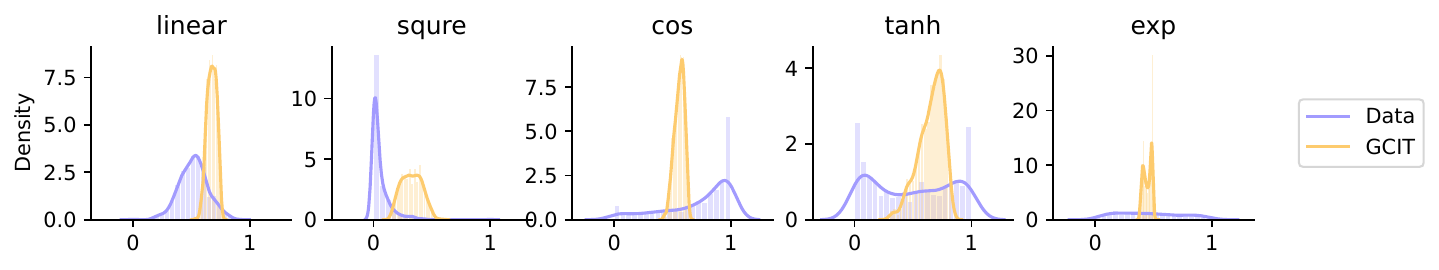}
    \caption{The visualization results of GCIT for the marginal distribution of $X$ under Case 4 of benchmark datasets.}
    \label{fig:visgcit}
\end{figure*}

\begin{figure*}
    \centering
    \includegraphics[scale=0.73]{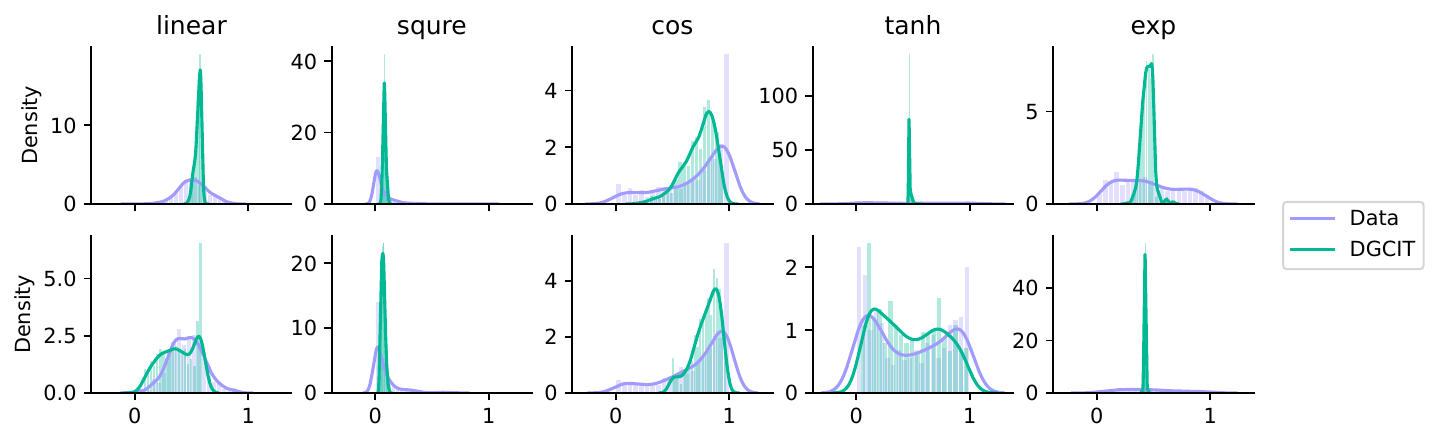}
    \caption{The visualization results of DGCIT for the marginal distribution of $X$ and $Y$ under Case 4. Top row: the results of $X$. Below row: the results of $Y$.} 
    \label{fig:visdgcit}
\end{figure*}

\begin{figure*}
    \centering
    \includegraphics[scale=0.92]{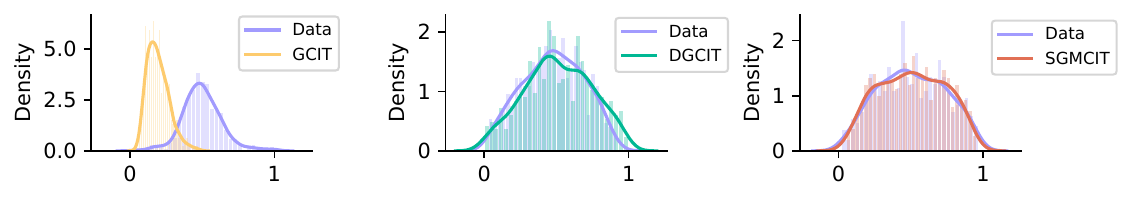}
    \caption{The visualization results of GCIT, DGCIT and SGMCIT for the marginal distribution of $X$ under high-dimensional confounder setting.}
    \label{fig:vissocit-hd}
\end{figure*}

\begin{figure*}[h]
\centering
\includegraphics[scale=0.51]{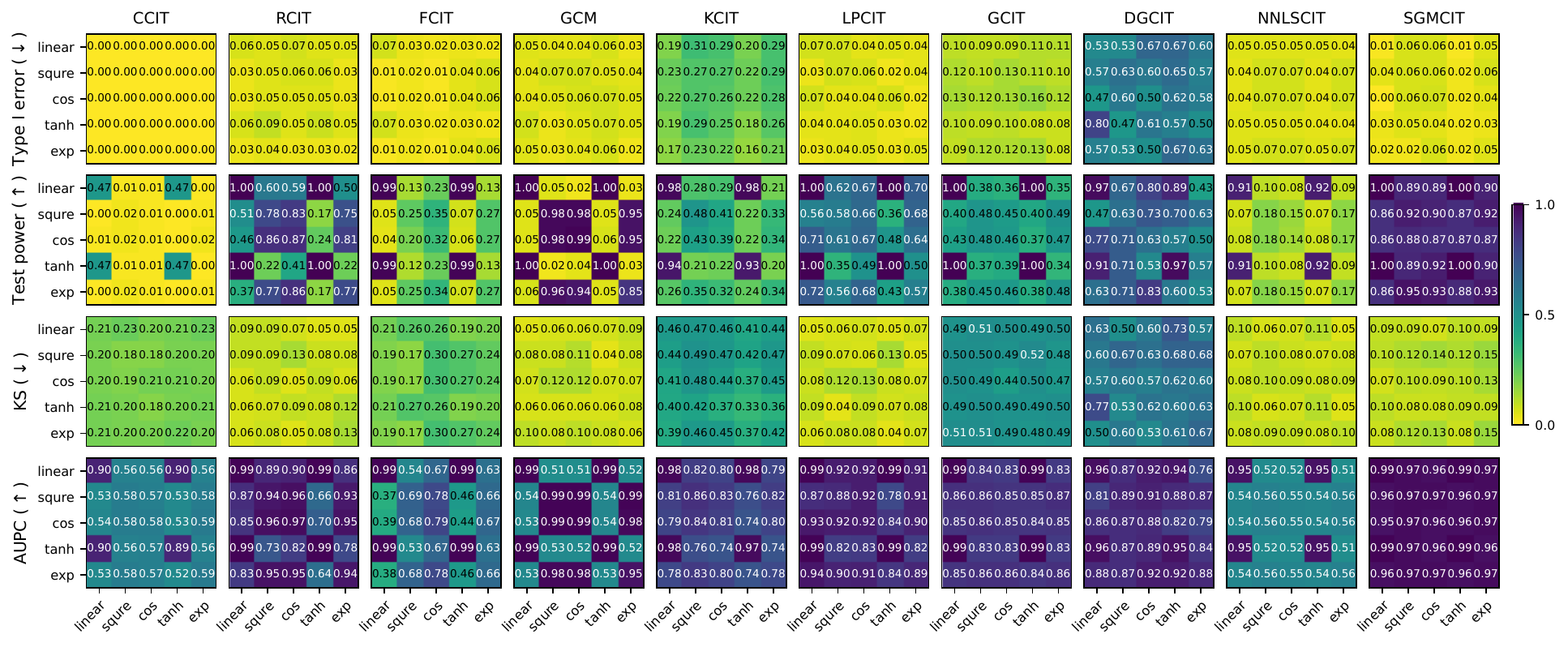}
\caption{
Additional results of conditional independence tests for Cases 1 and 2 on benchmark datasets.}
\label{fig:experiments_gaussian1}
\end{figure*}

\begin{figure*}[h]
\centering
\includegraphics[scale=0.51]{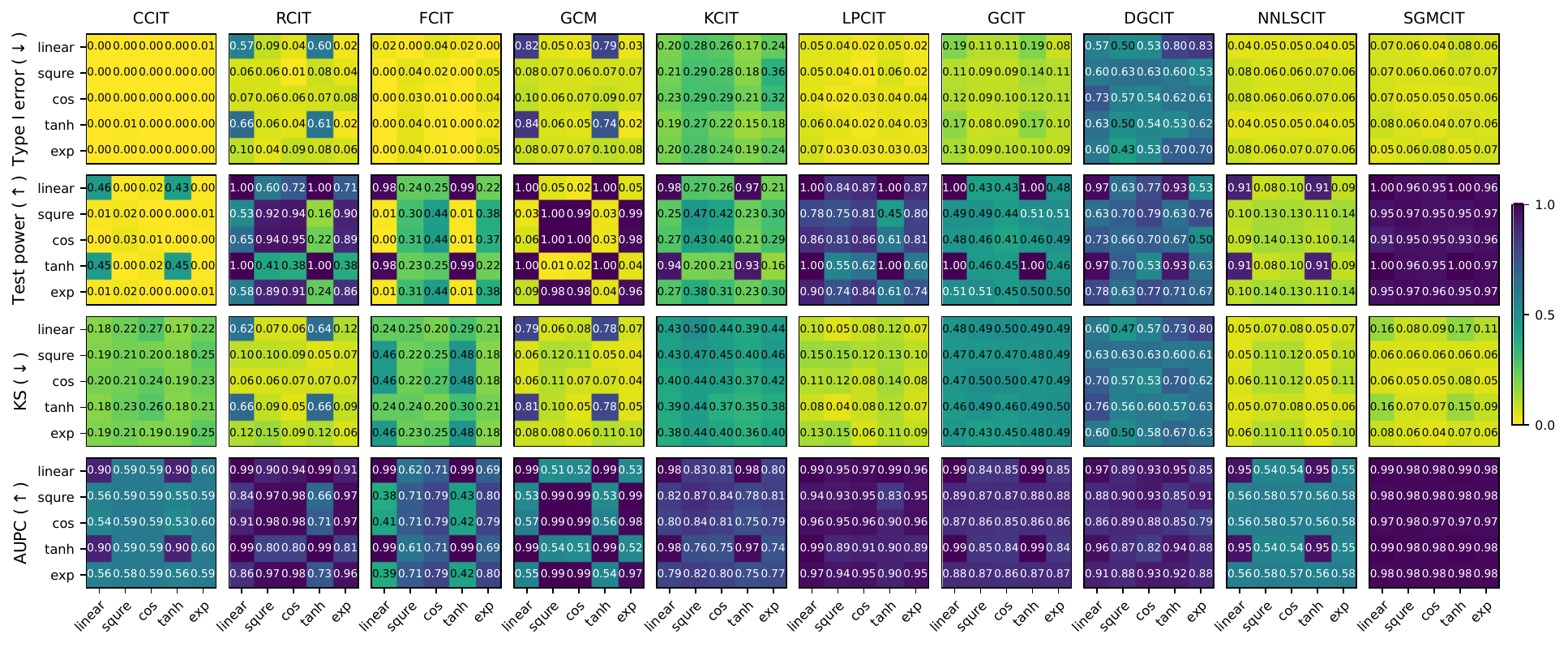}
\caption{
Additional results of conditional independence tests for Cases 3 and 4 on benchmark datasets. }
\label{fig:experiments_gaussian2}
\end{figure*}

\begin{figure*}[h]
\centering
\includegraphics[scale=0.28]{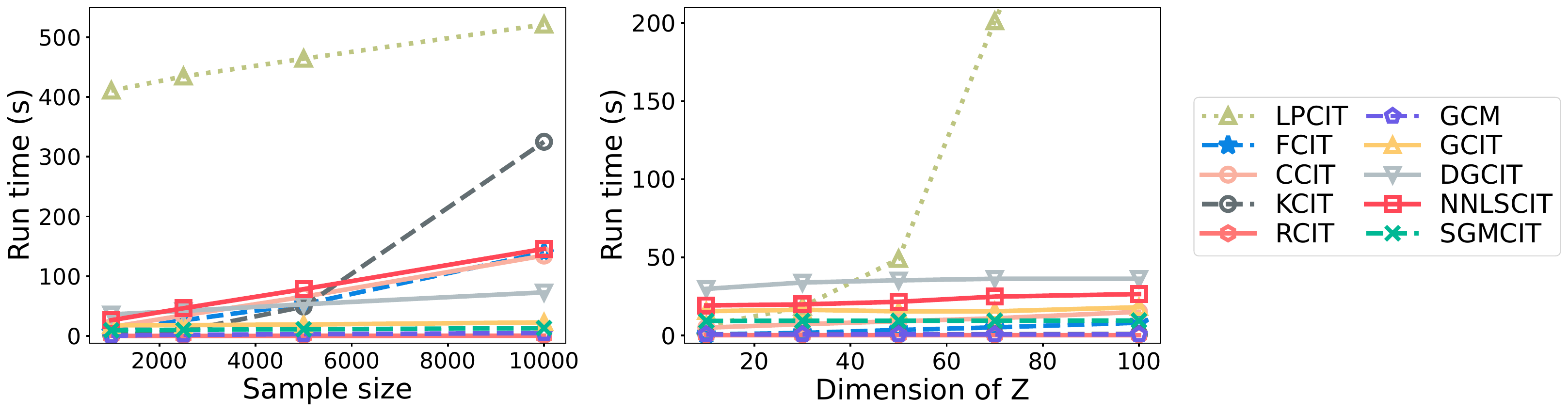}
\caption{Results of running time. \textbf{Left:} The results w.r.t. the sample size. \textbf{Right:} The results w.r.t. the dimension of $Z$.}
\label{fig:runtime}
\end{figure*}

\subsection{Comparison with CDCIT~\citep{yang2025conditional}}
As noted in Sec.~\ref{sec:post-acceptance remarks}, we provide a thorough comparison between our proposed method, SGMCIT, and the contemporaneous work CDCIT~\citep{yang2025conditional}. Unless otherwise specified, all experiments follow the same setup described in the main paper. This comparison includes evaluations on benchmark datasets, high-dimensional settings, and real-world data. Furthermore, we analyze each method's ability to estimate marginal distributions and compare their computational efficiency. CDCIT is implemented using default settings. 

\subsubsection{Results on benchmark datasets}
We begin by comparing SGMCIT and CDCIT on four standard benchmark cases. In Fig.~\ref{fig:cdcit-1-4}(a), we report results on Cases 1 and 2, while Fig.~\ref{fig:cdcit-1-4}(b) presents results for Cases 3 and 4. These cases encompass a range of functional forms and dependency strengths, designed to systematically evaluate both Type I error control and statistical power.
Also, we present CDCIT's visualization results under different transformation functions: linear, square, cos, tanh, and exp functions in Fig.~\ref{fig:cdcit_vis_syn}. 

\noindent\textbf{Analysis.} Our results show that CDCIT struggles in multiple aspects. It fails to control the Type I error in several settings, which undermines its reliability as a statistical test. More critically, its test power remains consistently low—even when the conditional dependency between variables is strong and should be easily detectable. This indicates a limited sensitivity to genuine signals in the data. Additionally, CDCIT often fails to accurately estimate the marginal distribution, particularly under nonlinear transformations such as $\cos$, $\tanh$, or $\exp$. Even in relatively simple cases—such as a linear transformation—its performance is at best moderate. These findings suggest that the conditional diffusion model employed by CDCIT has difficulty modeling complex or high-frequency distributions.

In contrast, SGMCIT consistently performs well across all benchmark cases. It not only achieves strong Type I error control, but also maintains high test power across a variety of functional forms. The generative component of SGMCIT produces accurate marginal estimates even in challenging scenarios, highlighting its effectiveness in modeling complex dependencies and distributions.

\subsubsection{Results on High-Dimensional setting}
We further examine performance under high-dimensional confounding setting ($d_z = 100$). The results are provided in Fig.~\ref{fig:cdcit-1-4} (c). 

\noindent\textbf{Analysis.} It can be observed that CDCIT performs poorly in high-dimensional settings, CDCIT exhibits consistently low test power, failing to detect dependencies on conditioned on $Z$ even when they are pronounced. For example, in the case where $Z$ is high-dimensional and the strength of dependence is strong (i.e.,  $b = 0.6$), CDCIT still yields unsatisfactory results. This suggests that CDCIT may suffer from an inherent inability to capture intricate conditional relationships in high-dimensional environments.

In contrast, our proposed method SGMCIT maintains excellent performance even under these challenging conditions. It achieves both strong Type I error control and high test power, demonstrating robust behavior regardless of the dimensionality of the input variables. These results highlight the advantage of our approach in practical applications where high-dimensional data is common and effective CI testing is critical.



\subsubsection{Visualization Results on Real Data}
To further assess CDCIT’s generative capacity in practical scenarios, we visualize its estimated marginal distributions on a real-world dataset. The results are presented in Fig.~\ref{fig:cdcit_vis_real}.

\noindent\textbf{Analysis.} To compensate for this, we provided CDCIT with a significantly larger sample size ($n=50,000$) during training on real-world datasets. However, even with this increased data availability, the estimated marginal distributions remained inaccurate, demonstrating that simply increasing the sample size is insufficient to overcome the method’s inherent limitations.

In contrast, SGMCIT achieves highly accurate marginal distribution estimation. This highlights not only its modeling capacity but also its efficiency in data usage.

\subsubsection{Running Time Evaluation}
We evaluate the computational efficiency of CDCIT compared to other baseline methods. The timing results are shown in Fig.~\ref{fig:cdcit_runtime}.

\noindent\textbf{Analysis.} CDCIT is computationally expensive. Even in favorable conditions with low sample size ($n = 1000$) and moderate dimensionality ($d_z = 10$), a single run of CDCIT takes nearly 40 seconds. This is significantly slower than most other methods evaluated. Such runtime requirements may render CDCIT impractical for large-scale or time-sensitive applications.

Our method, SGMCIT, on the other hand, is far more efficient. It achieves faster execution while maintaining high statistical performance, making it well-suited for real-world tasks where both accuracy and speed are essential.





\begin{figure*}[h]
    \centering
    \begin{subfigure}[b]{0.3\textwidth}
        \centering
        \includegraphics[scale=0.65]{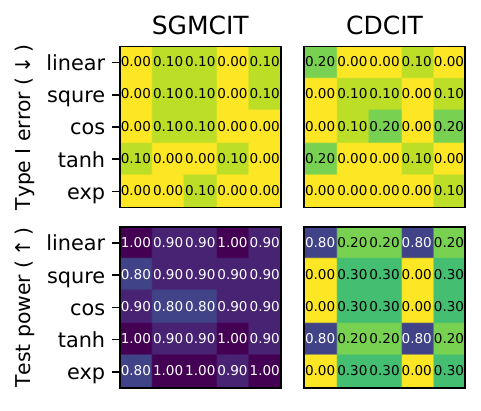}
        \caption{}
        \label{fig:cdcit1}
    \end{subfigure}
    \hfill
    \begin{subfigure}[b]{0.3\textwidth}
        \centering
        \includegraphics[scale=0.65]{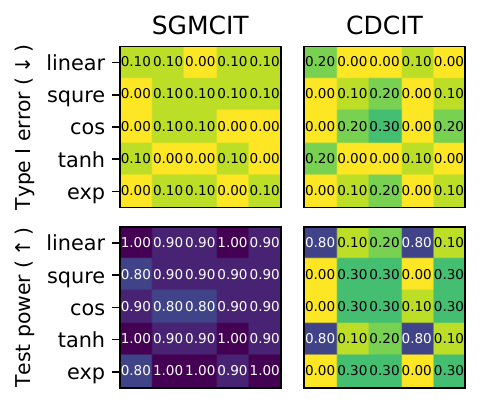}
        \caption{}
        \label{fig:cdcit2}
    \end{subfigure}
    \hfill
    \begin{subfigure}[b]{0.36\textwidth}
        \centering
        \includegraphics[scale=0.25]{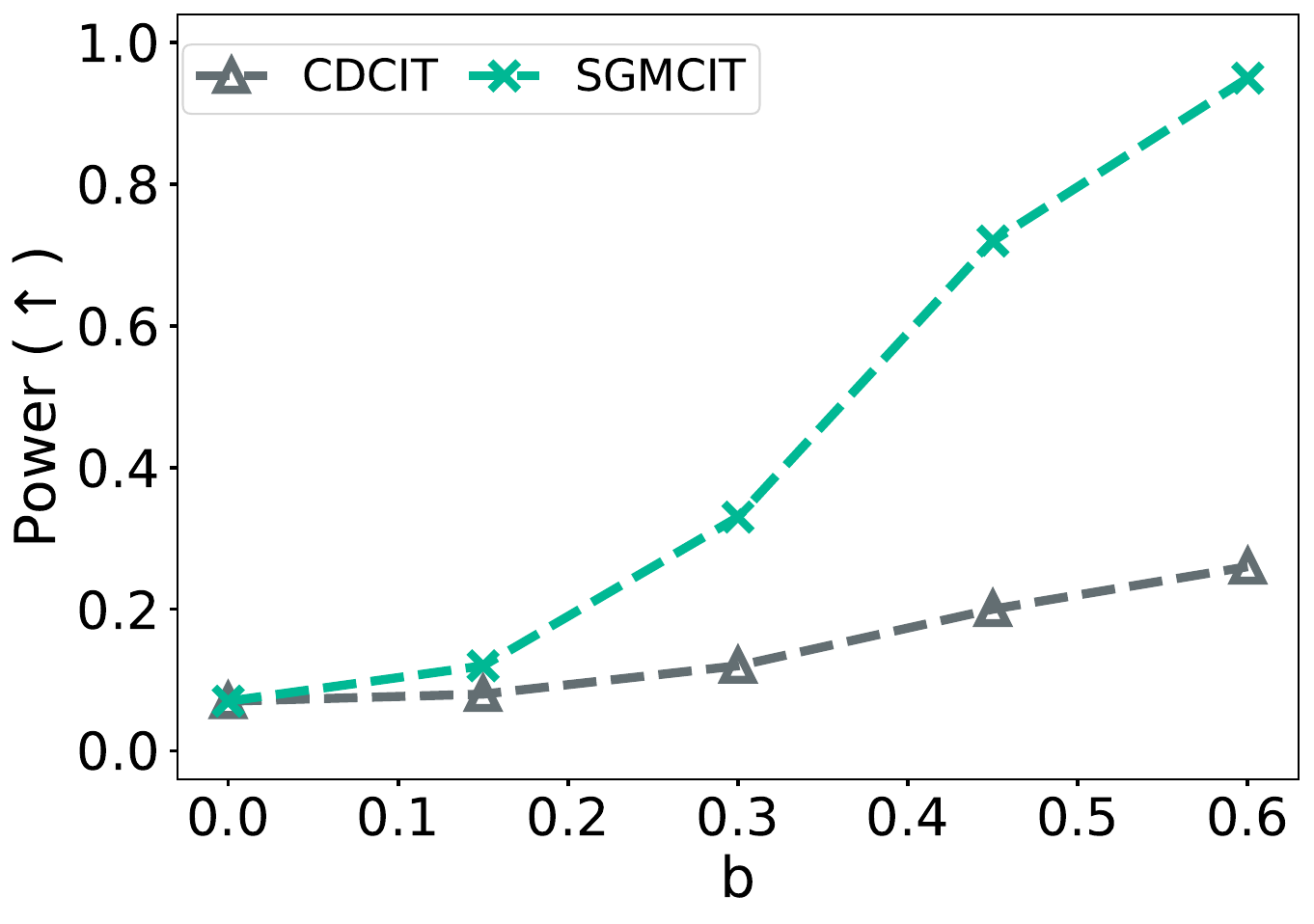}
        \caption{}
        \label{fig:cdcit_high}
    \end{subfigure}
    \caption{Additional results of CI tests. (a) Additional results of CI tests for Cases 1 and 2 on benchmark datasets. (b) Additional results of CI tests for Cases 3 and 4 on benchmark datasets. (c) Results in the high-dimensional confounder setting.}
    \label{fig:cdcit-1-4}
\end{figure*}




\begin{figure*}[h]
\centering
\includegraphics[scale=0.745]{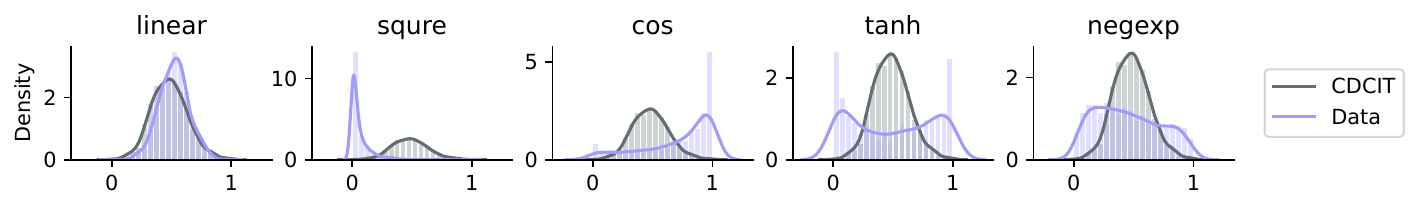}
\caption{The visualization results of SGMCIT and CDCIT for the  distribution of $X$ under Case 4 of benchmark datasets.}
\label{fig:cdcit_vis_syn}
\end{figure*}

\begin{figure*}[h]
\centering
\includegraphics[scale=1.03]{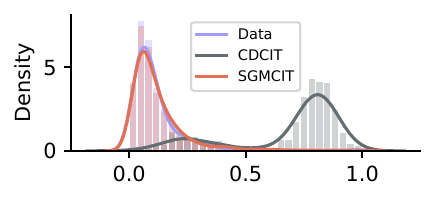}
\caption{Visualization results of SGMCIT and CDCIT on real data with 50000 sample size.}
\label{fig:cdcit_vis_real}
\end{figure*}

\begin{figure*}[h]
\centering
\includegraphics[scale=0.32]{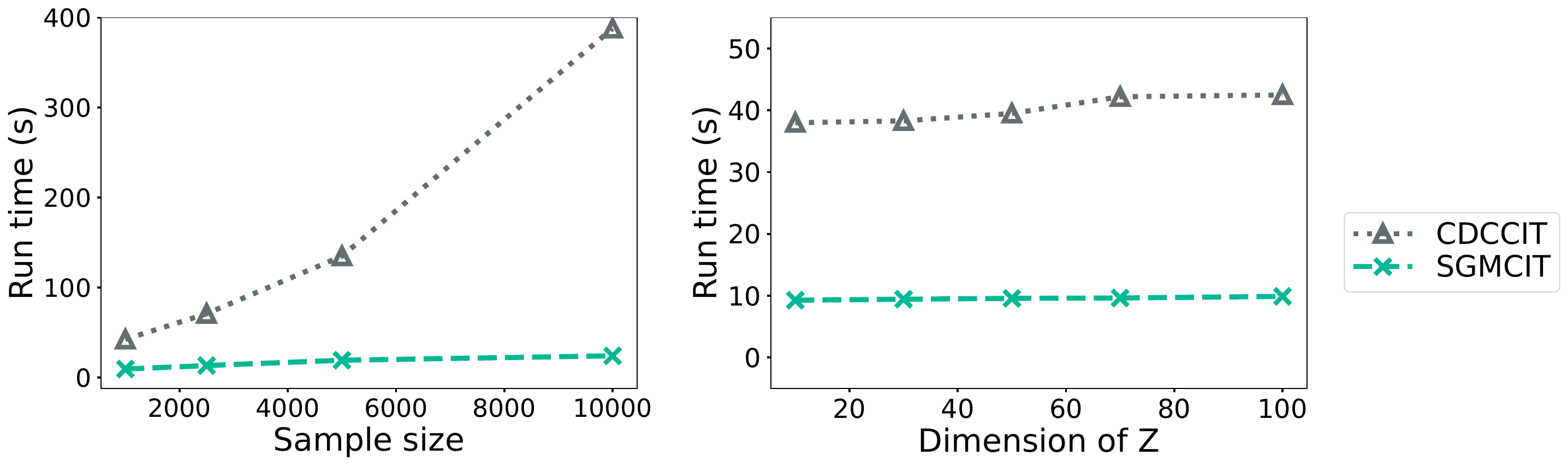}
\caption{Results of running time of CDCIT and SGMCIT (Ours). Left: The results w.r.t. the sample size. Right: The results w.r.t. the dimension of $Z$.}
\label{fig:cdcit_runtime}
\end{figure*}

\end{document}